\documentclass{article}

\usepackage{microtype}
\usepackage{graphicx}
\usepackage{subcaption}
\usepackage{placeins} 
\usepackage{booktabs} 
\usepackage[hyperfootnotes=false, colorlinks=true, linkcolor=purple, citecolor=purple, urlcolor=purple]{hyperref}

\usepackage[nohyperref,accepted]{icml2026}

\usepackage[utf8]{inputenc} 
\usepackage[T1]{fontenc}    
\usepackage{url}            
\usepackage{multirow}
\usepackage{amsfonts}       
\usepackage{nicefrac}       
\usepackage{makecell} 
\usepackage[inline]{enumitem}
\usepackage{algorithm}

\usepackage{algpseudocode}

\usepackage{amsmath}
\usepackage{amssymb}
\usepackage{mathtools}
\usepackage{amsthm}
\usepackage[capitalize,noabbrev]{cleveref}

\usepackage[table]{xcolor}
\definecolor{Gray}{gray}{0.95}
\definecolor{OODHighlight}{RGB}{232,245,233}

\usepackage{wrapfig}
\usepackage{tikz}
\usepackage{ifthen}
\usetikzlibrary{arrows.meta,calc,angles,backgrounds}
\usepackage{pgfplots}
\pgfplotsset{compat=1.18}

\usepackage{amsmath,amsfonts,bm}

\def\Figref#1{Figure~\ref{#1}}

\def\eqref#1{equation~\ref{#1}}
\def\Eqref#1{Equation~\ref{#1}}
\def\1{\bm{1}}

\DeclareMathAlphabet{\mathsfit}{\encodingdefault}{\sfdefault}{m}{sl}
\SetMathAlphabet{\mathsfit}{bold}{\encodingdefault}{\sfdefault}{bx}{n}

\newcommand{\R}{\mathbb{R}}

\DeclareMathOperator{\Tr}{Tr}

\definecolor{darkgreen}{rgb}{0,0.5,0}
\newcommand{\vct}[1]{\mathbf{#1}}
\newcommand{\mtx}[1]{\mathbf{#1}}
\newcommand{\F}{\text{F}}
\newcommand{\matnorm}[1]{\left\| #1 \right\|_{\F}}
\newcommand{\matnormtwo}[1]{\left\| #1 \right\|_{2}}
\newcommand{\mmcl}{\text{MMCL}}

\theoremstyle{plain}
\newtheorem{theorem}{Theorem}[section]

\theoremstyle{definition}
\newtheorem{definition}[theorem]{Definition}
\theoremstyle{remark}

\newtheorem{apptheorem}{Theorem}[section]
\newtheorem{applemma}[apptheorem]{Lemma}
\newtheorem{appproposition}[apptheorem]{Proposition}
\newtheorem{appcorollary}[apptheorem]{Corollary}
\theoremstyle{definition}
\newtheorem{appdefinition}[apptheorem]{Definition}
\theoremstyle{remark}
\newtheorem{appremark}[apptheorem]{Remark}

\icmltitlerunning{TRACER: Persistent Regularization for Robust Multimodal Finetuning}

\hypersetup{
    pdftitle={TRACER: Persistent Regularization for Robust Multimodal Finetuning},
    pdfauthor={Hesam Asadollahzadeh},
}

\begin{document}

\twocolumn[
  \icmltitle{TRACER: Persistent Regularization for Robust Multimodal Finetuning}

  \begin{icmlauthorlist}
    \icmlauthor{Hesam Asadollahzadeh}{melb}
    \icmlauthor{Feng Liu}{melb}
    \icmlauthor{Christopher Leckie}{melb}
    \icmlauthor{Sarah M. Erfani}{melb}
  \end{icmlauthorlist}

  \icmlaffiliation{melb}{School of Computing and Information Systems (CIS), Faculty of Engineering and IT (FEIT), University of Melbourne, Australia}

  \icmlcorrespondingauthor{Hesam Asadollahzadeh}{h.asadollahzadeh@unimelb.edu.au}
  %\icmlcorrespondingauthor{Sarah Erfani}{sarah.erfani@unimelb.edu.au}

  \icmlkeywords{Multimodal contrastive learning, finetuning, robustness}

  \vskip 0.3in
]

\printAffiliationsAndNotice{}

\begin{abstract}
Mainstream strategies for finetuning pretrained multimodal models often degrade out-of-distribution (OOD) robustness, a phenomenon known as catastrophic forgetting. 
In this paper, we develop a theoretical framework for multimodal contrastive finetuning, yielding closed-form solutions and a geometric decomposition for each strategy. This framework shows that self-distillation is more effective than other regularization approaches to retain the knowledge of the pretrained model.
Our analysis reveals a largely overlooked limitation: standard Exponential Moving Average (EMA) teachers, widely used in robust finetuning, suffer from collapse. To solve this, we prove that a Weighted Moving Average (WMA) teacher maintains a persistent regularizing force over finite horizons and yields bias-free convergence in the task subspace while preserving orthogonal knowledge. 
These insights motivate \textbf{TRACER} (\textbf{T}rajectory-\textbf{R}obust \textbf{A}nchoring for \textbf{C}ontrastive \textbf{E}ncoder \textbf{R}egularization), which combines contrastive learning with WMA-guided multi-perspective distillation. Extensive experiments on CLIP finetuning demonstrate consistent OOD accuracy and calibration gains across three backbone architectures, and comprehensive ablations confirm that TRACER is both principled and robust to hyperparameter choices. Code is available at \url{https://github.com/HesamAsad/TRACER}.
\end{abstract}

\section{Introduction}
\label{sec:intro}
Pretrained models such as CLIP~\citep{radford2021learning} have revolutionized machine learning through their remarkable zero-shot transfer and adaptive capabilities. These models derive their robustness from large-scale multimodal pretraining~\citep{fang2022data, xu2024demystifying}, enabling diverse applications from visual recognition~\citep{shen2022much, zhang2022point} to generative modeling~\citep{betker2023improving, pi2024ins} and serving as backbones for large multimodal models~\citep{alayrac2022flamingo, liu2023visual, zhu2024minigpt}. \vspace{-0.2em}

Despite these successes, adapting these pretrained models to downstream tasks via finetuning presents a fundamental challenge: while finetuning improves in-distribution (ID) performance, it often degrades out-of-distribution (OOD) robustness~\citep{radford2021learning}. This trade-off manifests as catastrophic forgetting of pretrained knowledge~\citep{wortsman2022robust}, where models sacrifice their general-purpose representations to optimize for task-specific patterns, potentially overfitting to spurious correlations in the finetuning data. \vspace{-0.2em}

Several empirical strategies have emerged to mitigate this trade-off. For example, LP-FT~\citep{kumar2022fine} addresses the problem of randomly initialized heads distorting pretrained features by first learning a linear probe on frozen features before full finetuning. FLYP~\citep{Goyal2023FLYP} extends this idea by reusing CLIP's pretrained text encoder as the classification head, maintaining consistency with the pretraining objective. Post-hoc methods like WiSE-FT~\citep{wortsman2022robust} and Model Stock~\citep{jang2024model} perform weight averaging between pretrained and finetuned models to recover lost robustness. Regularization-based approaches, including $L_2$--SP~\citep{li2018l2sp} and self-distillation with dynamic teachers~\citep{oh2024towards}, introduce constraints to preserve pretrained knowledge. However, most dynamic-teacher methods rely on an Exponential Moving Average (EMA), whose regularizing influence provably weakens as the teacher approaches the student. This limitation is often overlooked in the robust-finetuning literature on catastrophic forgetting, yet it is precisely the reason why the OOD robustness is most fragile. Crucially, robust finetuning depends on maintaining sufficient regularization strength throughout training; when that strength decays, OOD robustness erodes even as ID accuracy improves. This motivates \emph{trajectory regularization}: keeping the teacher anchored to the optimization path so that it continues to exert a meaningful restoring force.
 \vspace{-0.2em}

Moreover, despite the proliferation of these methods, a theoretical understanding of \emph{what} changes during contrastive finetuning and \emph{where} forgetting occurs remains elusive. 
We address this gap by developing a theoretical framework that reveals the geometric structure of how different finetuning strategies modify pretrained representations. We find that the linearized contrastive finetuning objective can be reformulated as a matrix least-squares problem through what we call the \emph{contrastive target matrix}. This reformulation enables closed-form solutions for common finetuning strategies, exposing their fundamentally different geometric behaviors. \vspace{-0.2em}

%Building on these geometric insights, we extend our analysis to dynamic self-distillation with a weighted moving average (WMA) teacher (see Figure~\ref{fig:main}). 
%Under the linearized setting, we prove that this adaptive anchor achieves bias-free convergence to the task-optimal solution within the finetuning subspace, while maintaining preservation in orthogonal directions.  \vspace{-0.2em}

Our theoretical insights lead to the design of \textbf{TRACER} (\textbf{T}rajectory-\textbf{R}obust \textbf{A}nchoring for \textbf{C}ontrastive \textbf{E}ncoder \textbf{R}egularization), a practical finetuning method that implements our geometric principles. As shown in Figure~\ref{fig:main}, TRACER combines contrastive learning with dynamic self-distillation, yielding strong results on ImageNet and its distribution shifts. Across multiple CLIP architectures, TRACER consistently improves the ID-OOD trade-off. We validate these findings through extensive ablation studies spanning distillation components, regularization strength, teacher update schedules, and kernel shape, demonstrating both the robustness of our method to hyperparameter choices and contributions of each design element. \vspace{-0.2em}

\begin{figure*}[t]
    \centering
    \includegraphics[width=.76\textwidth]{figures/figure_v6.jpg}
    \caption{\textbf{Overview of TRACER.} The base contrastive objective is combined with a dynamic self-distillation loss from a Weighted Moving Average (WMA) teacher to preserve orthogonal pretrained knowledge while adaptively mixing within the task subspace. $\theta^0_{\text{CLIP}}$ represents the initial pretrained CLIP model. $\theta^t$ denotes the student model at time $t$, with its image and text encoder ($\mathcal{E}_{\text{Image}}$ and $\mathcal{E}_{\text{Text}}$) being trained. The student receives gradient updates from $\mathcal{L}_{\text{MMCL}}$. The WMA teacher model $\psi^t$ is updated from the student's parameters. The teacher then provides a teaching signal $\mathcal{L}_{\text{SD-WMA}}$ to regularize the student. This interplay allows TRACER to adapt to new tasks while preserving pretrained knowledge. The complete training procedure is detailed in Algorithm~\ref{alg:tracer}.}
    \label{fig:main}
\end{figure*}

\vspace{-0.2em}

In summary, our work makes the following main contributions:
\begin{enumerate*}[label=\textbf{(\roman*)}]
\item We introduce the \emph{contrastive target matrix} reformulation of linearized contrastive loss, turning the objective into a least-squares problem and enabling closed-form solutions for standard finetuning and regularization strategies;
\item We derive a geometric decomposition that separates task-subspace mixing from orthogonal preservation, explaining when and where forgetting occurs and providing a principled basis for dynamic teachers;
\item We identify a largely overlooked limitation of standard teachers in robust finetuning, the inherent collapse of the EMA teacher--student learning signal, and show how \emph{trajectory regularization} with a weighted moving-average (WMA) teacher preserves a meaningful regularization signal over finite horizons, enabling bias-free task-subspace convergence; we instantiate these principles in \textbf{TRACER} with consistent OOD gains on CLIP finetuning, supported by comprehensive ablations across four axes (\S\ref{app_sec:ablations}).
\end{enumerate*}

\section{Related Work}
Contrastive language–image pretraining~\citep{radford2021learning,jia2021scaling,openclip,zhai2023sigmoid} enables strong zero-shot transfer but naive finetuning can harm OOD robustness~\citep{taori2020measuring, wortsman2022robust}. Robust finetuning explores weight interpolation/averaging~\citep{wortsman2022robust, wortsman2022model, jang2024model}, weight- or output-space regularization~\citep{li2018l2sp, li2017learning}, and contrastive variants aligned to text prompts or energies~\citep{Goyal2023FLYP, mao2022context, nam2024lipsum, shu2023clipood}. CaRot~\citep{oh2024towards} couples contrastive training with new regularizers to jointly improve OOD accuracy and calibration.

Self-distillation and dynamic teachers stabilize learning and preserve knowledge~\citep{hinton2015distilling, zhang2019your, mobahi2020self-distillation, laine2017temporal, tarvainen2017meanteacher}. Momentum/EMA teachers are effective yet can introduce persistent bias toward initialization, and their teacher--student gap collapses as training converges, reducing regularization exactly when OOD robustness is most vulnerable. This critical flaw is rarely made explicit in the robust finetuning literature. Our \emph{WMA} teacher generalizes EMA by weighting the entire trajectory on normalized time, enabling endpoint-aware curricula (e.g., arcsine/Beta kernels) and \emph{trajectory regularization} that preserves a meaningful teacher gap over finite horizons. As we prove, this yields bias-free task-subspace convergence. Our theory complements linearized analyses of supervised and contrastive learning~\citep{ji2021power,tian2022understanding, nakada2023understanding, xue2024understanding, hao2025understanding} and explains forgetting via an explicit geometric decomposition. An extended literature review appears in \S\ref{app_sec:related}.\vspace{-.5em}

\section{Theoretical Analysis}
\label{sec:theory}

To address the ID-OOD trade-off, where finetuning improves in-distribution accuracy at the cost of out-of-distribution robustness, we develop a theoretical framework that reveals the underlying dynamics of this phenomenon. \vspace{-.5em}

\subsection{Problem Setting and Preliminaries}
\label{sec:prelims}

\textbf{Finetuning task.} We consider robust finetuning of a pretrained vision--language model on paired image--text data $\{(\vct{x}_{I}^{i}, \vct{x}_{T}^{i})\}_{i=1}^n$ drawn from a downstream task. The goal is to adapt the model so that in-distribution accuracy improves on this task while pretrained, broadly transferable representations are preserved for out-of-distribution generalization. \vspace{-.5em}

\textbf{Linearized image/text encoders.} Following linearized analyses widely used in the theory of contrastive learning~\citep{ji2021power,tian2022understanding,nakada2023understanding,xue2024understanding}, we model the image and text encoders as linear projections, $g_I(\vct{x}) = \mtx{W}_I \vct{x}$ and $g_T(\vct{x}) = \mtx{W}_T \vct{x}$. The image encoder $\mtx{W}_I$ is adapted from a pretrained state $\mtx{W}_I^0$, while the text encoder $\mtx{W}_T$ is frozen at its pretrained state $\mtx{W}_T^0$. We collect the $n$ image and text features of a batch into matrices $\mtx{X}_I \in \mathbb{R}^{d_I \times n}$ and $\mtx{X}_T \in \mathbb{R}^{d_T \times n}$, where $d_I,d_T$ are the input feature dimensions and $p$ is the shared embedding dimension. \vspace{-.5em}

\textbf{Original MMCL objective.} The linearized multimodal contrastive learning (MMCL) loss is a standard analytic surrogate of the symmetric InfoNCE objective (the full derivation appears in \S\ref{app:sec:derivation_cl}):
\vspace{-.3em}
\begin{align}
\label{eq:mmcl_original_main}
\mathcal{L}_{\mmcl}(\mtx{W}_I,\mtx{W}_T)
=& \frac{1}{n(n-1)}\!\Big[\sum_{i\neq j}\! s_{ij} - (n{-}1)\!\sum_i\! s_{ii}\Big] \nonumber \\
&+ R(\mtx{W}_I,\mtx{W}_T),
\end{align}
where $s_{ij} = (\mtx{W}_I \vct{x}_I^i)^\top (\mtx{W}_T \vct{x}_T^j)$ is the image--text similarity, and $R(\cdot)$ is a cross-Frobenius term. This loss balances pulling matched pairs together against pushing unmatched pairs apart. As shown in~\S\ref{app:sec:reformulation_ls}, when $\mtx{W}_T=\mtx{W}_T^0$ is frozen, optimizing~\eqref{eq:mmcl_original_main} over $\mtx{W}_I$ is equivalent (up to constants and a data-dependent quadratic term that arises naturally in the optimization) to a matrix least-squares problem driven by the contrastive target matrix introduced below.\vspace{-.5em}

\textbf{Notation.} We use $\mtx{I}_n \in \mathbb{R}^{n\times n}$ for the identity matrix and $\mtx{J}_n = \mathbf{1}_n\mathbf{1}_n^\top \in \mathbb{R}^{n\times n}$ for the all-ones matrix, where $\mathbf{1}_n$ denotes the $n$-dimensional all-ones vector. The matrix $n\mtx{I}_n - \mtx{J}_n$ acts as a centered contrastive operator on text features: it preserves the matched (diagonal) directions while subtracting the batch-mean direction, yielding a per-column ``attract-paired/repel-others'' signal. We let $\mathcal{P}_I \coloneqq \mtx{X}_I (\mtx{X}_I^\top \mtx{X}_I)^{+} \mtx{X}_I^\top$ denote the orthogonal projector onto $\mathrm{range}(\mtx{X}_I)$, the \emph{task subspace} spanned by the finetuning image features, and $\mtx{I} - \mathcal{P}_I$ the projector onto its orthogonal complement. Throughout, $(\cdot)^+$ is the Moore--Penrose pseudoinverse and $\matnorm{\cdot}$ denotes the Frobenius norm.\vspace{-.5em}

\subsection{Loss Reformulation via the Contrastive Target Matrix}
\label{sec:reformulation}
We now rewrite the MMCL objective~\eqref{eq:mmcl_original_main} in a form that exposes closed-form solutions via a single algebraic construct.

\begin{definition}[Contrastive Target Matrix]
\label{def:contrastive_target_matrix_main}
Given the frozen text encoder $\mtx{W}_{T}^{0}$ and finetuning texts $\mtx{X}_T$, we define the \emph{contrastive target matrix} as
\[
\mtx{Y}_{\text{FT}} \;\coloneqq\; \mtx{W}_{T}^{0} \mtx{X}_T (n\mtx{I}_n - \mtx{J}_n) \;\in\; \mathbb{R}^{p\times n}.
\]
Each column $\vct{y}_i$ is constructed to attract the image embedding $\vct{x}_{I}^{i}$ towards its paired text $\vct{x}_{T}^{i}$ and repel it from the remaining texts in the batch (detailed in \S\ref{app:sec:derivation_cl}).
\end{definition}

\textbf{What is $\mtx{Y}_{\text{FT}}$, and why is it ``fixed''?} The matrix $\mtx{Y}_{\text{FT}}$ depends only on the frozen text encoder $\mtx{W}_T^0$ and the finetuning text features $\mtx{X}_T$; it does \emph{not} depend on the trainable image weights $\mtx{W}_I$. Consequently, $\mtx{Y}_{\text{FT}}$ stays \emph{fixed throughout finetuning} and plays exactly the role of a target in a supervised regression problem: it is the centered contrastive signal that $\mtx{W}_I \mtx{X}_I$ should match. In analogy with linear regression, $(\mtx{X}_I,\mtx{Y}_{\text{FT}})$ form (inputs, targets) and the linearized MMCL loss reduces to a matrix least-squares problem with $\mtx{Y}_{\text{FT}}$ as the regression target. The recurring factor $(n\mtx{I}_n - \mtx{J}_n)$ in $\mtx{Y}_{\text{FT}}$ implements the contrastive centering with $\mtx{I}_n$ and $\mtx{J}_n$ as defined above.

Using $\mtx{Y}_{\text{FT}}$, the linearized MMCL objective~\eqref{eq:mmcl_original_main} reduces (up to constants and a data-dependent quadratic term) to:
\vspace{-.5em}
\begin{equation}
\label{eq:lin_ls_objective_main}
\min_{\mtx{W}_I} \frac{1}{2} \matnorm{\mtx{W}_I \mtx{X}_I - \mtx{Y}_{\text{FT}}}^2.
\end{equation}
This formulation is crucial because it enables closed-form solutions for various finetuning strategies under gradient descent, offering insights into their behavior (see \S\ref{app:sec:reformulation_ls} for the formal trace-to-least-squares equivalence and \S\ref{app:sec:unified_framework_details} for the full derivations and proofs).
Using this reformulation, we analyze how different finetuning strategies mitigate forgetting by preserving or adapting pretrained knowledge, and reveal the geometric structure of updates.\vspace{-.5em}

\subsection{Closed-Form Solutions}

This subsection follows the setting of~\citet{yang2024memory} and provides closed-form solutions for various finetuning strategies following our reformulation (\Eqref{eq:lin_ls_objective_main}), revealing a \emph{geometric decomposition}: finetuning involves (i) preserving pretrained knowledge in directions \emph{orthogonal} to the finetuning data, and (ii) adapting or mixing knowledge \emph{within} the task-relevant subspace. We first present the closed-form solutions below.

\begin{theorem}[Unified Framework for Contrastive Finetuning Solutions]
\label{thm:main_solutions_main}
Let $\mathcal{P}_I \coloneqq \mtx{X}_I (\mtx{X}_I^\top \mtx{X}_I)^{+} \mtx{X}_I^\top$ be the orthogonal projector onto $\mathrm{range}(\mtx{X}_I)$. Gradient descent initialized at $\mtx{W}_{I}^{0}$ on the objective $\mathcal{L}(\mtx{W}_I) = \frac{1}{2} \matnorm{\mtx{W}_I \mtx{X}_I - \mtx{Y}_{\text{FT}}}^2 + \mathcal{R}(\mtx{W}_I)$ converges to the following solutions:
\begin{enumerate}[leftmargin=*, itemsep=0.1em, topsep=0.1em]
\item \textbf{Direct Finetuning} ($\mathcal{R}(\mtx{W}_I) = 0$):
\begin{equation*}
\mtx{W}_{\text{FT}} = \mtx{W}_{I}^{0} (\mtx{I} - \mathcal{P}_I) + \mtx{Y}_{\text{FT}} \mtx{X}_I^\top (\mtx{X}_I \mtx{X}_I^\top)^{+}
\end{equation*}

\item \textbf{$L_2$ Regularization} (L2-SP~\citep{li2018l2sp}), with $\mathcal{R}(\mtx{W}_I) = \frac{\lambda}{2} \matnorm{\mtx{W}_I - \mtx{W}_{I}^{0}}^2$:
\begin{equation*}
\mtx{W}_{L_2} = (\mtx{Y}_{\text{FT}}\mtx{X}_I^\top + \lambda \mtx{W}_{I}^{0})(\mtx{X}_I \mtx{X}_I^\top + \lambda \mtx{I})^{-1}
\end{equation*}

\item \textbf{Static Self-Distillation} (SD~\citep{furlanello2018ban_sd}), with $\mathcal{R}(\mtx{W}_I) = \frac{\lambda}{2} \matnorm{\mtx{W}_I \mtx{X}_I - \mtx{W}_{I}^{0} \mtx{X}_I}^2$:
\begin{equation*}
\mtx{W}_{SD} = \mtx{W}_{I}^{0}\Big(\mtx{I} - \frac{1}{1+\lambda}\mathcal{P}_I\Big) + \frac{1}{1+\lambda}\mtx{Y}_{\text{FT}} \mtx{X}_I^\top (\mtx{X}_I \mtx{X}_I^\top)^{+}
\end{equation*}
\end{enumerate}
Here, $^+$ denotes the Moore-Penrose pseudoinverse and $\lambda > 0$ is the regularization parameter.
\end{theorem}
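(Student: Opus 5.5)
The plan is to treat all three cases as instances of gradient descent on a convex quadratic in $\mtx{W}_I$. For each, I decompose the iterates along $\mathrm{range}(\mtx{X}_I)$ and its orthogonal complement, and identify the limit with a suitable step size (e.g.\ $\eta<2/\|\mtx{X}_I\mtx{X}_I^\top+\lambda\mtx{I}\|_2$, adjusted per case).

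The basic tool is the gradient of the data term, $\nabla = (\mtx{W}_I\mtx{X}_I-\mtx{Y}_{\text{FT}})\mtx{X}_I^\top$. Every update it produces has rows lying in $\mathrm{range}(\mtx{X}_I)$, so right-multiplying by $(\mtx{I}-\mathcal{P}_I)$ annihilates it. Hence the component $\mtx{W}_I(\mtx{I}-\mathcal{P}_I)$ is invariant under gradient descent whenever the regularizer's gradient also factors through $\mtx{X}_I^\top$.

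\emph{Direct finetuning.} There is no regularizer, so $\mtx{W}^t(\mtx{I}-\mathcal{P}_I)=\mtx{W}_I^0(\mtx{I}-\mathcal{P}_I)$ for all $t$. The remaining component $\mtx{W}^t\mathcal{P}_I$ evolves as GD on a problem that is strongly convex when restricted to $\mathrm{range}(\mtx{X}_I)$. To solve it, I write $\mtx{X}_I=\mtx{U}\mtx{\Sigma}\mtx{V}^\top$ and iterate the linear recursion $\mtx{E}^{t+1}=\mtx{E}^t(\mtx{I}-\eta\mtx{X}_I\mtx{X}_I^\top)$ for the error. The error contracts on $\mathrm{range}(\mtx{U})$, so $\mtx{W}^t\mathcal{P}_I$ converges to the minimum-norm least-squares solution $\mtx{Y}_{\text{FT}}\mtx{X}_I^+$. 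Finally, $\mtx{X}_I^+=\mtx{X}_I^\top(\mtx{X}_I\mtx{X}_I^\top)^+$, and adding the two components gives $\mtx{W}_{\text{FT}}$.

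\emph{L2-SP.} The objective is strongly convex with Hessian $\mtx{X}_I\mtx{X}_I^\top+\lambda\mtx{I}\succ 0$. GD with a small enough step therefore converges to the unique stationary point. Setting $(\mtx{W}\mtx{X}_I-\mtx{Y}_{\text{FT}})\mtx{X}_I^\top+\lambda(\mtx{W}-\mtx{W}_I^0)=0$ and solving gives the stated formula.

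\emph{Static SD.} The regularizer gradient is $\lambda(\mtx{W}-\mtx{W}_I^0)\mtx{X}_I\mtx{X}_I^\top$, which again factors through $\mtx{X}_I^\top$, so the orthogonal component stays at $\mtx{W}_I^0(\mtx{I}-\mathcal{P}_I)$. I then combine the two quadratics by completing the square:
\[
\tfrac12\|\mtx{W}\mtx{X}_I-\mtx{Y}_{\text{FT}}\|_{\F}^2+\tfrac{\lambda}{2}\|\mtx{W}\mtx{X}_I-\mtx{W}_I^0\mtx{X}_I\|_{\F}^2
=\tfrac{1+\lambda}{2}\Big\|\mtx{W}\mtx{X}_I-\tfrac{\mtx{Y}_{\text{FT}}+\lambda\mtx{W}_I^0\mtx{X}_I}{1+\lambda}\Big\|_{\F}^2+\text{const}.
\]
This is the direct-finetuning problem with target $\tilde{\mtx{Y}}=(\mtx{Y}_{\text{FT}}+\lambda\mtx{W}_I^0\mtx{X}_I)/(1+\lambda)$. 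Applying the case-1 result, the range component converges to $\tilde{\mtx{Y}}\mtx{X}_I^+$. Using $\mtx{W}_I^0\mtx{X}_I\mtx{X}_I^+=\mtx{W}_I^0\mathcal{P}_I$, this equals $\frac{\lambda}{1+\lambda}\mtx{W}_I^0\mathcal{P}_I+\frac{1}{1+\lambda}\mtx{Y}_{\text{FT}}\mtx{X}_I^\top(\mtx{X}_I\mtx{X}_I^\top)^+$. Adding $\mtx{W}_I^0(\mtx{I}-\mathcal{P}_I)$ and using $\mtx{I}-\mathcal{P}_I+\frac{\lambda}{1+\lambda}\mathcal{P}_I=\mtx{I}-\frac{1}{1+\lambda}\mathcal{P}_I$ yields $\mtx{W}_{SD}$.

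The main obstacle is the rank-deficient case (cases 1 and 3). There the minimizer is not unique, so convergence of GD to the specific limit must be argued rather than read off from first-order conditions. I would handle this through the SVD: the iteration is exactly linear, so its closed form $\mtx{W}^t=\mtx{W}^0(\mtx{I}-\eta\mtx{M})^t+\dots$ shows componentwise contraction on $\mathrm{range}(\mtx{X}_I)$ and exact invariance on its complement. The pseudoinverse identities $\mtx{X}_I^+=\mtx{X}_I^\top(\mtx{X}_I\mtx{X}_I^\top)^+$ and $\mtx{X}_I\mtx{X}_I^+=\mathcal{P}_I$ are then routine to verify from the same SVD.
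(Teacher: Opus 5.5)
Your proposal is correct and follows essentially the same route as the paper. The paper proves a general lemma that gradient descent on a PSD quadratic $\frac12\langle \mtx{W},\mathcal{Q}(\mtx{W})\rangle_F-\langle \mtx{P},\mtx{W}\rangle_F$ with $\mtx{P}\in\mathrm{Range}(\mathcal{Q})$ keeps the null-space part of the initialization fixed and drives the range part to $\mathcal{Q}^+(\mtx{P})$; this is exactly your range/orthogonal split with $\mathcal{Q}(\mtx{W})=\mtx{W}\mtx{X}_I\mtx{X}_I^\top$, and the paper handles $L_2$-SP by strong convexity and the first-order condition just as you do. The only cosmetic differences are that you check the contraction concretely via the SVD, and for static SD you complete the square to reduce to case 1 with target $(\mtx{Y}_{\text{FT}}+\lambda\mtx{W}_I^0\mtx{X}_I)/(1+\lambda)$, whereas the paper applies its lemma with $\mathcal{Q}_{SD}=(1+\lambda)\mathcal{Q}$, $\mtx{P}_{SD}=\mtx{Y}_{\text{FT}}\mtx{X}_I^\top+\lambda\mtx{W}_I^0\mtx{X}_I\mtx{X}_I^\top$ and $\mathcal{Q}_{SD}^+=\mathcal{Q}^+/(1+\lambda)$.
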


\begin{figure*}[t]
\begin{flushleft}
% ---------- Shared calculations for all 2D subfigures ----------
\pgfmathsetmacro{\theta}{30} % angle of finetuning subspace (deg)
\pgfmathsetmacro{\ux}{cos(\theta)}
\pgfmathsetmacro{\uy}{sin(\theta)}

% Pretrained weight vector W0 - positioned to minimize overlap
\pgfmathsetmacro{\Wox}{1.8}
\pgfmathsetmacro{\Woy}{2.2}

% Projection of W0 onto subspace span{u}
\pgfmathsetmacro{\sproj}{\Wox*\ux + \Woy*\uy}
\pgfmathsetmacro{\Wparx}{\sproj*\ux}
\pgfmathsetmacro{\Wpary}{\sproj*\uy}
\pgfmathsetmacro{\Wperpx}{\Wox - \Wparx}
\pgfmathsetmacro{\Wperpy}{\Woy - \Wpary}

% New task optimal solution W_FT^{\star} = Y_FT X_I^T (X_I X_I^T)^{-1}
% Position it away from W0's projection for clarity
\pgfmathsetmacro{\WFTstarAlpha}{-1.8} % negative to go opposite direction
\pgfmathsetmacro{\WFTstarx}{\WFTstarAlpha*\ux}
\pgfmathsetmacro{\WFTstary}{\WFTstarAlpha*\uy}

% Direct FT solution: W_FT = W0_perp + W_FT^{\star}
\pgfmathsetmacro{\Wftx}{\Wperpx + \WFTstarx}
\pgfmathsetmacro{\Wfty}{\Wperpy + \WFTstary}

% SD solution with λ=1: W_SD = W0_perp + (λ/(1+λ)) W0_par + (1/(1+λ)) W_FT^{\star}
\pgfmathsetmacro{\lam}{1.0}
\pgfmathsetmacro{\cpar}{\lam/(1+\lam)}  % = 0.5 for λ=1
\pgfmathsetmacro{\ctgt}{1/(1+\lam)}     % = 0.5 for λ=1
\pgfmathsetmacro{\Wsdx}{\Wperpx + \cpar*\Wparx + \ctgt*\WFTstarx}
\pgfmathsetmacro{\Wsdy}{\Wperpy + \cpar*\Wpary + \ctgt*\WFTstary}

% L2 solution - blend between W0 and new task
\pgfmathsetmacro{\alphaL}{0.2} % regularization effect
\pgfmathsetmacro{\WLtwox}{(1-\alphaL)*\Wftx + \alphaL*\Wox}
\pgfmathsetmacro{\WLtwoy}{(1-\alphaL)*\Wfty + \alphaL*\Woy}

% Enhanced colors - more vibrant and distinct
\definecolor{pretrained}{RGB}{0,120,255}      % Bright blue
\definecolor{newtask}{RGB}{0,200,100}         % Vibrant green
\definecolor{ft}{RGB}{255,45,45}              % Bright red
\definecolor{l2}{RGB}{180,0,220}              % Vivid purple
\definecolor{sd}{RGB}{255,150,0}              % Bright orange
\definecolor{ptgray}{RGB}{100,100,100}        % Darker gray for better contrast
\definecolor{subspacecolor}{RGB}{0,180,120}   % Bright teal

\tikzset{
  vec/.style={-Latex, line width=1.8pt},
  thinvec/.style={-Latex, line width=1.2pt},
  constructvec/.style={-Latex, line width=0.9pt, opacity=0.7},
  dashedline/.style={line width=0.7pt, dashed, opacity=0.6},
  projection/.style={line width=0.8pt, dotted, opacity=0.7},
  subspace/.style={line width=2.2pt, subspacecolor!40},
}

% Helper to draw the shared background
\newcommand{\BaseTwoD}[1]{ % Parameter: highlight specific method
  % Fixed, identical bounding box for all subfigures (ensures alignment/scaling)
  \path[use as bounding box] (-3.8,-3.8) rectangle (3.8,3.8);

  % Grid (subtle) - draw first
  \foreach \i in {-3,-2,-1,1,2,3} {
    \draw[gray!15, line width=0.3pt] (\i,-3.8) -- (\i,3.8);
    \draw[gray!15, line width=0.3pt] (-3.8,\i) -- (3.8,\i);
  }
  
  % Axes
  \draw[->, gray!60, line width=0.6pt] (-3.8,0) -- (3.8,0);
  \draw[->, gray!60, line width=0.6pt] (0,-3.8) -- (0,3.8);
  
  % Finetuning subspace line span(X_I^T) - draw as background
  \draw[subspace] (-3.5*\ux,-3.5*\uy) -- (3.5*\ux,3.5*\uy);
  
  % Pretrained data distribution (circular, isotropic)
  \fill[ptgray!15] (0,0) circle [radius=2.6];
  \draw[ptgray!40, line width=0.6pt] (0,0) circle [radius=2.6];
  
  % Finetuning data distribution (elongated along subspace)
  \begin{scope}[rotate=\theta]
    \fill[subspacecolor!20] (0,0) ellipse (3.0 and 0.6);
    \draw[subspacecolor!60, line width=0.7pt] (0,0) ellipse (3.0 and 0.6);
  \end{scope}
  
  % Origin
  \fill[black] (0,0) circle [radius=0.06];
  
  % === Key vectors ===
  
  % 1. Pretrained weight W_I^0
  \draw[vec, pretrained] (0,0) -- (\Wox,\Woy);
  
  % 2. Projection onto subspace (with perpendicular indicator)
  \draw[projection, pretrained!70] (\Wox,\Woy) -- (\Wparx,\Wpary);
  \draw[projection, pretrained!70] (\Wox,\Woy) -- (\Wperpx,\Wperpy);
  \draw[thinvec, pretrained!70, dashed] (0,0) -- (\Wparx,\Wpary);
  
  % 3. Perpendicular component (draw as construction line)
  \draw[constructvec, pretrained!70, densely dashed] (0,0) -- (\Wperpx,\Wperpy);
  
  % 4. New task optimal solution W_FT^{\star}
  \draw[vec, newtask] (0,0) -- (\WFTstarx,\WFTstary);
  
  % Method-specific construction lines
  \ifthenelse{\equal{#1}{ft}}{
    % Show construction: perpendicular + new task
    \draw[ft!40, line width=0.6pt, dashed] (\Wperpx,\Wperpy) -- (\Wftx,\Wfty);
    \draw[newtask!60, line width=0.6pt, dashed] (\Wftx,\Wfty) -- (\WFTstarx,\WFTstary);
  }{}
  
  \ifthenelse{\equal{#1}{sd}}{
    % Show convex combination in subspace
    \pgfmathsetmacro{\convx}{\cpar*\Wparx + \ctgt*\WFTstarx}
    \pgfmathsetmacro{\convy}{\cpar*\Wpary + \ctgt*\WFTstary}
    % Draw thin lines to show the mixing
    \draw[sd!50, line width=0.6pt, densely dotted] (0,0) -- (\Wparx,\Wpary);
    \draw[sd!50, line width=0.6pt, densely dotted] (0,0) -- (\WFTstarx,\WFTstary);
    % Mark the convex combination point
    \draw[-Latex, sd!80, line width=0.6pt, dashed] (0, 0) -- (\convx,\convy);
    % Construction line from perpendicular component
    \draw[sd!40, line width=0.6pt, dashed] (\Wperpx,\Wperpy) -- (\Wsdx,\Wsdy);
    \draw[sd!40, line width=0.6pt, dashed] (\convx,\convy) -- (\Wsdx,\Wsdy);
  }{}
  
  \ifthenelse{\equal{#1}{l2}}{
    % Show that L2 doesn't preserve structure cleanly
    \draw[l2!35, line width=0.6pt, densely dotted] (0,0) -- (\WLtwox,\WLtwoy);
    \draw[l2!35, line width=0.6pt, densely dotted] (\Wox,\Woy) -- (\WLtwox,\WLtwoy);
    \draw[l2!35, line width=0.6pt, densely dotted] (\Wftx,\Wfty) -- (\WLtwox,\WLtwoy);
  }{}
  
  % === Labels (draw last to be on top) ===
  
  % Axis labels 
  \node[gray!80, font=\large\bfseries, inner sep=2pt] at (3.5,-0.3) {$w_1$};
  \node[gray!80, font=\large\bfseries, inner sep=2pt] at (-0.3,3.5) {$w_2$};
  
  % Subspace label 
  \node[subspacecolor, font=\normalsize\bfseries, inner sep=3pt, rounded corners=2pt] 
    at (3.2*\ux,2.85*\uy+0.4) {$\mathrm{span}(\mtx{X}_I^\top)$};
  
  % Data distribution labels 
  \node[ptgray, font=\footnotesize\bfseries, inner sep=2pt, rounded corners=2pt] 
    at (2.0,-1.5) {$\mathcal{D}_{\text{pretrain}}$};
  \node[subspacecolor, font=\footnotesize\bfseries, inner sep=2pt, rounded corners=2pt] 
    at (-2.7,-1) {$\mathcal{D}_{\text{FT}}$};
  
  % Origin label 
  \node[black, font=\footnotesize\bfseries, inner sep=1pt] at (-0.15,-0.3) {$\mtx{0}$};
  
  % Vector labels  AND BOLDER
  \node[pretrained, font=\large\bfseries, inner sep=2pt, rounded corners=1pt] 
    at (\Wox+0.2,\Woy+0.2) {$\mtx{W}_I^0$};
  \node[newtask, font=\large\bfseries, inner sep=2pt, rounded corners=1pt] 
    at (\WFTstarx-0.2,\WFTstary-0.2) {$\mtx{W}_{\text{FT}}^{\star}$};
  
  % Component labels 
  \node[pretrained, font=\footnotesize\bfseries, inner sep=2pt, rounded corners=1pt] 
    at (\Wparx+0.4,\Wpary-0.2) {$\mtx{W}_I^0\mathcal{P}_I$};
  \node[pretrained, font=\footnotesize\bfseries, inner sep=2pt, rounded corners=1pt] 
    at (\Wperpx-0.65,\Wperpy+0.2) {$\mtx{W}_I^0(\mtx{I}-\mathcal{P}_I)$};
  
  % Method-specific label for SD 
  \ifthenelse{\equal{#1}{sd}}{
    \pgfmathsetmacro{\convx}{\cpar*\Wparx + \ctgt*\WFTstarx}
    \pgfmathsetmacro{\convy}{\cpar*\Wpary + \ctgt*\WFTstary}
    \node[sd, font=\footnotesize\bfseries, inner sep=2pt, rounded corners=1pt] 
      at (\convx+0.1,\convy-0.3) {$\frac{1}{2}\text{-mix}$};
  }{}
}

% ============ Three subfigures with proper alignment ============

\begin{subfigure}[b]{0.325\textwidth}
\resizebox{\linewidth}{!}{%
\begin{tikzpicture}[baseline=(xaxis.base)]
  % Baseline node at the x-axis for alignment across subfigures
  \node[inner sep=0pt, outer sep=0pt] (xaxis) at (0,0) {};
  \BaseTwoD{ft}
  % Direct Finetuning result
  \draw[vec, ft, line width=2pt] (0,0) -- (\Wftx,\Wfty);
  \node[ft, font=\large\bfseries, inner sep=2pt, rounded corners=1pt] 
    at (\Wftx+0.1,\Wfty-0.3) {$\mtx{W}_{\text{FT}}$};
  
  % Annotation for the formula 
  \node[ft, font=\footnotesize\bfseries, align=center, inner sep=3pt, rounded corners=2pt] at (0,-3.2) {
    $\mtx{W}_{\text{FT}} = \underbrace{\mtx{W}_I^0(\mtx{I}-\mathcal{P}_I)}_{\text{preserve}} + \underbrace{\mtx{W}_{\text{FT}}^{\star}}_{\text{replace}}$
  };
\end{tikzpicture}
}
\caption{\textbf{Direct FT:} Preserves orthogonal, replaces parallel component.}
\label{fig:2d-ft}
\end{subfigure}
\hfill
\begin{subfigure}[b]{0.325\textwidth}
\resizebox{\linewidth}{!}{%
\begin{tikzpicture}[baseline=(xaxis.base)]
  % Baseline node at the x-axis for alignment across subfigures
  \node[inner sep=0pt, outer sep=0pt] (xaxis) at (0,0) {};
  \BaseTwoD{l2}
  
  % L2 regularization result
  \draw[vec, l2, line width=2pt] (0,0) -- (\WLtwox,\WLtwoy);
  \node[l2, font=\large\bfseries, inner sep=2pt, rounded corners=1pt] 
    at (\WLtwox-0.4,\WLtwoy+0.1) {$\mtx{W}_{L_2}$};

  % Show Direct FT as ghost for reference
  \draw[vec, ft!40, line width=1.4pt] (0,0) -- (\Wftx,\Wfty);
  \node[ft!40, font=\large\bfseries, inner sep=2pt, rounded corners=1pt] 
    at (\Wftx+0.1,\Wfty-0.3) {$\mtx{W}_{\text{FT}}$};
  
  % Annotation 
  \node[l2, font=\footnotesize\bfseries, align=center, inner sep=3pt, rounded corners=2pt] at (0,-3.2) {
    Interpolates between $\mtx{W}_I^0$ and $\mtx{W}_{\text{FT}}$\\[-1pt]
    \small ridge shrinkage affects all directions
  };
\end{tikzpicture}
}
\caption{\textbf{L2-SP:} Blends all directions, no structure preservation.}
\label{fig:2d-l2}
\end{subfigure}
\hfill
\begin{subfigure}[b]{0.325\textwidth}
\resizebox{\linewidth}{!}{%
\begin{tikzpicture}[baseline=(xaxis.base)]
  % Baseline node at the x-axis for alignment across subfigures
  \node[inner sep=0pt, outer sep=0pt] (xaxis) at (0,0) {};
  \BaseTwoD{sd}
  % Self-distillation result
  \draw[vec, sd, line width=2pt] (0,0) -- (\Wsdx,\Wsdy);
  \node[sd, font=\large\bfseries, inner sep=2pt, rounded corners=1pt] 
    at (\Wsdx+0.2,\Wsdy+0.2) {$\mtx{W}_{\text{SD}}$};
  
  % Annotation 
  \node[sd, font=\footnotesize\bfseries, align=center, inner sep=3pt, rounded corners=2pt] at (0,-3.2) {
    Preserve $\perp$ + convex mix in $\parallel$\\
    $\lambda=1 \Rightarrow\frac{\lambda}{1{+}\lambda}=\frac{1}{1{+}\lambda}=\frac{1}{2}\text{-mix}$
  };
\end{tikzpicture}
}
\caption{\textbf{SD:} Preserves orthogonal, mixes parallel components.}
\label{fig:2d-sd}
\end{subfigure}

\caption{\textbf{Geometric interpretation of finetuning strategies in 2D weight space.} The green line represents $\mathrm{span}(\mtx{X}_I^\top)$, the subspace where finetuning data concentrates. Starting from pretrained weights $\mtx{W}_I^0$ (blue), each method combines the orthogonal component $\mtx{W}_I^0(\mtx{I}-\mathcal{P}_I)$ and the new task solution $\mtx{W}_{\text{FT}}^{\star} = \mtx{Y}_{\text{FT}} \mtx{X}_I^\top (\mtx{X}_I \mtx{X}_I^\top)^{+}$ (green) differently: \textbf{(a)} Direct FT preserves the orthogonal component and replaces the parallel component entirely; \textbf{(b)} L2-SP creates a global blend without clean structural decomposition; \textbf{(c)} Static Self-Distillation preserves the orthogonal component and forms a convex combination of the parallel components (shown with $\lambda=1$ giving equal weighting).}
\label{fig:2D-geometry}
\end{flushleft}
\vspace{-1.5em}
\end{figure*}

\textbf{Geometric Interpretation:} As visualized in Figure~\ref{fig:2D-geometry}, Direct Finetuning discards pretrained knowledge within the finetuning data subspace, replacing it with the new task solution, while preserving orthogonal components. $L_2$ regularization shrinks the entire solution towards the pretrained weights, leading to a complex, non-surgical blend. 

\textbf{Self-Distillation achieves a nuanced trade-off}: it preserves pretrained knowledge in the subspace orthogonal to the finetuning data ($\mtx{W}_{I}^{0} (\mtx{I} - \mathcal{P}_I)$), and within the task-relevant subspace, it computes a convex combination of the projected pretrained weights and the optimal solution for the new task. This enables control over knowledge retention and adaptation (further details in \S\ref{app:sec:geometric_interpretation}).

\subsection{Dynamic Self-Distillation with a WMA Teacher}
\label{sec:wma_main}

\textbf{Why static SD is biased.} Restricted to the task subspace, the static-SD solution in Theorem~\ref{thm:main_solutions_main} is the convex combination $\mtx{W}_{SD}\mathcal{P}_I = \tfrac{\lambda}{1+\lambda}\mtx{W}_I^0 \mathcal{P}_I + \tfrac{1}{1+\lambda}\mtx{W}^\star_{\text{FT}}$, where $\mtx{W}^\star_{\text{FT}} = \mtx{Y}_{\text{FT}}\mtx{X}_I^\top(\mtx{X}_I\mtx{X}_I^\top)^+$ is the minimum-norm task solution. Because the anchor is \emph{fixed} at $\mtx{W}_I^0$, for any finite $\lambda>0$ the solution stays \emph{offset} from $\mtx{W}^\star_{\text{FT}}$ in the task subspace by exactly $\tfrac{\lambda}{1+\lambda}(\mtx{W}_I^0\mathcal{P}_I - \mtx{W}^\star_{\text{FT}})$, an anchor bias that cannot be removed by tuning $\lambda$ alone (smaller $\lambda$ reduces the bias but weakens orthogonal preservation; larger $\lambda$ does the reverse). Geometrically (Figure~\ref{fig:2D-geometry}), static SD lies on the segment between $\mtx{W}_I^0\mathcal{P}_I$ and $\mtx{W}^\star_{\text{FT}}$ rather than reaching the task optimum.

\textbf{Intuition: static SD vs.\ EMA vs.\ WMA.} The teacher choice controls \emph{where} the regularizer is anchored along the trajectory: \emph{Static SD} fixes the anchor at the start ($\mtx{W}_I^0$) and is biased toward initialization; an \emph{EMA teacher} stays near the current student state, so its regularizing signal vanishes as training converges (precisely when OOD robustness is most fragile); a \emph{WMA teacher} stays near a \emph{trajectory-weighted consensus} of the optimization path: it remembers the start but adapts over time, and with a suitable kernel retains meaningful mass at \emph{both} ends of training. WMA thus addresses two limitations at once: (i) the static-SD anchor bias and (ii) the EMA collapse of the teacher--student gap. We instantiate it as a \emph{dynamic} teacher that evolves as a WMA of the student's trajectory, illustrated at the system level in Figure~\ref{fig:main}, with the formal definition below.

\begin{definition}[WMA Teacher]
\label{def:wma_main}
The WMA teacher $\mtx{W}_{\text{Teacher}}^{t}$ averages student states $\mtx{W}_I^{k}$ over time $k=0,\dots,t$, weighted by a kernel $\kappa(\tau_k)$ on normalized time $\tau_k=\frac{k+c_1}{T+c_2}\in(0,1)$. The offsets $c_1,c_2>0$ keep $\tau_k$ strictly inside $(0,1)$ (avoiding $\tau_0=0$ and $\tau_T=1$), which is required for kernels such as $\mathrm{Beta}(0.5,0.5)$ that diverge at the endpoints; we use $c_1=0.5,\ c_2=1$ in all experiments. The online recursion is:
\vspace{-1em}
\begin{align*}
\omega_t \;&=\; \frac{\kappa(\tau_t)}{\sum_{j=0}^{t}\kappa(\tau_j)}, \\
\mtx{W}_{\text{Teacher}}^{t} \;&=\; (1-\omega_t)\,\mtx{W}_{\text{Teacher}}^{t-1} + \omega_t\,\mtx{W}_I^{t}, \\
\mtx{W}_{\text{Teacher}}^{0} \;&=\; \mtx{W}_I^{0}.
\end{align*}
\end{definition}
\vspace{-1em}
\textbf{Persistent Regularization and Bias-Free Convergence:} Unlike an Exponential Moving Average (EMA) teacher, whose regularizing influence vanishes as it converges to the student, the WMA teacher (especially with a U-shaped kernel like Beta(0.5,0.5)) maintains a persistent regularizing force over finite training horizons (see \S\ref{app:sec:non_vanishing_regularizer}). This force continuously pulls the student towards its robust pretrained initialization. We prove that this adaptive anchoring achieves bias-free convergence to the task-optimal solution within the finetuning subspace:

\begin{theorem}[Bias-Free Convergence in the Task Subspace]
\label{thm:sd_wma_convergence_main}
Let $\mtx{W}^\star_{\text{FT}} = \mtx{Y}_{\text{FT}}\mtx{X}_I^\top(\mtx{X}_I\mtx{X}_I^\top)^+$ denote the \emph{minimum-norm task solution} (i.e., the minimum-Frobenius-norm minimizer of~\eqref{eq:lin_ls_objective_main}). The WMA teacher's projection onto the task subspace converges to $\mtx{W}^\star_{\text{FT}}\mathcal{P}_I$, and consequently, the student's projection also converges to $\mtx{W}^\star_{\text{FT}}\mathcal{P}_I$.
\end{theorem}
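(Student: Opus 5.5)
The plan is to write the coupled student--teacher dynamics explicitly and restrict them to the task subspace by right-multiplying by $\mathcal{P}_I$. The student runs gradient descent with step size $\eta$ on
\[
\tfrac12\matnorm{\mtx{W}_I\mtx{X}_I-\mtx{Y}_{\text{FT}}}^2+\tfrac{\lambda}{2}\matnorm{\mtx{W}_I\mtx{X}_I-\mtx{W}_{\text{Teacher}}^{t}\mtx{X}_I}^2 .
\]
Write $\mtx{A}=\mtx{X}_I\mtx{X}_I^\top$ and $\mtx{B}=\mtx{Y}_{\text{FT}}\mtx{X}_I^\top$. The update is then
\[
\mtx{W}_I^{t+1}=\mtx{W}_I^{t}-\eta\big[(1+\lambda)\mtx{W}_I^{t}\mtx{A}-\mtx{B}-\lambda\mtx{W}_{\text{Teacher}}^{t}\mtx{A}\big].
\]
Since $\mtx{A}=\mathcal{P}_I\mtx{A}=\mtx{A}\mathcal{P}_I$ and $\mtx{B}=\mtx{B}\mathcal{P}_I$, every increment lies in the row space determined by $\mathcal{P}_I$. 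Hence $\mtx{W}_I^t(\mtx{I}-\mathcal{P}_I)=\mtx{W}_I^0(\mtx{I}-\mathcal{P}_I)$ for all $t$, which is the orthogonal preservation. Because the WMA recursion is a convex combination of student iterates, the teacher also satisfies $\mtx{W}_{\text{Teacher}}^t(\mtx{I}-\mathcal{P}_I)=\mtx{W}_I^0(\mtx{I}-\mathcal{P}_I)$.

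Next I introduce the errors in the task subspace:
\[
\mtx{E}_t=(\mtx{W}_I^t-\mtx{W}^\star_{\text{FT}})\mathcal{P}_I,\qquad \mtx{F}_t=(\mtx{W}_{\text{Teacher}}^t-\mtx{W}^\star_{\text{FT}})\mathcal{P}_I .
\]
Using $\mtx{W}^\star_{\text{FT}}\mtx{A}=\mtx{B}$ (a pseudoinverse identity, since $\mtx{Y}_{\text{FT}}\mtx{X}_I^\top\mtx{A}^+\mtx{A}=\mtx{Y}_{\text{FT}}\mtx{X}_I^\top$ because $\mtx{X}_I^\top\mathcal{P}_I=\mtx{X}_I^\top$), the student update becomes linear and homogeneous, $\mtx{E}_{t+1}=\mtx{E}_t(\mtx{I}-\eta(1+\lambda)\mtx{A})+\eta\lambda\mtx{F}_t\mtx{A}$. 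The teacher update is $\mtx{F}_{t}=(1-\omega_t)\mtx{F}_{t-1}+\omega_t\mtx{E}_t$. The crucial point is that $\mtx{W}^\star_{\text{FT}}\mathcal{P}_I$ is a joint fixed point: at $\mtx{E}=\mtx{F}=0$ both updates vanish. This is exactly where static SD fails, because its anchor $\mtx{F}\equiv\mtx{F}_0\neq0$ is frozen.

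I then diagonalize $\mtx{A}$ on $\mathrm{range}(\mtx{X}_I)$ with eigenpairs $(\mu_i,\vct{u}_i)$, $\mu_i>0$. Each column direction gives a scalar two-dimensional system
\[
e_{t+1}=(1-\eta(1+\lambda)\mu)e_t+\eta\lambda\mu f_t,\qquad f_{t+1}=(1-\omega_{t+1})f_t+\omega_{t+1}e_{t+1} .
\]
I take $\eta<1/((1+\lambda)\mu_{\max})$ so that all coefficients lie in $[0,1]$. Then $|e_{t+1}|\le \max(|e_t|,|f_t|)$ and $|f_{t+1}|\le\max(|f_t|,|e_{t+1}|)$, so $M_t=\max(|e_t|,|f_t|)$ is nonincreasing. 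Moreover, $|e_{t+1}|\le (1-\eta\mu)M_t$, i.e. the student contracts by a uniform factor toward a point with at most the teacher's magnitude.

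\textbf{Main obstacle.} The hardest step is showing $M_t\to0$, because the teacher only moves by $\omega_t$ per step. The argument goes as follows.
\begin{itemize}
\item One step gives $|f_{t+1}|\le(1-\omega_{t+1}\eta\mu)M_t$. Iterating, as long as $|e|$ stays below the contracted level, yields $M_t\le\prod_{s\le t}(1-c\,\omega_s)M_0$.
\item It therefore suffices that $\sum_t\omega_t=\infty$. For a kernel bounded above and below on compact subsets of $(0,1)$, and indexing so that the horizon grows ($T\to\infty$, or running indefinitely), $\omega_t\approx 1/t$ and the harmonic series diverges. I would state this divergence as the explicit assumption on $\kappa$, in the standard Robbins--Monro style.
\item If tracking the max is too loose, I would fall back on a Lyapunov function $V_t=e_t^2+\lambda f_t^2$ (or a weighted variant) and show $V_{t+1}\le(1-c\,\omega_{t+1})V_t$.
\end{itemize}

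Finally, $\mtx{F}_t\to0$ gives the teacher's projection converging to $\mtx{W}^\star_{\text{FT}}\mathcal{P}_I$. Feeding this into the student recursion, which is contractive with a vanishing forcing term, gives $\mtx{E}_t\to0$, so the student's projection converges as well.
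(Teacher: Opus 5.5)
Your proof is correct, but it analyzes a different dynamical model from the paper's.

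\textbf{The paper's route.} The paper (Definition~\ref{app:def:sd-wma_objective} and Proposition~\ref{app:prop:SD_WMA_step}) treats each step $t$ as an \emph{exact} minimization of the static-SD quadratic. The anchor is $\mtx{W}_{\text{Teacher}}^{t-1}$ and the minimization is initialized at $\mtx{W}_I^{t-1}$. Reusing the static-SD closed form, the student becomes a deterministic function of the teacher, $(\mtx{W}_I^{t}-\mtx{W}^\star_{\text{FT}})\mathcal{P}_I=\tfrac{\lambda}{1+\lambda}\mtx{E}^{t-1}$. The teacher error then satisfies the exact identity $\mtx{E}^{t}=(1-\tfrac{\omega_t}{1+\lambda})\mtx{E}^{t-1}$. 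So $\sum_t\omega_t=\infty$ finishes the argument with no spectral decomposition, no step-size condition, and a data-independent rate.

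\textbf{Your route.} You analyze the single-loop dynamics, with one gradient step per teacher update, which is closer to Algorithm~\ref{alg:tracer}. This forces you to handle a genuinely coupled two-dimensional system per eigendirection of $\mtx{X}_I\mtx{X}_I^\top$. Your max-norm argument does this cleanly. The price is a conservative contraction factor $1-\eta\mu_i\omega_t$ that depends on the spectrum and on $\eta$. Once the student equilibrates between teacher updates, the true rate is the paper's $\omega_t/(1+\lambda)$.

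Your framework actually contains the paper's. If the inner problem is solved exactly, your student update becomes $e_t=\tfrac{\lambda}{1+\lambda}f_{t-1}$, and the same max-norm bound returns exactly the factor $1-\tfrac{\omega_t}{1+\lambda}$.

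\textbf{Two small points.} First, the caveat ``as long as $|e|$ stays below the contracted level'' is unnecessary. Since $1-\eta\mu\le 1-\eta\mu\,\omega_{t+1}$, you get $M_{t+1}\le(1-\eta\mu\,\omega_{t+1})M_t$ unconditionally, so the Lyapunov fallback is not needed. Second, making $\sum_t\omega_t=\infty$ an explicit hypothesis is the right call and matches part (iii) of Theorem~\ref{app:thm:sd_wma_convergence}. For any fixed normalized-time horizon the sum is finite, so the statement is only asymptotic.
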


We use the name \emph{minimum-norm task solution} for $\mtx{W}^\star_{\text{FT}}$ to distinguish it from the Direct FT solution in Theorem~\ref{thm:main_solutions_main}, which additionally retains the orthogonal component $\mtx{W}_I^0(\mtx{I}-\mathcal{P}_I)$: $\mtx{W}^\star_{\text{FT}}$ is the task-subspace target a preservation-aware method should reach \emph{within} $\mathrm{range}(\mtx{X}_I)$. The theorem shows that dynamic teachers eliminate the static anchor bias while preserving orthogonal knowledge (formal proof in \S\ref{app:sec:wma_theory}).

% This theoretical framework motivates \textbf{TRACER} (Trajectory-Robust Anchoring for Contrastive Encoder Regularization), our practical method that combines InfoNCE with dynamic self-distillation using a WMA teacher, achieving state-of-the-art robustness and calibration empirically (\S\ref{sec:experiments}).

\section{Methodology: TRACER}
\label{sec:approach}

% \subsection{TRACER: Trajectory-Robust Anchoring for Contrastive Encoder Regularization}
Guided by these geometric insights and the proven benefits of a WMA teacher in the above section, we propose \textbf{TRACER},  %(Trajectory-Robust Anchoring for Contrastive Encoder Regularization), 
a novel finetuning method for multimodal models. TRACER combines the standard symmetric InfoNCE loss with dynamic self-distillation guided by a WMA teacher, as illustrated in Figure~\ref{fig:main}.

The total training objective for TRACER is:
\begin{equation}
\mathcal{L}_{\text{TRACER}} = \mathcal{L}_{\text{MMCL}} + \lambda_{\text{SD}}\,\mathcal{L}_{\text{SD-WMA}}.
\end{equation}
\paragraph{Multi-Modal Contrastive Loss ($\mathcal{L}_{\text{MMCL}}$):} This is the primary finetuning loss, typically a symmetric InfoNCE objective. In our implementation, we also include a cross-Frobenius regularizer to prevent embedding collapse (standard CLIP finetuning recipe). This component drives the student model to learn new task-specific alignments.\vspace{-1em}

\begin{figure*}[t!]
    \centering
    \includegraphics[width=0.9\textwidth]{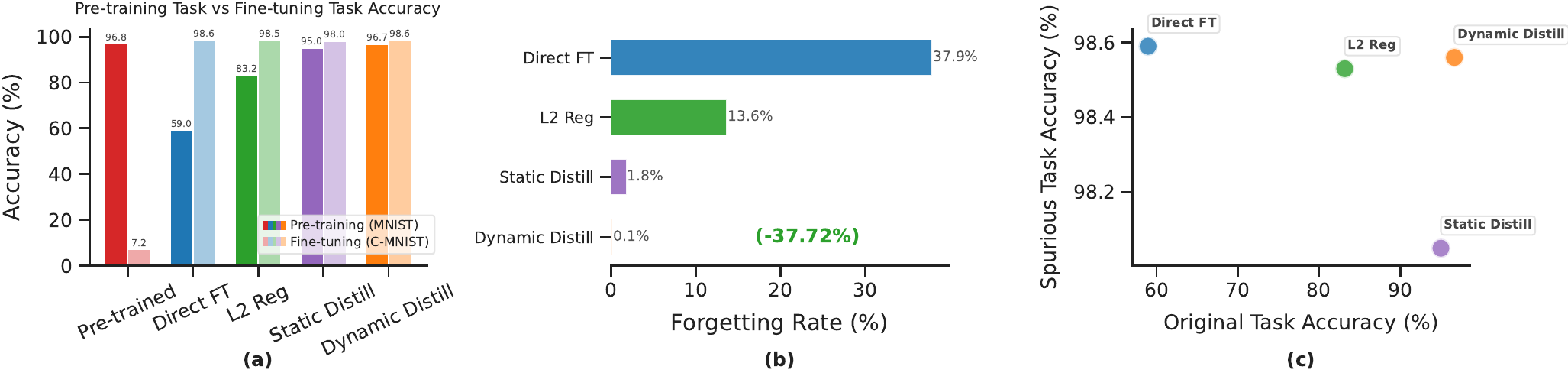}
    \caption{\textbf{Toy Experiment.} 
    We compare a pretrained model against four finetuning methods on a finetuning task. 
    \textbf{(a)} Performance on the original MNIST and new Colored MNIST task. All finetuning methods successfully learn the new task. Direct FT and L2 Reg suffer severe performance degradation (catastrophic forgetting).
    \textbf{(b)} 
    Catastrophic forgetting rate, quantified as the percentage drop in accuracy on the original task. Self-distillation methods are more effective at preserving knowledge. 
    \textbf{(c)}
    The performance trade-off between the original task ($x$-axis) and the spurious task ($y$-axis). Distillation methods achieve a much better trade-off, retaining high original task accuracy while mastering the new task.}
    \label{fig:toy_results}
    \vspace{-.7em}
\end{figure*}

\paragraph{Dynamic Self-Distillation Loss ($\mathcal{L}_{\text{SD-WMA}}$):} This is the core mechanism for robust knowledge preservation and adaptive mixing. It ensures the student retains generalizable features by learning from an evolving teacher model. As detailed in \S\ref{app:sec:contrastive_distillation_loss}, $\mathcal{L}_{\text{SD-WMA}}$ is a composite distillation loss that includes several perspectives:
    \begin{enumerate*}[label=\textit{(\roman*)}]
        \item \textit{Feature Distillation (FD):} Directly aligns student and teacher embeddings.
        \item \textit{Contrastive Relational Distillation (CRD):} Matches batch-wise similarity distributions between student and teacher.
        \item \textit{Interactive Contrastive Learning (ICL):} Encourages student-teacher cross-modal alignment.
        \item \textit{Cross Knowledge Distillation (Cross-KD):} Aligns cross-modal logits to transfer relational structure.
    \end{enumerate*}
This multi-perspective approach operationalizes the theoretical insight of preserving distinct aspects of pretrained knowledge.
\vspace{-0.2em}

\textbf{Weighted Moving Average (WMA) Teacher:} The teacher model is a central component of TRACER. Unlike an EMA teacher, which gradually collapses onto the student, our WMA teacher is a weighted average of the \emph{entire} student trajectory up to time $t$, using a carefully chosen weighting kernel (e.g., a Beta kernel with $\beta_1=\beta_2=0.5$ as shown in Figure~\ref{fig:main} and detailed in \S\ref{app:sec:wma_vs_ema}). This ensures that early pretrained states retain a non-trivial contribution to the teacher over finite horizons, aligning with our trajectory-regularization motivation. This persistent regularization provides a continuous restoring force, preventing the student from over-specializing on spurious correlations in the finetuning data.%, and leads to bias-free convergence in the task subspace while maintaining robust orthogonal knowledge.

\section{Experiments}
\label{sec:empirical}
This section evaluates TRACER on ImageNet and natural distribution shifts, including a controlled toy study to validate theoretical predictions. We conduct comprehensive ablations across four axes (distillation components, strength, teacher update frequency, and Beta-kernel shape); extended protocols are in \S\ref{app_sec:ablations}, loss definitions in \S\ref{app:sec:contrastive_distillation_loss}, and teacher details in \S\ref{app:sec:wma_vs_ema}.

\subsection{Synthetic Experiment}
\label{sec:synth_empirical_validation}
We design a controlled toy experiment with spurious correlations \citep{arjovsky2019invariant} to validate our theory. The behaviors of Direct Finetuning, L2 Regularization, and Self-Distillation in a non-linear architecture align with our closed-form predictions.

\subsubsection{Experimental Setup}
\vspace{-0.5em}

\paragraph{Datasets.} We use two variants of the MNIST dataset \citep{lecun1998gradient, deng2012mnist}.
(i) Original Pretraining Task: We create a multimodal version of MNIST, where each grayscale digit image is paired with a simple text description (e.g., an image of a `7' is paired with the text ``the digit 7''). The model is pretrained on this dataset to learn robust, general-purpose representations for digit recognition.
(ii) Finetuning Task: We create a dataset to introduce a spurious correlation. Images of digits 0-4 are colored red with 95\% probability, while digits 5-9 are colored blue with 95\% probability. This setup forces the model during finetuning to learn an easy-to-exploit but non-causal feature (color) to solve the new task, creating a direct conflict with the original digit recognition knowledge. \vspace{-1em}

\paragraph{Model Architecture.} We employ lightweight, non-linear models to show that our theory extends beyond the linear case. The architecture consists of a `LightViT' \citep{dosovitskiy2020vit} image encoder and a `LightTextTransformer' \citep{vaswani2017attention} text encoder. Both models project their inputs into a shared 128-dimensional embedding space, where a standard InfoNCE contrastive loss is applied during pretraining. \vspace{-1em}

\paragraph{Pretraining.} The model is pretrained on MNIST using a contrastive objective for 10 epochs, achieving high accuracy on digit recognition but poor performance on the color-based task. \vspace{-1em}

\paragraph{Finetuning Strategies.} We finetune the pretrained image encoder on the ColoredMNIST~\citep{arjovsky2019invariant, zhang2022cnc} task for 10 epochs while keeping the text encoder frozen, mirroring our theoretical setup. We compare the following methods:
\begin{enumerate*}[label=(\roman*)]
    \item \textit{Pretrained:} The baseline model without any finetuning.
    \item \textit{Direct Finetuning:} The image encoder is finetuned on the new task, as analyzed in \S\ref{app:sec:unified_framework_details}.
    \item \textit{$L_2$ Regularization:} We add a penalty term $\frac{\lambda}{2} \matnorm{\mtx{W}_I - \mtx{W}_{I}^{0}}^2$ to the finetuning loss, corresponding to our analysis of $L_2$ regularization.
    \item \textit{Static Distillation:} We use the initial pretrained model $\mtx{W}_{I}^{0}$ as a fixed teacher and add a distillation loss term to the finetuning objective, as analyzed for $\mtx{W}_{SD}$.
    \item \textit{Dynamic Distillation:} We use a teacher model whose weights are a moving average of the student's weights, corresponding to our analysis of the WMA teacher.
\end{enumerate*} \vspace{-0.5em}

\subsubsection{Results and Discussion}

\paragraph{Analysis of Forgetting.}
As predicted by our theory, Direct Finetuning exhibits severe catastrophic forgetting. It achieves near-perfect accuracy (98.5\%) on the new color-based task by overwriting its original knowledge, causing its performance on the original MNIST test set to degrade from 96.8\% to 59.0\%, a forgetting rate of 37.9\%. $L_2$ Regularization offers an improvement, but still forgets 13.6\% of the original task's performance. In contrast, both Static and Dynamic Distillation demonstrate resilience to forgetting. They also master the new task but retain a larger portion of the original knowledge, with forgetting rates of only 1.8\% and 0.1\%, respectively. This result empirically supports our geometric interpretation: by interpolating between old and new knowledge within the task-relevant subspace while preserving knowledge in the orthogonal subspace, self-distillation methods achieve a better balance.\vspace{-1em}

\paragraph{The Performance Trade-off.}
The scatter plot in~\Figref{fig:toy_results}(c) visualizes this trade-off: distillation methods achieve a better Pareto frontier, with Dynamic Distillation finding a slightly better solution than its static counterpart, aligning with our theoretical analysis.

\subsection{Main ImageNet Results and Ablations}
\label{sec:experiments}
We report the main ImageNet results and ablations below; detailed per-backbone tables follow.
\begin{table}[ht!]
    \centering
    \caption{\textbf{ImageNet accuracy.} We report accuracy ($\uparrow$) on ImageNet and its distribution shift variants by finetuning CLIP ViT-B/16 with six methods. All values are averaged over three seeds with standard deviations shown as subscripts. In each column, the best value is bold and the second-best is underlined.}
    \vspace{0.3em}
    \LARGE
    \setlength{\tabcolsep}{3.5pt}
    \resizebox{\columnwidth}{!}{
    \begin{tabular}{l||ccccccc}
    \toprule
    Method & IN & IN-V2 & IN-R & IN-A & IN-S & ObjNet & Avg. \\
    \midrule
    \texttt{ZS} & 68.33 & 61.93 & 77.71 & 49.95 & 48.26 & 54.17 & 58.39 \\ 
    \midrule
    \texttt{LP-FT} & 82.44$_{\pm 0.08}$ & 72.74$_{\pm 0.18}$ & 72.81$_{\pm 0.22}$ & 49.28$_{\pm 0.31}$ & 50.31$_{\pm 0.15}$ & 54.42$_{\pm 0.14}$ & 59.91$_{\pm 0.18}$ \\ 
    \texttt{FLYP} & 82.72$_{\pm 0.09}$ & 72.76$_{\pm 0.21}$ & 71.32$_{\pm 0.25}$ & 48.49$_{\pm 0.35}$ & 49.87$_{\pm 0.18}$ & 54.83$_{\pm 0.16}$ & 59.45$_{\pm 0.20}$ \\ 
    \texttt{Lipsum-FT} & \textbf{83.32$_{\pm 0.05}$} & 73.57$_{\pm 0.12}$ & 75.93$_{\pm 0.14}$ & 49.87$_{\pm 0.28}$ & 51.43$_{\pm 0.12}$ & 54.35$_{\pm 0.11}$ & 61.03$_{\pm 0.14}$ \\ 
    \texttt{CaRot} & \underline{83.15$_{\pm 0.06}$} & \textbf{74.08$_{\pm 0.14}$} & \underline{77.74$_{\pm 0.16}$} & \underline{51.57$_{\pm 0.24}$} & \underline{52.68$_{\pm 0.13}$} & \underline{56.63$_{\pm 0.12}$} & \underline{62.54$_{\pm 0.14}$} \\ 
    \cellcolor{Gray}\texttt{TRACER} & 82.76$_{\pm 0.07}$ & \textbf{74.14$_{\pm 0.15}$} & \textbf{79.33$_{\pm 0.18}$} & \textbf{54.92$_{\pm 0.26}$} & \textbf{53.69$_{\pm 0.14}$} & \textbf{58.26$_{\pm 0.13}$} & \textbf{64.07$_{\pm 0.15}$} \\ 
    \bottomrule
    \end{tabular}
    }
    \label{tab:natural_shift_acc}
    \vspace{-.7em}
\end{table}

\begin{table}[!t]
    \centering
    \caption{\textbf{ImageNet Accuracy.} (except ObjectNet) with additional baselines. All values are averaged over three seeds.}
    \label{tab:natural_shift_acc_no_ON_main}
    \LARGE
    \resizebox{\columnwidth}{!}{
    \begin{tabular}{l|c||cccc|c}
    \toprule
    Method & IN$\uparrow$ & IN-V2$\uparrow$ & IN-R$\uparrow$ & IN-A$\uparrow$ & IN-S$\uparrow$ & Avg. shifts$\uparrow$ \\
    \midrule
    \texttt{ZS}   & 68.33	& 61.93	& 77.71	& 49.95	& 48.26	& 59.46 \\ \midrule
    \texttt{Direct FT}   & 82.83$_{\pm 0.10}$	& 72.57$_{\pm 0.28}$	& 68.53$_{\pm 0.32}$	& 39.23$_{\pm 0.35}$	& 47.97$_{\pm 0.22}$	& 57.08$_{\pm 0.24}$ \\
    \texttt{L2-SP}~{\footnotesize\citep{li2018l2sp}}  & 82.87$_{\pm 0.09}$	& 72.63$_{\pm 0.22}$	& 68.77$_{\pm 0.24}$	& 39.73$_{\pm 0.28}$	& 48.23$_{\pm 0.15}$	& 57.34$_{\pm 0.18}$ \\
\texttt{Static SD}~{\footnotesize\citep{hinton2015distilling}}  & 82.07$_{\pm 0.08}$	& 73.13$_{\pm 0.26}$	& 72.87$_{\pm 0.18}$	& 42.33$_{\pm 0.38}$	& 49.87$_{\pm 0.21}$	& 59.55$_{\pm 0.22}$ \\ \midrule
    \texttt{LP-FT}~{\footnotesize\citep{kumar2022fine}}   & 82.14$_{\pm 0.08}$	& 72.09$_{\pm 0.20}$	& 70.44$_{\pm 0.22}$	& 46.32$_{\pm 0.30}$	& 48.65$_{\pm 0.16}$	& 59.38$_{\pm 0.18}$ \\
    \texttt{FLYP}~{\footnotesize\citep{Goyal2023FLYP}}   & 82.72$_{\pm 0.09}$	& 72.76$_{\pm 0.24}$	& 71.32$_{\pm 0.26}$	& 48.49$_{\pm 0.34}$	& 49.87$_{\pm 0.19}$	& 60.61$_{\pm 0.21}$ \\
    \texttt{CAR-FT}~{\footnotesize\citep{mao2022context}}   & 83.27$_{\pm 0.06}$	& 74.03$_{\pm 0.18}$	& 75.37$_{\pm 0.28}$	& 49.53$_{\pm 0.24}$	& 52.97$_{\pm 0.20}$	& 62.98$_{\pm 0.18}$ \\
    \texttt{Lipsum-FT}~{\footnotesize\citep{nam2024lipsum}}   & 83.33$_{\pm 0.05}$	& 73.57$_{\pm 0.12}$	& 75.93$_{\pm 0.14}$	& 49.87$_{\pm 0.28}$	& 51.43$_{\pm 0.12}$	& 62.70$_{\pm 0.14}$ \\
    \texttt{Model Stock}~{\footnotesize\citep{jang2024model}}   & \textbf{84.07$_{\pm 0.07}$}	& \textbf{74.83$_{\pm 0.16}$}	& 71.77$_{\pm 0.20}$	& 51.23$_{\pm 0.30}$	& 51.77$_{\pm 0.17}$	& 62.40$_{\pm 0.18}$ \\
    \texttt{ARF}~{\footnotesize\citep{han2024anchor}}   & 82.73$_{\pm 0.08}$	& 72.77$_{\pm 0.19}$	& 75.63$_{\pm 0.22}$	& 50.27$_{\pm 0.28}$	& 51.83$_{\pm 0.16}$	& 62.63$_{\pm 0.17}$ \\
    \texttt{CaRot}~{\footnotesize\citep{oh2024towards}}  & 83.16$_{\pm 0.06}$	& 74.08$_{\pm 0.14}$	& 77.74$_{\pm 0.16}$	& 51.57$_{\pm 0.22}$	& 52.74$_{\pm 0.13}$	& 64.03$_{\pm 0.14}$ \\
    \cellcolor{Gray}\texttt{TRACER} & 82.76$_{\pm 0.07}$	& 74.12$_{\pm 0.15}$	&\textbf{79.30$_{\pm 0.18}$}	&\textbf{54.72$_{\pm 0.24}$}	&\textbf{53.69$_{\pm 0.14}$}	&\textbf{65.46$_{\pm 0.15}$} \\ 
    \bottomrule
    \end{tabular}
    }
    \vspace{-.7em}
\end{table}

\begin{table*}[!t]
    \centering
    \small
    \caption{\textbf{ImageNet accuracy and ECE across backbones}. We provide summarized results on CLIP RN50, ViT-B/16, and ViT-L/14, averaged over three seeds. The best and the second-best in each column are bold and underlined, respectively. OOD columns are highlighted to emphasize robustness. (See Table~\ref{tab:natural_shift_acc} and Appendix Table~\ref{tab:natural_shift_ece_app} for ViT-B/16, and Table~\ref{tab:full_resnet50} and \ref{tab:full_vitL14} for details.)}
    \setlength{\tabcolsep}{3pt}
    \resizebox{\textwidth}{!}{
    \begin{tabular}{l|cc>{\columncolor{OODHighlight}}c>{\columncolor{OODHighlight}}c|cc>{\columncolor{OODHighlight}}c>{\columncolor{OODHighlight}}c|cc>{\columncolor{OODHighlight}}c>{\columncolor{OODHighlight}}c}
    \toprule
    & \multicolumn{4}{c|}{RN50} & \multicolumn{4}{c|}{ViT-B/16} & \multicolumn{4}{c}{ViT-L/14} \\
    Method & ID Acc.$\uparrow$ & ID ECE$\downarrow$ & OOD Acc.$\uparrow$ & OOD ECE$\downarrow$ & ID Acc.$\uparrow$ & ID ECE$\downarrow$ & OOD Acc.$\uparrow$ & OOD ECE$\downarrow$ & ID Acc.$\uparrow$ & ID ECE$\downarrow$ & OOD Acc.$\uparrow$ & OOD ECE$\downarrow$ \\
    \midrule
    %\texttt{ZS} & 59.83 & 0.0624 & 42.52 & \textbf{0.0955} & 68.33 & 0.057 & 58.39 & \underline{0.074} & 75.55 & 0.0590 & 70.93 & \textbf{0.0711} \\
    \texttt{LP-FT} & \underline{76.25} & 0.1042 & 41.62 & 0.3274 & 82.44 & 0.051 & 59.91 & 0.147 & 84.74 & 0.1056 & 64.11 & 0.2521 \\
    \texttt{FLYP} & 76.16 & 0.0516 & 42.70 & 0.2127 & 82.72 & 0.064 & 59.45 & 0.184 & 86.19 & 0.0729 & 71.44 & 0.1470 \\
    \texttt{CaRot} & 76.12 & \underline{0.0471} & \underline{42.71} & \underline{0.2109} & \textbf{83.15} & \underline{0.047} & \underline{62.54} & 0.079 & \textbf{86.95} & \textbf{0.0349} & \underline{74.13} & \underline{0.0737} \\
    \cellcolor{Gray}\texttt{TRACER} & \cellcolor{Gray}\textbf{76.48} & \cellcolor{Gray}\textbf{0.0470} & \textbf{42.73} & \textbf{0.1807} & \cellcolor{Gray}\underline{82.76} & \cellcolor{Gray}\textbf{0.045} & \textbf{64.07} & \textbf{0.073} & \cellcolor{Gray}\underline{86.27} & \cellcolor{Gray}\underline{0.0507} & \textbf{75.32} & \textbf{0.0732} \\
    \bottomrule
    \end{tabular}
    }
    \label{tab:rn50_vitL14}
    \vspace{-.7em}
\end{table*}
\subsubsection{Experimental Setup}
\paragraph{Objective.} Our experiments are designed to validate our theoretical claims and assess TRACER, as a practical implementation of our framework, on robust finetuning. We focus on evaluating both accuracy and calibration under distribution shifts. \vspace{-1em}

\paragraph{Datasets and Evaluation.} We use ImageNet-1K (IN)~\citep{deng2009imagenet, russakovsky2015imagenet} as our in-distribution (ID) downstream task. To measure OOD robustness, we evaluate all finetuned models on a standard suite of five distribution shift datasets: ImageNet-V2 (IN-V2)~\citep{recht2019imagenet}, ImageNet-Rendition (IN-R)~\citep{hendrycks2021many}, ImageNet-Adversarial (IN-A)~\citep{hendrycks2021natural}, ImageNet-Sketch (IN-S)~\citep{wang2019learning}, and ObjectNet~\citep{barbu2019objectnet}. We report the average performance across these five datasets as ``Avg. shifts'' or ``OOD''.

\textbf{Baselines.} We compare TRACER against a comprehensive set of baselines, including zero-shot (\texttt{ZS}~\citep{radford2021learning}), linear probing then finetuning (\texttt{LP-FT}~\citep{kumar2022fine}), finetune-like-you-pretrain (\texttt{FLYP}~\citep{Goyal2023FLYP}), Lipsum-FT~\citep{nam2024lipsum}, and the recent robust finetuning method \texttt{CaRot}~\citep{oh2024towards}.

%\paragraph{Model Architectures.} Our main experiments use the CLIP ViT-B/16 model (see Table~\ref{tab:natural_shift_acc}, Appendix Table~\ref{tab:natural_shift_ece_app}, and Table~\ref{tab:natural_shift_acc_no_ON_main}). To demonstrate the generalizability of our method, we also report results on CLIP RN50 and ViT-L/14 backbones (see Table~\ref{tab:rn50_vitL14}, \ref{tab:full_resnet50}, and \ref{tab:full_vitL14}). \vspace{-1em}

\textbf{Metrics.} We report top-1 accuracy and Expected Calibration Error (ECE), which measures the gap between predicted confidence and empirical accuracy (lower is better). We average across the five OOD datasets to summarize robustness. \vspace{-1em}

\textbf{Implementation Details and Experimental Setup.} We finetune CLIP variants on ImageNet-1K (IN) and evaluate on five OOD datasets: IN-V2, IN-R, IN-A, IN-S, and ObjectNet, following \citet{taori2020measuring}. For all methods, we finetune for 10 epochs using the AdamW optimizer with a learning rate of $1 \times 10^{-5}$ and a weight decay of $0.01$. The batch size is set to 224 for ViT-L/14 and 512 for ViT-B/16 and ResNet50. All experiments are run over three random seeds and we report mean and standard deviation. For TRACER, the WMA teacher uses a $\text{Beta}(0.5, 0.5)$ weighting kernel and combines symmetric InfoNCE with the composite SD loss (\S\ref{app:sec:contrastive_distillation_loss}). \vspace{-.5em}

\subsubsection{Results and Analysis}
\paragraph{OOD accuracy and calibration on ViT-B/16.}
As shown in Table~\ref{tab:natural_shift_acc}, TRACER achieves strong OOD accuracy on ViT-B/16, particularly on the most challenging shifts (ObjectNet and IN-A). In Appendix Table~\ref{tab:natural_shift_ece_app}, TRACER achieves the lowest average OOD ECE, indicating probabilistic reliability under shift. With additional baselines (Table~\ref{tab:natural_shift_acc_no_ON_main}), TRACER remains among the top OOD performers while staying competitive on IN. Additionally, the cross-backbone experiments (Table~\ref{tab:rn50_vitL14}) show similar trends for RN50 and ViT-L/14.

\paragraph{OOD degradation as the empirical signature of catastrophic forgetting.}
\label{sec:ood_forgetting_link}
We treat \emph{OOD accuracy degradation} as the primary measurable symptom of catastrophic forgetting under finetuning: when a model retains less pretrained, broadly transferable structure, the loss shows up most clearly on inputs that differ from the finetuning distribution. Three results in our experiments support this view: (i) \texttt{Direct FT} drops \emph{below the zero-shot baseline} on average OOD accuracy (Table~\ref{tab:natural_shift_acc_no_ON_main}: 57.08\% vs.\ 59.46\% ZS); (ii) the toy experiment (\S\ref{sec:synth_empirical_validation}) measures explicit forgetting rates of 37.9\% for Direct FT vs.\ 0.1\% for dynamic SD; and (iii) the CKA/SVCCA analysis (Figure~\ref{fig:cka_svcca}) shows that the deeper layers of \texttt{Direct FT} drift away from the pretrained representation, while TRACER keeps similarity $>0.97$ across all layers. Together, these justify reading the OOD column of Tables~\ref{tab:natural_shift_acc}--\ref{tab:rn50_vitL14} as a quantitative measure of how much pretrained knowledge each method preserves.

\paragraph{Static SD vs.\ TRACER.}
\label{sec:staticsd_vs_tracer}
Table~\ref{tab:natural_shift_acc_no_ON_main} isolates the contribution of the trajectory-regularized teacher on ViT-B/16. A static self-distillation anchor at $\mtx{W}_I^0$ already recovers a large portion of OOD accuracy over Direct FT (Avg.\ shifts $57.08\%\!\to\!59.55\%$), confirming that distillation-based preservation is necessary. Static SD still leaves a substantial gap to TRACER ($59.55\%\!\to\!65.46\%$, \textbf{+5.9}), largest where the pretrained representation is most informative and the static-SD anchor bias is most punishing: IN-R $72.87\%\!\to\!\mathbf{79.30\%}$ (+6.4), IN-A $42.33\%\!\to\!\mathbf{54.72\%}$ (+12.4). These are exactly the renditions/adversarial shifts where Theorem~\ref{thm:sd_wma_convergence_main} predicts the WMA teacher's benefit: it removes the static-SD task-subspace bias while preserving the orthogonal directions that matter for unseen styles and natural adversarial examples.

\textbf{Ablation Studies.} Beyond the primary results, we conducted comprehensive ablation studies across \emph{four axes} (detailed in \S\ref{app_sec:ablations}) to thoroughly validate TRACER's design choices. \textbf{(1) Multi-perspective distillation} (Table~\ref{tab:tracer_ablation}): (a) \emph{FD and CRD are the strongest single components}. 
%(FD: $63.52\%$ OOD, best singleton ECE; CRD: $63.12\%$; vs.\ $59.40\%$ with no distillation). 
(b) \emph{The four components are complementary}: every pair including FD or CRD beats its singletons. (c) \emph{All four together is best overall}: the full TRACER setting achieves the highest Avg.\ All; FD stabilizes features, CRD preserves relational structure, ICL enriches mutual information, and Cross-KD blends relational and interactive cues. \textbf{(2) Distillation strength} (Table~\ref{tab:lambda_sd_ablation}): Sweeping $\lambda_{\text{SD}}$ from 0.1 to 10.0 reveals that moderate values ($\approx$1.0--2.0) achieve the best ID-OOD trade-off, while higher values improve calibration at the cost of ID accuracy. \textbf{(3) Teacher update frequency} (Table~\ref{tab:teacher_update_freq}): TRACER maintains stable OOD accuracy ($\sim$64.0--64.2\%) across update frequencies from every step to every 2500 steps, eliminating the need for brittle scheduling required by CaRot~\citep{oh2024towards}. \textbf{(4) Beta-kernel shape} (Table~\ref{tab:beta_ablation}): Arcsine-like weighting ($\text{Beta}(0.5,0.5)$) proves most effective. These extensive ablations confirm that TRACER's design is both principled and robust.

\textbf{Empirical Validation of Geometric Preservation.}
To verify our theoretical claim that TRACER preserves knowledge in the orthogonal subspace, we conduct a layer-wise representational similarity analysis using Centered Kernel Alignment (CKA)~\citep{kornblith2019similarity} on the CLIP ViT-B/16 image encoder (Figure~\ref{fig:cka_svcca}). We extract feature maps from every layer (Patch Embeddings, Transformer Blocks 0--11, and the Final Projection) on the ImageNet validation set and compare finetuned models against the pretrained model using CKA and SVCCA~\citep{raghu2017svcca}. As shown in Figure~\ref{fig:cka_svcca}, \texttt{Direct FT} exhibits a drop in similarity in the deeper layers (Blocks 6--11) relative to the pretrained model. This confirms that catastrophic forgetting manifests as a \textbf{Feature Distortion} of high-level semantic representations. In contrast, TRACER maintains near-perfect similarity ($>0.97$) across all layers. This provides empirical evidence for our geometric interpretation: TRACER successfully anchors the optimization to the pretrained geometry, performing surgical updates that adapt to the task without overwriting robust feature extractors.

\begin{figure*}[t]
    \centering
    \resizebox{0.73\linewidth}{!}{%
        \includegraphics[width=\textwidth]{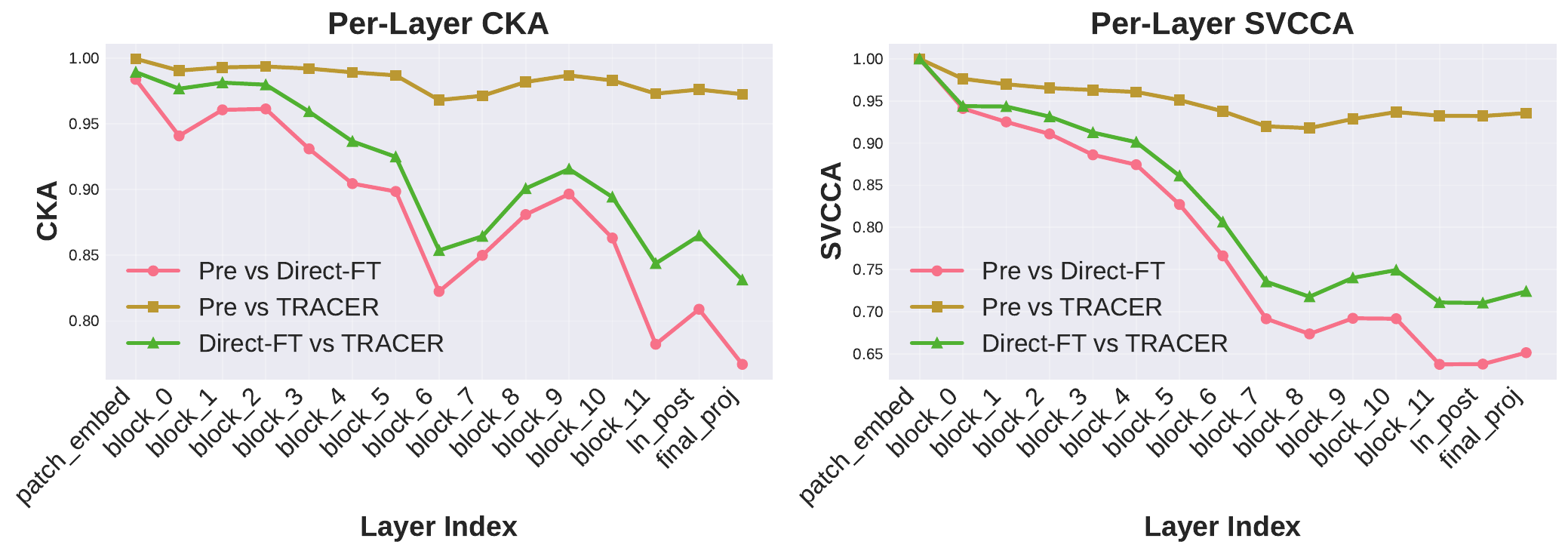}%
    }
    \caption{\textbf{Layer-wise Representational Similarity.} We compare the internal representations of the Pretrained model against \texttt{Direct FT} and \texttt{TRACER} using CKA (left) and SVCCA (right) across all layers of the CLIP ViT-B/16 image encoder. \texttt{TRACER} (gold) preserves the geometric structure of the pretrained knowledge significantly better than \texttt{Direct FT} (pink), particularly in deeper layers.}
    \label{fig:cka_svcca}
    \vspace{-.7em}
\end{figure*}

\textbf{Computational Efficiency and Complexity.}
TRACER also offers efficiency advantages over CaRot~\citep{oh2024towards}, whose spectral regularization scales as $\mathcal{O}(d^3)$. TRACER's distillation operates on batch similarity matrices ($\mathcal{O}(B^2)$, $B \ll d$), and the WMA update matches standard EMA cost ($\mathcal{O}(P)$). As shown in Table~\ref{tab:runtime_comparison}, this reduces training time per epoch compared to CaRot.

\begin{table}[h]
    \centering
    \LARGE
    \setlength{\tabcolsep}{3.5pt}
    \caption{\textbf{Computational Efficiency.} Computational efficiency comparison per epoch on ImageNet-1K using CLIP ViT-B/16 on an NVIDIA H100 GPU.}
    \vspace{-0.5em}
    \resizebox{\columnwidth}{!}{
    \begin{tabular}{l|c|c|c|c}
    \toprule
    {Method} & {Cost} & {Time / Epoch} & {Overhead} & {Avg. OOD Acc.} \\
    \midrule
    \texttt{Direct FT} & $\mathcal{O}(P)$ & $\sim$ 16 min & $1.00\times$ & 57.08\% \\
    \texttt{CaRot}~{\normalsize\citep{oh2024towards}} & $\mathcal{O}(B^2 + d^3)$ & $\sim$ 29 min & $1.81\times$ & 62.54\% \\
    \cellcolor{Gray}\texttt{TRACER (Ours)} & $\mathcal{O}(B^2)$ & \textbf{$\sim$ 22 min} & $\mathbf{1.38\times}$ & \textbf{64.07\%} \\
    \bottomrule
    \end{tabular}}
    \label{tab:runtime_comparison}
\end{table}

\textbf{Teacher Dynamics and Regularization Strength.}
\label{sec:teacher_dynamics_validation}
Our theory posits that the WMA teacher in TRACER provides a more persistent regularizing signal than the EMA teacher used in methods like CaRot. To validate this, we track the KL divergence between teacher and student throughout training on ImageNet.
As shown in Figure \ref{fig:teacher_student_ki}, for the EMA teacher, the KL decays steadily, indicating that the teacher is rapidly collapsing onto the student and its regularizing influence is diminishing. In contrast, the WMA teacher maintains a higher and more stable KL throughout the entire training process. This sustained divergence supports the view that the WMA teacher provides a persistent ``restoring force,'' as predicted by our analysis in \S\ref{app:sec:non_vanishing_regularizer}. This helps prevent the student from converging to a narrow task-specific minimum and reflects the persistent regularization strength required for robust generalization.
\begin{figure*}[ht]
\centering
\begin{subfigure}[b]{0.32\textwidth}
    \includegraphics[width=\textwidth]{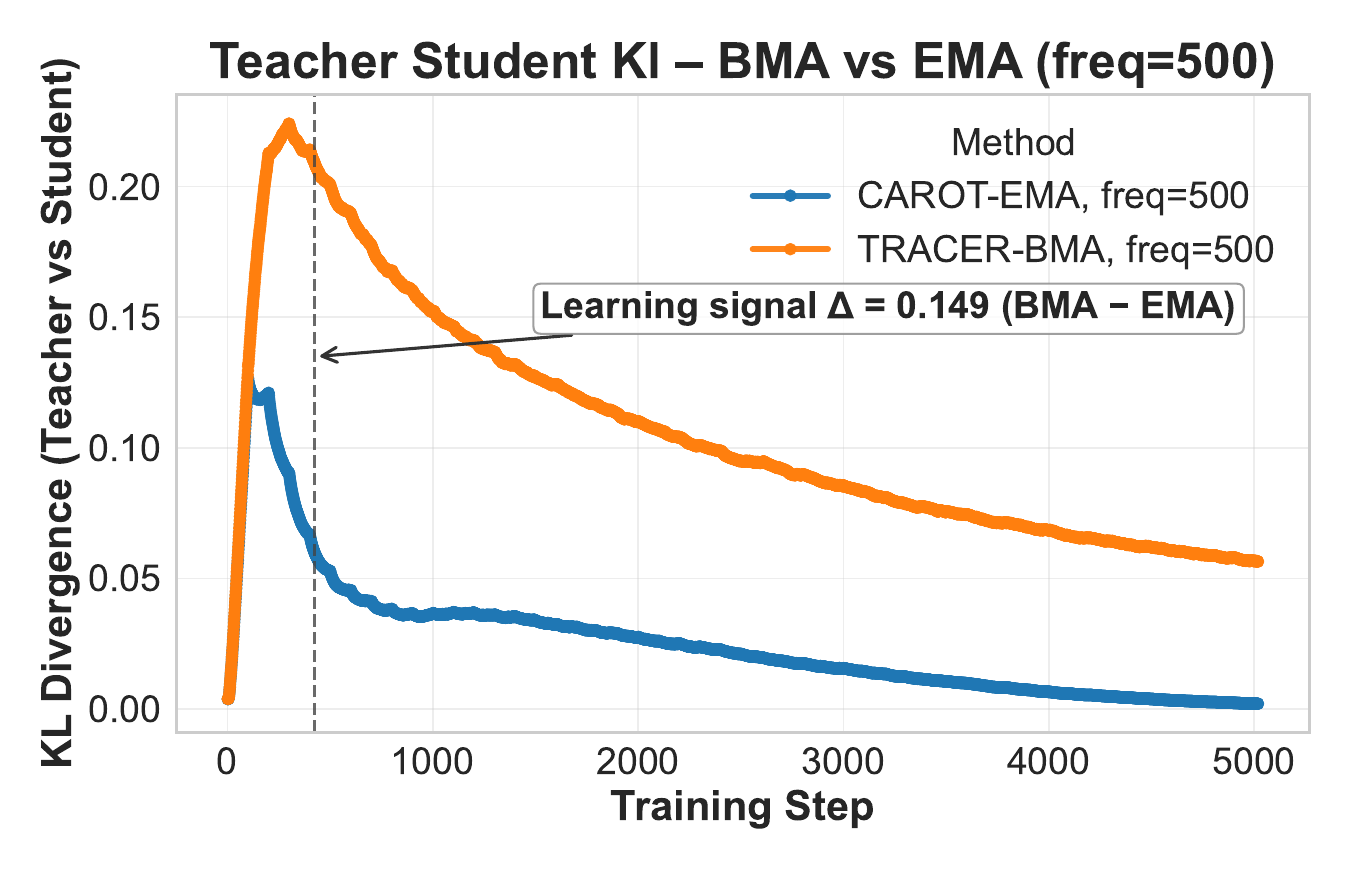}
\end{subfigure}
\hfill
\begin{subfigure}[b]{0.32\textwidth}
    \includegraphics[width=\textwidth]{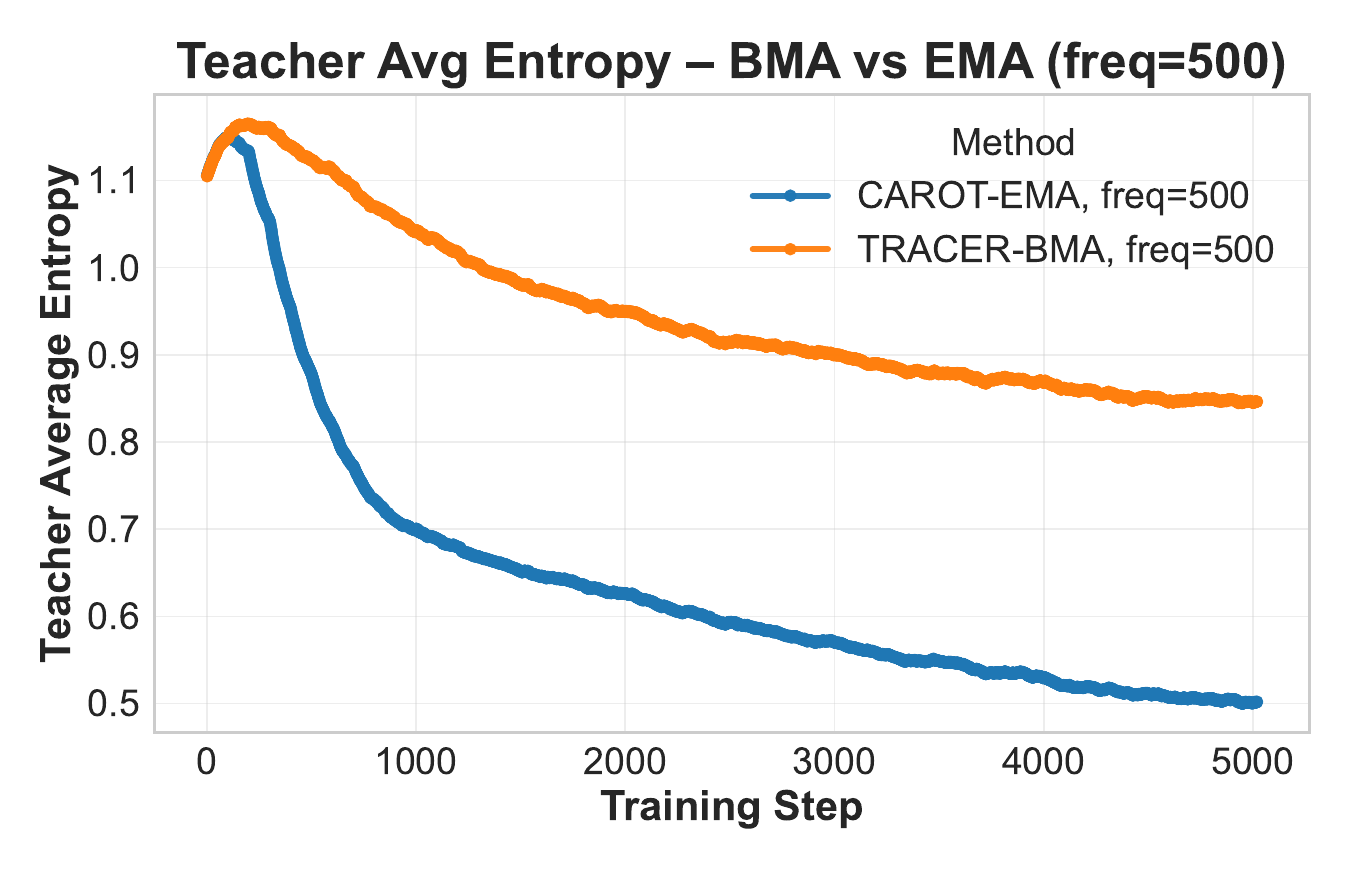}
\end{subfigure}
\hfill
\begin{subfigure}[b]{0.32\textwidth}
    \includegraphics[width=\textwidth]{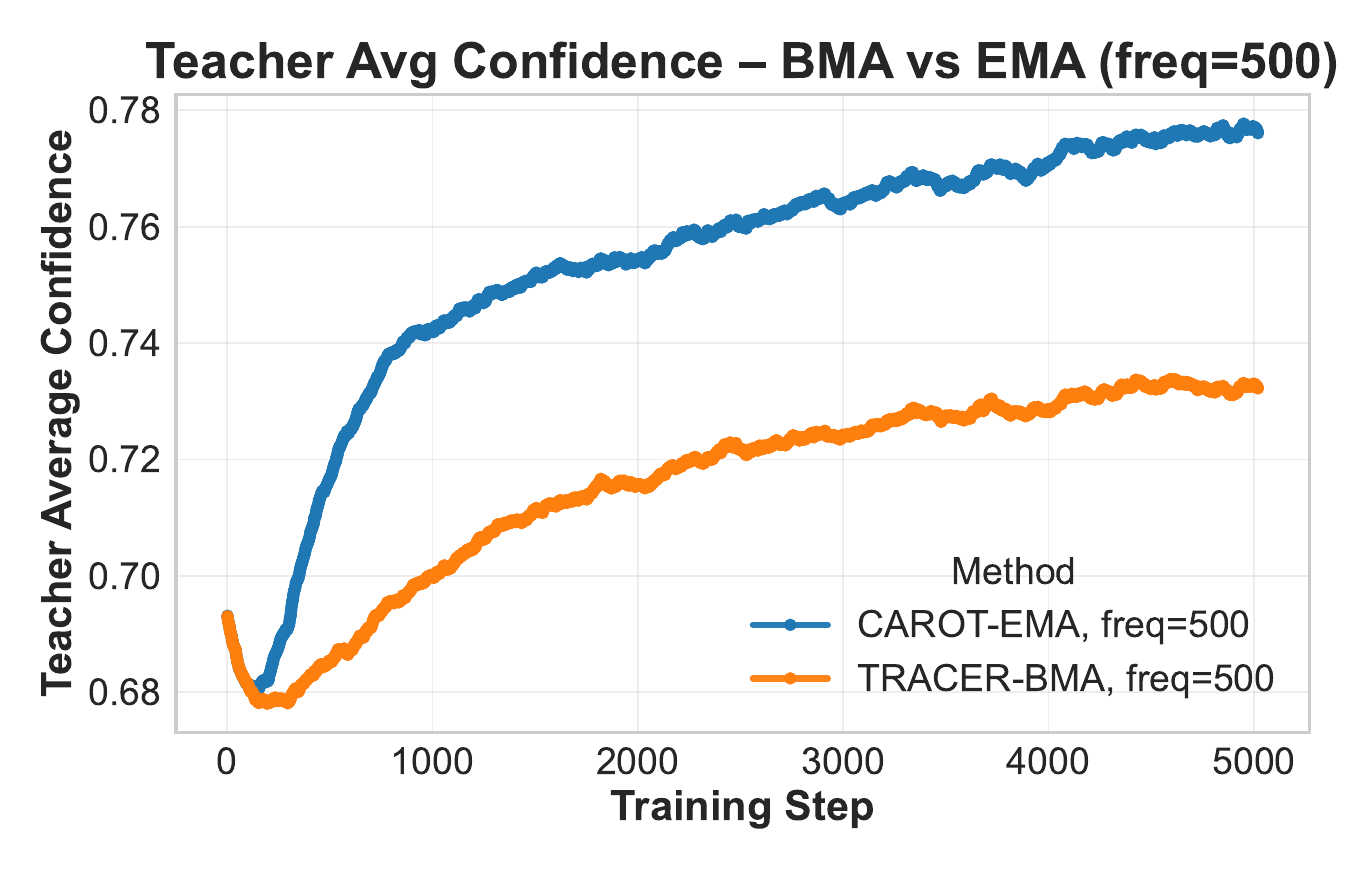}
\end{subfigure}
\caption{\textbf{Teacher--Student Knowledge Gap During Training.} Compared to the EMA teacher (blue), which shows rapidly vanishing KL divergence and thus a weakening regularization signal (left), the WMA teacher (orange) sustains a higher and more stable KL gap. This stability is supported by higher teacher entropy (middle) and moderated confidence (right), preventing overfitting. Together, these trends confirm that WMA provides a stronger and more persistent self-distillation signal than EMA.}
\label{fig:teacher_student_ki}
\vspace{-.5em}
\end{figure*}

\textbf{Algorithmic Simplicity.} While EMA-based methods often require complex, sparse update schedules (e.g., updating only every 500 steps with linear warmup and careful momentum tuning) to prevent collapse, TRACER is robust to update frequency. As shown in Table~\ref{tab:teacher_update_freq}, TRACER maintains consistent performance ($\sim 64.0-64.2\%$ OOD accuracy) whether the teacher is updated every step or every 2500 steps, reducing the need for brittle hyperparameter tuning. Figure~\ref{fig:teacher_dynamics_comparison} illustrates this failure mode in prior methods: without careful tuning, the EMA teacher in CaRot collapses immediately \emph{when updated at every step}, whereas TRACER's WMA teacher remains stable even under a dense update schedule.
\vspace{-.7em}

\section{Conclusion}
\label{sec:conclusion}
We proved that TRACER's trajectory-averaging WMA teacher, unlike its EMA counterpart, maintains a persistent regularizing force over finite training horizons. This force continuously anchors the model to its robust pretrained initialization, helping prevent overfitting and improving out-of-distribution performance. Our extensive ablation studies confirm that TRACER's design is both principled and robust to hyperparameter choices across distillation strength, update frequency, and kernel shape. Our work bridges the geometry of finetuning with the practical design of robust methods, and these principles motivate future extensions to parameter-efficient methods, continual learning, random-feature analyses of the same trajectory-regularization idea, and larger vision--language backbones.

\textbf{Limitations and Scope.} Our empirical evaluation focuses on CLIP-style vision--language backbones and standard vision robustness benchmarks. We do not yet provide experiments on broader modalities or multimodal LLM settings, and our theoretical analysis is developed in the linearized image/text-encoder regime; we discuss concrete extensions to random-feature settings, to larger VLMs, and the relationship to prompt-based adaptation in \S\ref{app:sec:extended_discussion}.

\vspace{-.6em}
\section*{Impact Statement} This work develops methods for improving the robustness and calibration of finetuned multimodal models under distribution shift. Our primary motivation is enhancing the reliability of foundation models as they are deployed in real-world applications where inputs may differ from training data. Robust and calibrated models are particularly valuable in safety-critical domains such as medical imaging and autonomous systems, where overconfident predictions under shift can lead to harmful outcomes. Our method is purely defensive in nature: it mitigates catastrophic forgetting and improves generalization. We see no obvious pathways by which this work accelerates harmful capabilities.
\newpage
\section*{Acknowledgements}
This research was conducted by the ARC Centre of Excellence for Automated Decision-Making and Society (CE200100005), and funded by the Australian Government through the Australian Research Council. This research was supported by The University of Melbourne’s Research Computing Services and the Petascale Campus Initiative. We would like to thank Navid Akhavan Attar, Aryan Yazdan Parast, and Hugo Lyons Keenan for valuable discussions and feedback. 

\section*{Conflict of Interest Disclosure} 
The authors declare no financial conflicts of interest.

\FloatBarrier

\bibliographystyle{icml2026}
\bibliography{example_paper}

\newpage
\appendix
\onecolumn
\section*{Appendix}

\section{Additional Related Work}
\label{app_sec:related}
\subsection{Contrastive Language-Image Pretraining} 
Initial advancements in contrastive learning between vision and language modalities were made by Virtex~\citep{desai2021virtex}, ICMLM~\citep{sariyildiz2020icmlm}, and ConVIRT~\citep{zhang2022contrastive}. These early approaches laid the groundwork for later models like CLIP~\citep{radford2021learning,openclip} and ALIGN~\citep{jia2021scaling}, which scaled contrastive techniques to larger datasets and model architectures. Subsequent work explores improved cross-modal interaction and training recipes \citep{yuan2021multimodal,yu2022coca,fang2022eva}. Following these, several open-weight contrastive models have been introduced to improve CLIP's performance and robustness~\citep{evaclip, zhai2023sigmoid, clipa2, dfn, metaclip, laion}. For example, SigLIP~\citep{zhai2023sigmoid, siglip2} modifies the contrastive loss by using a sigmoid function instead of softmax, and FLIP~\citep{li2023scaling} integrates masking strategies to accelerate training. 

\subsection{Theory of Contrastive Learning}
A rich theoretical literature analyzes contrastive learning from first principles, characterizing when and why contrastive objectives recover useful features and class structure \citep{saunshi2019theoretical}. The alignment--uniformity lens formalizes how pulling positives together while spreading embeddings uniformly on the sphere drives representation quality \citep{wang2020understanding}. For tractability, many works study \textit{linearized} or simplified contrastive losses that replace log-exp with linear functions and show that their gradients align with those of standard objectives up to reweighting, enabling closed-form analysis and geometric insight \citep{ji2021power,tian2022understanding,nakada2023understanding,xue2024understanding}. This linearized viewpoint has proven effective in theoretical analyses across self-supervised contrastive learning (CL) \citep{ji2021power,HaoChen2022,HaoChen2023,Shen2022}, multimodal contrastive learning (MMCL) \citep{nakada2023understanding}, non-contrastive methods \citep{Liu2022}, and supervised CL \citep{Xue2023}. Complementing these results, large-scale empirical studies suggest that many design choices of popular losses (e.g., log-exp, cosine similarity) are not essential for effective representation learning \citep{Garrido2023}. 

\subsection{Finetuning, Forgetting, and Regularization}
Catastrophic forgetting, adapting to new data at the expense of prior knowledge, has long been recognized as a central challenge in sequential and transfer learning \citep{mccloskey1989catastrophic,french1999catastrophic}. Mitigation strategies include: (i) regularization, which constrains parameter updates via importance penalties or output consistency \citep{kirkpatrick2017overcoming,zenke2017continual,li2017learning}; (ii) replay, which mixes current data with stored or synthesized memories \citep{robins1995catastrophic,rebuffi2017icarl,aljundi2019gradient}; and (iii) architectural growth, which expands capacity and distills across modules \citep{rusu2016progressive,yan2021dynamically,wang2022foster}. 
L2-SP \citep{li2018l2sp} tethers the solution to the pretrained initialization via weight-space regularization, while output-space regularizers distill prior behaviors during adaptation \citep{li2017learning}. 
Additionally, parameter-efficient finetuning methods such as adapters \citep{houlsby2019parameter} and prefix tuning \citep{li2020prefix} enable task adaptation without full model updates, thus mitigating forgetting. Among these, Low-Rank Adaptation (LoRA) \citep{hu2022lora} has gained prominence for finetuning large language models by injecting trainable low-rank matrices into existing weights, achieving competitive performance with reduced parameter updates and minimal forgetting. Further work explores functional regularization \citep{titsias2019functional} and knowledge-preserving contrastive losses \citep{jung2020continual} to encourage feature stability. As model sizes grow, scalable and minimally invasive adaptation techniques, balancing plasticity and stability, remain critical to continual and transfer learning paradigms.

\subsection{Robust Finetuning of CLIP}
Robustness evaluates how well models maintain performance under distribution shifts, which can include synthetic corruptions~\citep{hendrycks2019benchmarking} as well as real-world variations in viewpoint, style, and time~\citep{barbu2019objectnet,hendrycks2021many,wang2019learning,recht2019imagenet}. A standard protocol for evaluating CLIP-like models, proposed by~\cite{taori2020measuring}, involves finetuning on ImageNet and measuring transfer performance on a suite of realistic OOD sets (ImageNet-V2, -A, -R, -Sketch, and ObjectNet), which is now standard practice. This evaluation highlights a central challenge: naive finetuning methods like Linear Probing (LP), which only trains a classification head, or \textbf{Direct Full finetuning}, which updates all parameters, often create a trade-off between in-distribution (ID) performance and OOD robustness. To address this, a diverse array of robust finetuning techniques has been developed. A prominent line of work involves post-hoc averaging or interpolating model weights. For instance, \textbf{\texttt{WiSE-FT}}~\citep{wortsman2022robust} averages the weights of the zero-shot and a fully finetuned model, while \textbf{\texttt{Model Soup}}~\citep{wortsman2022model} averages the weights of multiple models found through a hyperparameter search. This concept is extended by \textbf{\texttt{Model Stock}}~\citep{jang2024model}, which efficiently builds and averages a diverse set of minimally adapted models. Other post-hoc methods include \textbf{\texttt{TPGM}}~\citep{tian2023trainable} and its efficient successor \textbf{\texttt{Fast TPGM}}~\citep{tian2023fast}, which project finetuned weights back towards the initial weights, and \textbf{\texttt{DaWin}}~\citep{oh2025dawin}, which introduces a training-free, dynamic interpolation where the mixing coefficient is decided on a per-sample basis using predictive entropy.

Beyond post-hoc modifications, many methods introduce regularization during the finetuning process itself. These can constrain the model in weight-space, such as \textbf{\texttt{L2-SP}}~\citep{li2018l2sp} which penalizes weight deviation, or by maintaining an \textbf{\texttt{EMA}} of model parameters to find smoother, more robust solutions. Others operate in the output-space, where \textbf{Knowledge Distillation (\texttt{KD})}~\citep{hinton2015distilling} aligns the student's predictions with the robust zero-shot teacher. A particularly relevant strategy for Vision-Language Models is using the text modality for guidance. This includes continuing contrastive learning with supervised image-text pairs as in Finetune-Like-You-Pretrain (\textbf{\texttt{FLYP}})~\citep{Goyal2023FLYP}, aligning with fixed context-specific prompts in \textbf{\texttt{CAR-FT}}~\citep{mao2022context}, regularizing the model's energy function using random texts to preserve broad semantic alignment in \textbf{\texttt{Lipsum-FT}}~\citep{nam2024lipsum}, or improving discrimination with both positive and negative prompts as in \textbf{\texttt{CLIPood}}~\citep{shu2023clipood}. Alternative strategies modify the training pipeline, such as the two-stage \textbf{\texttt{LP-FT}} approach~\citep{kumar2022fine} which first finds a good head via linear probing before full finetuning. More advanced methods like \textbf{\texttt{CaRot}}~\citep{oh2024towards} aim to simultaneously improve OOD accuracy and confidence calibration through a principled combination of contrastive learning and novel regularization terms. 

\subsection{Knowledge Distillation and Self-Distillation}
Knowledge Distillation (KD) was initially introduced for compression, where a smaller student learns from a larger teacher’s outputs \citep{hinton2015distilling}. The same principle underpins continual and transfer learning, where a pretrained model guides finetuning to preserve capabilities, often termed Learning without Forgetting (LwF) \citep{li2017learning}. Self-distillation (SD) is a special case where the model learns from its own initial state \citep{zhang2019your,mobahi2020self-distillation}. 
Beyond single-modality SD, multimodal KD aligns internal signals and outputs to preserve cross-modal structure \citep{fang2021compressing,wang2022multimodal,li2023distilling,liang2024module,li2024promptkd}, with recent work demonstrating effective CLIP distillation via affinity matching and weight inheritance \citep{wu2023tinyclip,yang2024clipkd}. 

\subsection{Dynamic Teachers, Weight Averaging, and Mode Connectivity}
Temporal ensembling and EMA teachers stabilize training and improve targets \citep{laine2017temporal,tarvainen2017meanteacher}, and they underpin momentum-encoder methods in self-supervised learning \citep{he2020moco,grill2020byol,caron2021emerging}. Separately, model averaging and linear mode connectivity suggest that interpolations and averages often lie in flat, low-loss regions and improve robustness \citep{izmailov2018averaging,frankle2019lottery}. Wise-FT leverages interpolation between pretrained and finetuned weights to strengthen OOD performance \citep{wortsman2022robust,wortsman2022model}. 

\newpage 

\section{Additional Experiments and Ablations}
\label{app_sec:ablations}
\paragraph{Experimental 
Protocol.} We use the same seeds, and hyperparameter configurations as in the main experiments, varying only the stated factor per ablation. 

\paragraph{Additional Experimental Results.}
To further demonstrate the generalizability of our method, we present results using the CLIP RN50 and ViT-L/14 backbones. A summary of these experiments, including ViT-B/16 for comparison, is provided in Table~\ref{tab:rn50_vitL14}, with detailed results reported below.

\begin{table}[t]
    \centering
    \caption{\textbf{ImageNet calibration (ECE) on ViT-B/16.} We report ECE ($\downarrow$) on ImageNet and its distribution shift variants. In each column, the best value is bold and the second-best is underlined.}
    \vspace{0.3em}
    \footnotesize
    \setlength{\tabcolsep}{3.5pt}
    \begin{tabular}{l|ccccccc}
    \toprule
    Method & IN & IN-V2 & IN-R & IN-A & IN-S & ObjNet & Avg. \\
    \midrule
    \texttt{ZS} & 0.057 & 0.055 & 0.054 & \textbf{0.097} & 0.085 & \textbf{0.078} & \underline{0.074} \\ 
    \midrule
    \texttt{LP-FT} & 0.051 & 0.089 & 0.061 & 0.205 & 0.166 & 0.212 & 0.147 \\ 
    \texttt{FLYP} & 0.064 & 0.117 & 0.097 & 0.244 & 0.220 & 0.238 & 0.184 \\ 
    \texttt{Lipsum-FT} & \textbf{0.038} & 0.052 & \underline{0.043} & 0.129 & 0.102 & 0.132 & 0.091 \\ 
    \texttt{CaRot} & 0.047 & \textbf{0.037} & 0.058 & 0.124 & \textbf{0.070} & 0.108 & 0.079 \\ 
    \cellcolor{Gray}\texttt{TRACER (Ours)} & \underline{0.045} & \underline{0.039} & \textbf{0.041} & \underline{0.104} & \underline{0.078} & \underline{0.103} & \textbf{0.073} \\ 
    \bottomrule
    \end{tabular}
    \label{tab:natural_shift_ece_app}
\end{table}

\begin{table*}[ht]
\centering
\caption{\textbf{ImageNet results on CLIP ResNet50}}
\vspace{0.5em}

\begin{tabular}{l|c||ccccc|c}
    \toprule
    \multicolumn{8}{c}{Acc.$\uparrow$}\\
    Method & IN & IN-V2 & IN-R & IN-A & IN-S & ObjectNet & Avg. shifts \\
    \midrule
    \texttt{ZS}        & 59.83 & 52.90 & 60.72 & 23.25 & 35.45 & 40.27 &  42.52 \\ \midrule
    \texttt{FT}        & 76.21 & 64.87 & 50.66 & 18.11 & 33.90 & 42.32 & 41.97 \\
    \texttt{LP-FT}     & 76.25 & 64.48 & 49.55 & 18.60 & 33.33 & 42.13 & 41.62 \\
    \texttt{FLYP}      & 76.16 & 65.10 & 51.55 & 20.08 & 34.24 & 42.53 & 42.70 \\
    \texttt{CaRot}    & 76.12 & 65.36 & 52.16 & 19.32 & 34.05 & 42.67 & 42.71 \\
    \cellcolor{Gray}\texttt{TRACER} (Ours)    & 76.48 & 65.58 & 51.54 & 19.52 & 34.34 & 42.66 & 42.73 \\
    \midrule
    \multicolumn{8}{c}{ECE$\downarrow$}\\
    \midrule
    \texttt{ZS}        & 0.0624 & 0.0559 & 0.0530 & 0.2048 & 0.0740 & 0.0899 & 0.0955 \\ \midrule
    \texttt{FT}        & 0.0983 & 0.1623 & 0.1860 & 0.4692 & 0.2824 & 0.3023 & 0.2804 \\
    \texttt{LP-FT}     & 0.1042 & 0.1759 & 0.2709 & 0.5184 & 0.3520 & 0.3197 & 0.3274 \\
    \texttt{FLYP}      & 0.0516 & 0.0872 & 0.1439 & 0.3872 & 0.2021 & 0.2432 & 0.2127 \\
    \texttt{CaRot}    & 0.0471 & 0.0601 & 0.0948 & 0.3435 & 0.3435 & 0.2127 & 0.2109\\
    \cellcolor{Gray}\texttt{TRACER} (Ours)    & 0.0470 & 0.0564 & 0.1176 & 0.3456 & 0.1741 & 0.2097 & 0.1807 \\
    \bottomrule
    \end{tabular}

    \label{tab:full_resnet50}
\end{table*}

\begin{table*}[ht]
    \centering
    \caption{\textbf{ImageNet results on CLIP ViT-L/14}}
    \vspace{0.5em}

    \begin{tabular}{l|c||ccccc|c}
    \toprule
    \multicolumn{8}{c}{Acc.$\uparrow$}\\
    Method & IN & IN-V2 & IN-R & IN-A & IN-S & ObjectNet & Avg. shifts \\
    \midrule
    \texttt{ZS}        & 75.55 & 69.85 & 87.85 & 70.76 & 59.61 & 66.59 & 70.93 \\ \midrule
    \texttt{FT}        & 84.74 & 75.32 & 75.36 & 55.65 & 54.44 & 59.76 & 64.11 \\
    \texttt{LP-FT}     & 85.26 & 76.76 & 80.21 & 55.95 & 56.84 & 60.12 & 65.98 \\
    \texttt{FLYP}      & 86.19 & 78.21 & 83.81 & 68.85 & 60.20 & 66.15 & 71.44 \\
    \texttt{CaRot}    & 86.95 & 79.28 & 87.96 & 72.68 & 62.66 & 68.05 & 74.13 \\
    \cellcolor{Gray}\texttt{TRACER} (Ours)    & 86.27 & 78.54 & 89.70 & 74.87 & 63.71 & 69.76 & 75.32 \\
    \midrule
    \multicolumn{8}{c}{ECE$\downarrow$}\\
    \midrule
    \texttt{ZS}        & 0.0590 & 0.0686 & 0.0339 & 0.0640 & 0.1037 & 0.0852 & 0.0711 \\ \midrule
    \texttt{FT}        & 0.1056 & 0.1741 & 0.1613 & 0.3151 & 0.3234 & 0.2865 & 0.2521\\
    \texttt{LP-FT}     & 0.0993 & 0.1531 & 0.0872 & 0.2593 & 0.2613 & 0.2572 & 0.2036\\
    \texttt{FLYP}      & 0.0729 & 0.1219 & 0.0621 & 0.1443 & 0.2164 & 0.1903 & 0.1470\\
    \texttt{CaRot}    & 0.0349 & 0.0634 & 0.0353 & 0.0732 & 0.0914 & 0.1051 & 0.0737\\
    \cellcolor{Gray}\texttt{TRACER} (Ours)    & 0.0507 & 0.0581 & 0.0442 & 0.0665 & 0.1052 & 0.0918 & 0.0732\\
    \bottomrule
    \end{tabular}

    \label{tab:full_vitL14}
\end{table*}

\begin{figure*}[t!]
\centering
\begin{subfigure}[b]{0.32\textwidth}
    \includegraphics[width=\textwidth]{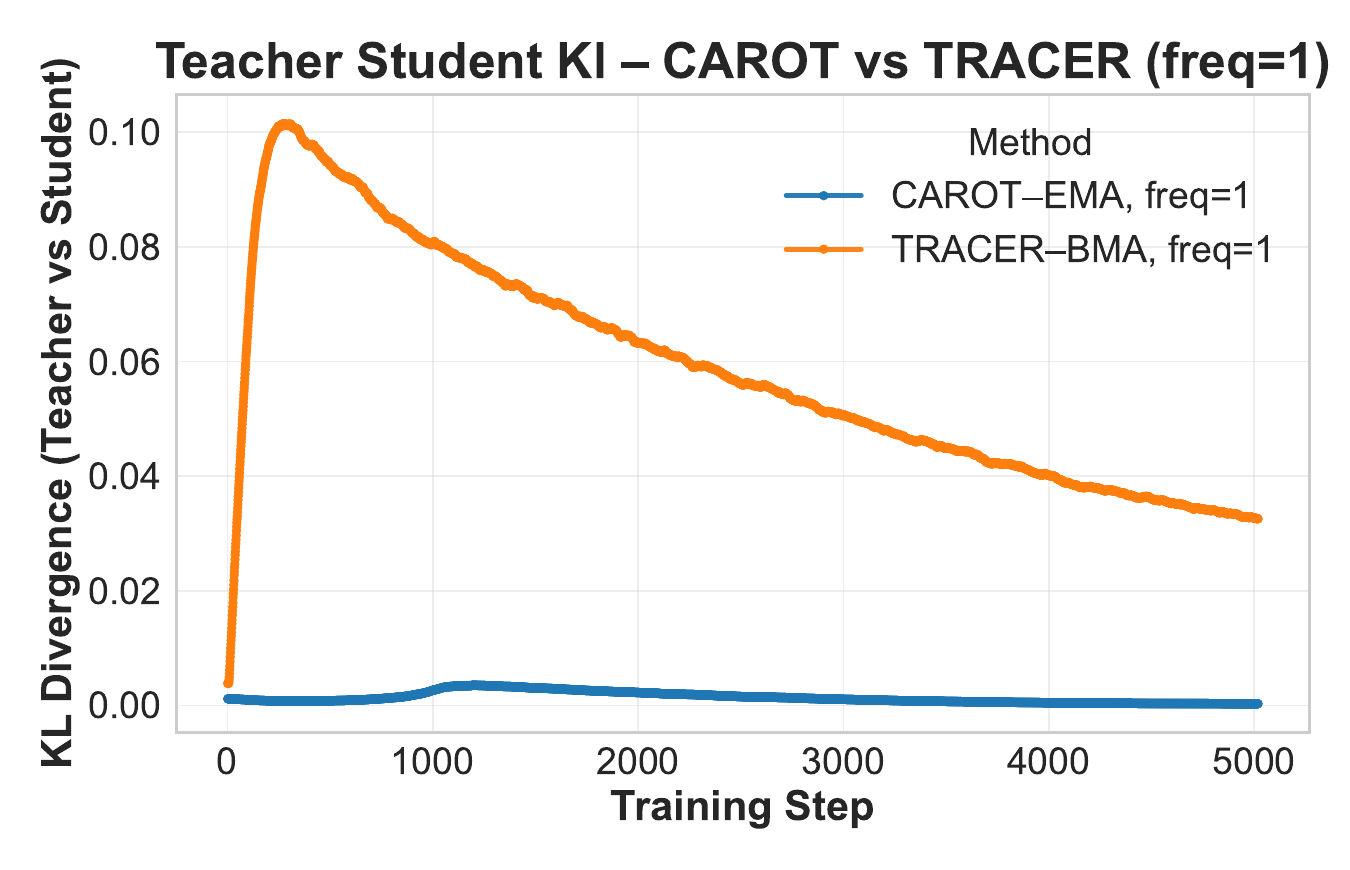}
    \caption{Teacher--Student KL}
\end{subfigure}
\hfill
\begin{subfigure}[b]{0.32\textwidth}
    \includegraphics[width=\textwidth]{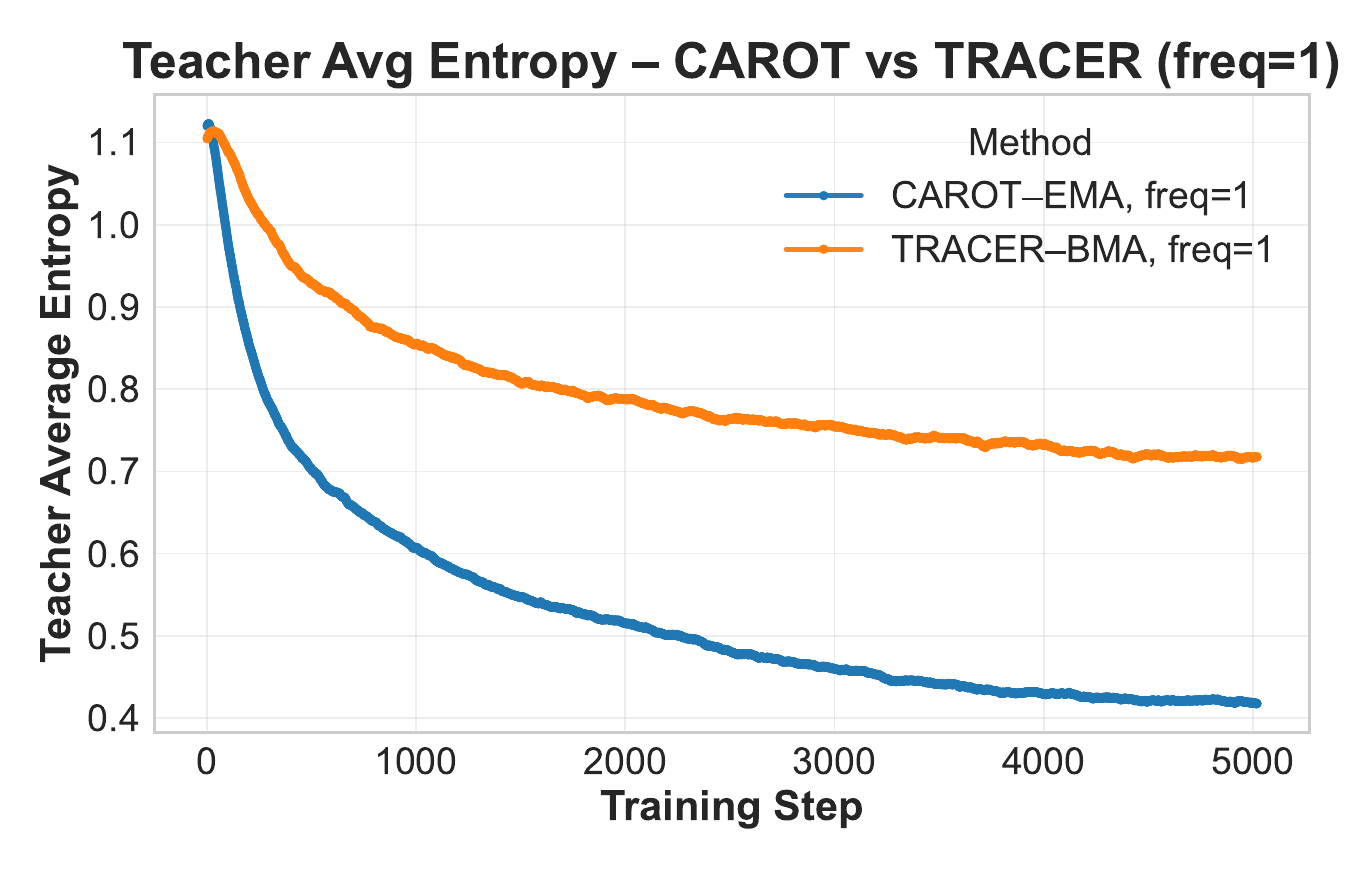}
    \caption{Teacher Entropy}
\end{subfigure}
\hfill
\begin{subfigure}[b]{0.32\textwidth}
    \includegraphics[width=\textwidth]{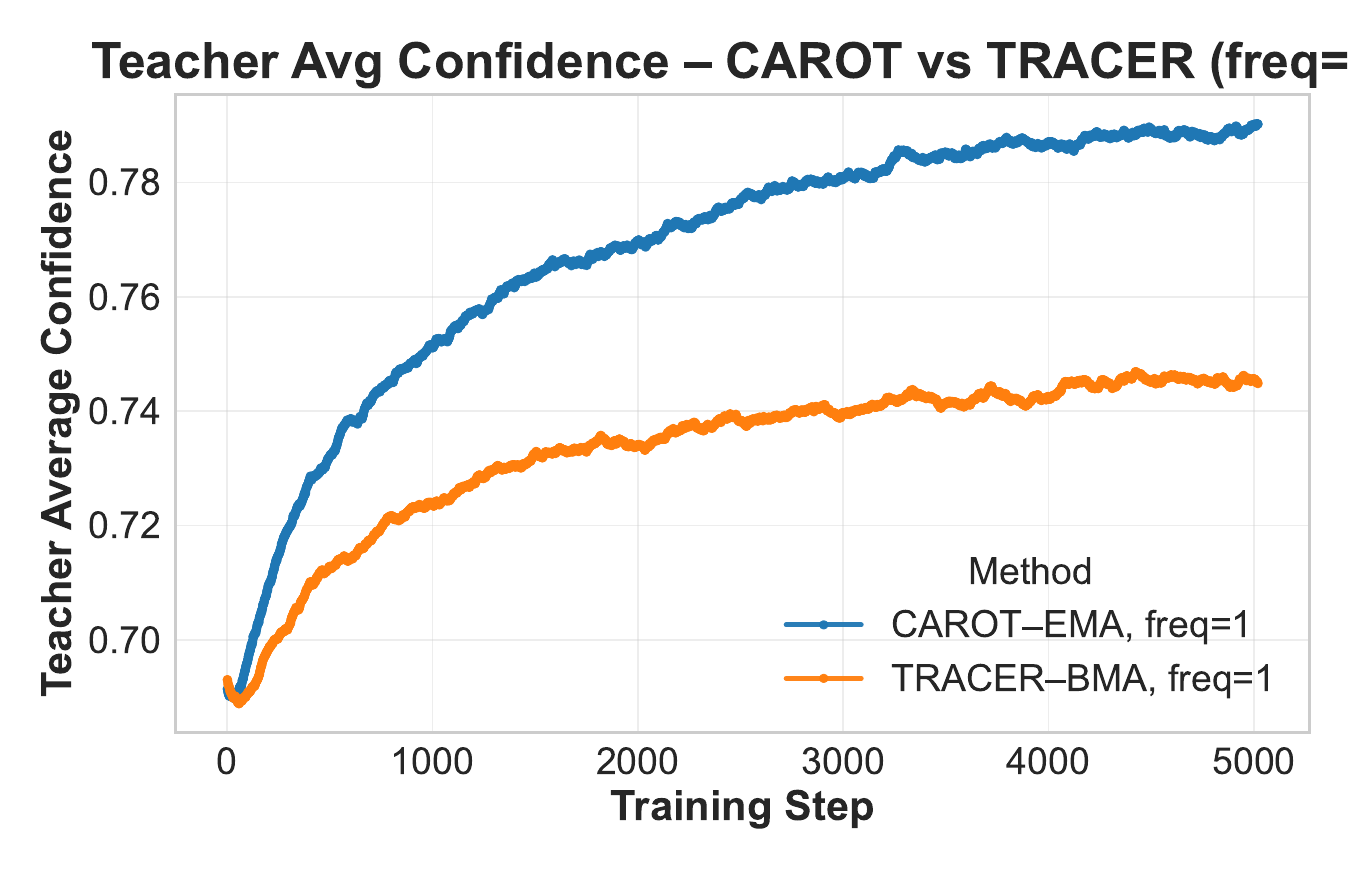}
    \caption{Teacher Confidence}
\end{subfigure}
\caption{\textbf{Comparison of Teacher Dynamics (Update Frequency = 1).} We track the evolution of the teacher model for CaRot (EMA) and TRACER (WMA) when updated at every step. The EMA teacher (blue) rapidly collapses onto the student (KL $\to$ 0), losing its regularizing capability. The WMA teacher (orange) maintains a persistent, stable gap, providing continuous regularization without needing brittle update schedules.}
\label{fig:teacher_dynamics_comparison}
\vspace{-1em}
\end{figure*}
\FloatBarrier
\clearpage

\newpage
\paragraph{Ablation 1: Multi-perspective distillation.}
The ablation study on multi-perspective distillation (Table~\ref{tab:tracer_ablation}) quantifies the contribution of each perspective to out-of-distribution (OOD) accuracy and calibration. Results show that CRD and FD emerge as the strongest individual components for OOD accuracy and ECE, respectively, while combining all four perspectives yields the best overall performance and remains among the top performers on OOD metrics. These findings highlight the complementary nature of the terms: FD stabilizes features, CRD preserves batch-level relational structure, ICL enriches mutual information in the teacher’s space, and CrossKD blends relational and interactive cues.
\begin{table*}[ht]
    \centering
    \small
    \setlength{\tabcolsep}{6pt}
    \caption{Ablation of TRACER components across ImageNet (IN) and distribution shifts. 
    For each setting (row), accuracy (Acc.$\uparrow$) is reported in \%, and expected calibration error (ECE$\downarrow$) in $[0,1]$. 
    OOD Avg.\ is the mean over \{IN-V2, IN-R, IN-A, IN-S, ObjectNet\}. 
    Method names encode the presence of losses $(\mathcal{L}_\text{FD},\mathcal{L}_\text{CrossKD},\mathcal{L}_\text{ICL},\mathcal{L}_\text{CRD})$ as \checkmark or $-$. Rows with only one loss term active are in gray.}

    % Accuracy Table
    \resizebox{\linewidth}{!}{
    \begin{tabular}{cccc|c|ccccc|c|c}
    \toprule
    \multicolumn{12}{c}{Acc.$\uparrow$}\\
    $\mathcal{L}_{\text{FD}}$ & $\mathcal{L}_{\text{CrossKD}}$ & $\mathcal{L}_{\text{ICL}}$ & $\mathcal{L}_{\text{CRD}}$ & IN & IN-V2 & IN-R & IN-A & IN-S & ObjectNet & Avg. shifts & Avg. All \\
    \midrule
    $-$ & $-$ & $-$ & $-$ & 82.69 & 72.73 & 71.35 & 48.52 & 49.84 & 54.86 & 59.40 & 63.33 \\
    \rowcolor{gray!40} $-$ & $-$ & $-$ & \checkmark & 83.17 & 74.29 & 77.75 & 53.09 & 53.03 & 57.46 & 63.12 & 66.47 \\
    \rowcolor{gray!40} $-$ & $-$ & \checkmark & $-$ & 82.50 & 73.13 & 72.12 & 49.23 & 50.03 & 55.26 & 59.95 & 63.71 \\
    $-$ & $-$ & \checkmark & \checkmark & 83.23 & 74.31 & 76.50 & 52.39 & 52.53 & 57.00 & 62.55 & 65.99 \\
    \rowcolor{gray!40} $-$ & \checkmark & $-$ & $-$ & \underline{83.19} & 74.04 & 74.67 & 50.65 & 51.39 & 56.40 & 61.43 & 65.06 \\
    $-$ & \checkmark & $-$ & \checkmark & 83.08 & 74.40 & 78.68 & 53.53 & 53.37 & 57.45 & 63.49 & 66.75 \\
    $-$ & \checkmark & \checkmark & $-$ & 83.03 & 73.95 & 74.11 & 50.60 & 51.28 & 55.77 & 61.14 & 64.79 \\
    $-$ & \checkmark & \checkmark & \checkmark & \textbf{83.27} & 74.48 & 77.60 & 52.76 & 52.94 & 57.28 & 63.01 & 66.39 \\
    \rowcolor{gray!40} \checkmark & $-$ & $-$ & $-$ & 83.06 & 74.16 & 78.14 & 54.39 & 53.14 & 57.79 & 63.52 & 66.78 \\
    \checkmark & $-$ & $-$ & \checkmark & 82.45 & 73.91 & 79.67 & 54.88 & 53.93 & 58.02 & \textbf{64.08} & 67.14 \\
    \checkmark & $-$ & \checkmark & $-$ & 83.08 & 74.39 & 77.59 & 53.84 & 53.10 & 57.59 & 63.30 & 66.60 \\
    \checkmark & $-$ & \checkmark & \checkmark & 82.92 & 74.40 & 79.21 & 54.61 & 53.75 & 57.99 & 63.99 & \underline{67.15} \\
    \checkmark & \checkmark & $-$ & $-$ & 83.01 & 74.21 & 78.91 & 54.09 & 53.28 & 58.04 & 63.71 & 66.92 \\
    \checkmark & \checkmark & $-$ & \checkmark & 82.27 & 73.71 & 79.81 & 54.65 & 53.82 & 58.14 & 64.03 & 67.07 \\
    \checkmark & \checkmark & \checkmark & $-$ & 83.06 & 74.38 & 78.38 & 54.07 & 53.25 & 57.86 & 63.59 & 66.83 \\
    \checkmark & \checkmark & \checkmark & \checkmark & 82.81 & 73.94 & 79.55 & 54.83 & 53.96 & 58.02 & \underline{64.06} & \textbf{67.19} \\
    \bottomrule
    \end{tabular}
    }

    \vspace{5pt}

    % ECE Table
    \resizebox{\textwidth}{!}{
    \begin{tabular}{cccc|c|ccccc|c|c}
    \toprule
    \multicolumn{12}{c}{ECE.$\downarrow$}\\
    $\mathcal{L}_{\text{FD}}$ & $\mathcal{L}_{\text{CrossKD}}$ & $\mathcal{L}_{\text{ICL}}$ & $\mathcal{L}_{\text{CRD}}$ & IN & IN-V2 & IN-R & IN-A & IN-S & ObjectNet & Avg. shifts & Avg. All \\
    \midrule
    $-$ & $-$ & $-$ & $-$ & 0.0635 & 0.1171 & 0.0967 & 0.2435 & 0.2200 & 0.2383 & 0.1836 & 0.1632 \\
    \rowcolor{gray!40} $-$ & $-$ & $-$ & \checkmark & 0.0415 & 0.0412 & 0.0413 & 0.1328 & 0.0860 & 0.1211 & 0.0845 & 0.0773 \\
    \rowcolor{gray!40} $-$ & $-$ & \checkmark & $-$ & 0.0585 & 0.1000 & 0.0817 & 0.2117 & 0.1974 & 0.2168 & 0.1615 & 0.1444 \\
    $-$ & $-$ & \checkmark & \checkmark & \textbf{0.0393} & 0.0523 & 0.0429 & 0.1534 & 0.1111 & 0.1441 & 0.1008 & 0.0905 \\
    \rowcolor{gray!40} $-$ & \checkmark & $-$ & $-$ & 0.0483 & 0.0830 & 0.0662 & 0.2007 & 0.1660 & 0.1918 & 0.1415 & 0.1260 \\
    $-$ & \checkmark & $-$ & \checkmark & 0.0453 & 0.0374 & 0.0434 & 0.1141 & 0.0753 & 0.1096 & 0.0760 & 0.0709 \\
    $-$ & \checkmark & \checkmark & $-$ & 0.0507 & 0.0824 & 0.0691 & 0.2007 & 0.1684 & 0.1968 & 0.1435 & 0.1280 \\
    $-$ & \checkmark & \checkmark & \checkmark & 0.0401 & 0.0442 & 0.0392 & 0.1345 & 0.0897 & 0.1260 & 0.0867 & 0.0790 \\
    \rowcolor{gray!40} \checkmark & $-$ & $-$ & $-$ & 0.0430 & 0.0674 & 0.0479 & 0.1592 & 0.1383 & 0.1661 & 0.1158 & 0.1037 \\
    \checkmark & $-$ & $-$ & \checkmark & 0.0474 & 0.0380 & 0.0455 & 0.1034 & 0.0720 & 0.0975 & \underline{0.0713} & \textbf{0.0673} \\
    \checkmark & $-$ & \checkmark & $-$ & 0.0436 & 0.0684 & 0.0482 & 0.1632 & 0.1384 & 0.1691 & 0.1175 & 0.1052 \\
    \checkmark & $-$ & \checkmark & \checkmark & 0.0419 & 0.0454 & 0.0397 & 0.1157 & 0.0853 & 0.1162 & 0.0805 & 0.0740 \\
    \checkmark & \checkmark & $-$ & $-$ & \underline{0.0399} & 0.0537 & 0.0398 & 0.1340 & 0.1082 & 0.1374 & 0.0946 & 0.0855 \\
    \checkmark & \checkmark & $-$ & \checkmark & 0.0531 & 0.0404 & 0.0500 & 0.0936 & 0.0703 & 0.0888 & \textbf{0.0686} & 0.0660 \\
    \checkmark & \checkmark & \checkmark & $-$ & 0.0402 & 0.0572 & 0.0418 & 0.1420 & 0.1176 & 0.1499 & 0.1017 & 0.0915 \\
    \checkmark & \checkmark & \checkmark & \checkmark & 0.0446 & 0.0416 & 0.0430 & 0.1027 & 0.0757 & 0.1054 & 0.0737 & \underline{0.0688} \\
    \bottomrule
    \end{tabular}
    }
    \label{tab:tracer_ablation}
\end{table*}

\newpage

\paragraph{Ablation 2: Distillation strength $\lambda_\text{SD}$.}
The ablation on distillation strength $\lambda_\text{SD}$ (Table~\ref{tab:lambda_sd_ablation}) examines the balance between teacher influence and task adaptation. We sweep ${\lambda_\text{SD}\in\{0.1,0.2,0.3,0.4,0.5,0.7,1.0,1.2,1.5,2.0,3.0,4.0,5.0,10.0\}}$. Results indicate that moderate values of $\lambda_\text{SD}$ ($\approx1.0$--$2.0$) achieve the best OOD accuracy, while larger values improve calibration by lowering ECE but slightly reduce in-distribution (ID) accuracy. This aligns with our theory that stronger distillation enhances calibration through teacher anchoring, whereas moderate strength provides the optimal trade-off between adaptation and preservation for OOD performance. 

\begin{table*}[ht]
    \centering
    \small
    \setlength{\tabcolsep}{6pt}
    \caption{Ablation of distillation coefficient $\lambda_\text{SD}$ across ImageNet (IN) and distribution shifts. 
    For each setting (row), accuracy (Acc.$\uparrow$) is reported in \%, and expected calibration error (ECE$\downarrow$) in $[0,1]$. 
    OOD Avg.\ is the mean over \{IN-V2, IN-R, IN-A, IN-S, ObjectNet\}.}

    % Accuracy Table
    \begin{tabular}{c|c|ccccc|c}
    \toprule
    \multicolumn{8}{c}{Acc.$\uparrow$ (\%)}\\
    $\lambda_\text{SD}$ & IN & IN-V2 & IN-R & IN-A & IN-S & ObjectNet & OOD Avg. \\
    \midrule
    10.0 & 80.52 & 72.08 & 79.50 & 54.07 & 53.27 & 57.45 & 63.27 \\
    5.0  & 81.08 & 72.54 & 79.79 & 54.37 & 53.64 & 57.65 & 63.60 \\
    4.0  & 81.25 & 72.67 & 79.78 & 54.75 & 53.74 & 57.66 & 63.72 \\
    3.0  & 81.58 & 73.12 & 79.90 & 54.83 & 53.84 & 57.75 & 63.89 \\
    2.0  & 81.94 & 73.49 & \textbf{80.03} & 54.64 & \textbf{54.02} & 57.88 & 64.01 \\
    1.5  & 82.27 & 73.52 & 79.72 & \textbf{55.28} & 53.99 & 58.08 & \textbf{64.12} \\
    1.2  & 82.50 & 73.74 & 79.55 & 55.20 & 53.94 & \textbf{58.14} & 64.11 \\
    1.0  & 82.70 & 74.07 & 79.64 & 54.87 & 53.85 & 58.08 & 64.10 \\
    0.7  & 82.90 & 74.41 & 79.21 & 54.72 & 53.73 & 58.03 & 64.02 \\
    0.5  & 83.16 & 74.31 & 78.76 & 54.13 & 53.52 & 57.76 & 63.70 \\
    0.4  & 83.26 & 74.48 & 78.19 & 53.64 & 53.27 & 57.82 & 63.48 \\
    0.3  & 83.28 & \textbf{74.52} & 77.53 & 53.55 & 52.91 & 57.44 & 63.19 \\
    0.2  & \textbf{83.29} & 74.30 & 76.80 & 52.61 & 52.58 & 57.04 & 62.67 \\
    0.1  & 83.25 & 74.08 & 75.34 & 51.52 & 51.89 & 56.49 & 61.86 \\
    \bottomrule
    \end{tabular}

    \vspace{5pt}

    % ECE Table
    \begin{tabular}{c|c|ccccc|c}
    \toprule
    \multicolumn{8}{c}{ECE$\downarrow$}\\
    $\lambda_\text{SD}$ & IN & IN-V2 & IN-R & IN-A & IN-S & ObjectNet & OOD Avg. \\
    \midrule
    10.0 & 0.0637 & 0.0475 & 0.0621 & 0.0839 & 0.0725 & 0.0795 & 0.0691 \\
    5.0  & 0.0631 & 0.0467 & 0.0600 & \textbf{0.0812} & 0.0700 & \textbf{0.0772} & \textbf{0.0670} \\
    4.0  & 0.0606 & 0.0457 & 0.0583 & 0.0817 & 0.0705 & 0.0801 & 0.0673 \\
    3.0  & 0.0590 & 0.0466 & 0.0566 & 0.0866 & 0.0701 & 0.0814 & 0.0683 \\
    2.0  & 0.0547 & 0.0422 & 0.0528 & 0.0885 & \textbf{0.0682} & 0.0863 & 0.0676 \\
    1.5  & 0.0511 & \textbf{0.0396} & 0.0490 & 0.0911 & 0.0707 & 0.0897 & 0.0680 \\
    1.2  & 0.0484 & 0.0408 & 0.0455 & 0.0963 & 0.0713 & 0.0957 & 0.0699 \\
    1.0  & 0.0465 & 0.0410 & 0.0445 & 0.1001 & 0.0742 & 0.1010 & 0.0722 \\
    0.7  & 0.0419 & 0.0445 & 0.0416 & 0.1120 & 0.0841 & 0.1144 & 0.0793 \\
    0.5  & 0.0409 & 0.0466 & \textbf{0.0390} & 0.1259 & 0.0953 & 0.1295 & 0.0873 \\
    0.4  & \textbf{0.0394} & 0.0502 & 0.0415 & 0.1353 & 0.1030 & 0.1363 & 0.0933 \\
    0.3  & 0.0397 & 0.0554 & 0.0424 & 0.1459 & 0.1161 & 0.1502 & 0.1020 \\
    0.2  & 0.0421 & 0.0626 & 0.0463 & 0.1628 & 0.1341 & 0.1660 & 0.1144 \\
    0.1  & 0.0470 & 0.0798 & 0.0601 & 0.1890 & 0.1594 & 0.1895 & 0.1356 \\
    \bottomrule
    \end{tabular}
    \label{tab:lambda_sd_ablation}
\end{table*}

\newpage

\paragraph{Ablation 3: Teacher update frequency.}
The ablation on teacher update frequency (Table~\ref{tab:teacher_update_freq}) investigates the trade-off between stability and plasticity in the dynamic teacher. We vary the frequency from 1 to 2500 steps ($\approx 1$ epoch). The results show that updating every 50–100 steps yields the highest OOD accuracy, whereas slower update schedules lead to lower OOD ECE. These findings align with the dynamic-teacher analysis: slower updates preserve early robustness and calibration, while faster updates allow the teacher to better track the evolving task solution and enhance accuracy.

\begin{table*}[!ht]
    \centering
    \small
    \setlength{\tabcolsep}{6pt}
    \caption{Ablation of teacher update frequency in TRACER distillation across ImageNet (IN) and distribution shifts. 
    For each setting (row), accuracy (Acc.$\uparrow$) is reported in \%, and expected calibration error (ECE$\downarrow$) in $[0,1]$. 
    OOD Avg.\ is the mean over \{IN-V2, IN-R, IN-A, IN-S, ObjectNet\}.  The update frequency denotes the number of training steps between each teacher model update from the student; lower frequencies (e.g., 2-10 steps) result in a teacher that closely follows the student's trajectory providing fine-grained regularization, while higher frequencies (e.g., 500-2500 steps) maintain a more stable teacher that changes less frequently, providing stronger regularization from earlier checkpoints and the initial pretrained model.}

    % Accuracy Table
    \begin{tabular}{c|c|ccccc|c}
    \toprule
    \multicolumn{8}{c}{Acc.$\uparrow$ (\%)}\\
    Update Freq. & IN & IN-V2 & IN-R & IN-A & IN-S & ObjectNet & OOD Avg. \\
    \midrule
    2500 & 81.38 & 72.97 & \textbf{79.83} & 55.05 & 53.88 & 58.17 & 63.98 \\
    1000 & 81.90 & 73.27 & 79.71 & 54.93 & 54.21 & 58.26 & 64.08 \\
    500  & 82.13 & 73.53 & 79.80 & 54.72 & \textbf{54.23} & 58.30 & 64.12 \\
    100  & 82.54 & 73.92 & 79.76 & \textbf{55.09} & 54.00 & \textbf{58.31} & \textbf{64.22} \\
    50   & 82.62 & 73.84 & 79.69 & 54.96 & 53.96 & 58.12 & 64.11 \\
    10   & 82.57 & 74.01 & 79.58 & 54.96 & 53.88 & 58.00 & 64.09 \\
    5    & 82.70 & 73.98 & 79.57 & 54.81 & 53.93 & 58.07 & 64.07 \\
    2    & 82.73 & \textbf{74.12} & 79.57 & 54.51 & 53.99 & 58.09 & 64.06 \\
    1   & \textbf{82.76}	& {74.14}	& {79.33}	& {54.92}	& {53.69}	& {58.26}	& {64.07} \\ 

    \bottomrule
    \end{tabular}

    \vspace{5pt}

    % ECE Table
    \begin{tabular}{c|c|ccccc|c}
    \toprule
    \multicolumn{8}{c}{ECE$\downarrow$}\\
    Update Freq. & IN & IN-V2 & IN-R & IN-A & IN-S & ObjectNet & OOD Avg. \\
    \midrule
    2500 & 0.0614 & 0.0426 & 0.0632 & \textbf{0.0839} & 0.0722 & \textbf{0.0764} & \textbf{0.0677} \\
    1000 & 0.0562 & 0.0434 & 0.0581 & 0.0908 & 0.0698 & 0.0825 & 0.0689 \\
    500  & 0.0524 & 0.0419 & 0.0548 & 0.0935 & \textbf{0.0677} & 0.0868 & 0.0689 \\
    100  & 0.0475 & 0.0410 & 0.0487 & 0.0985 & 0.0694 & 0.0920 & 0.0699 \\
    50   & 0.0481 & 0.0413 & 0.0471 & 0.0992 & 0.0713 & 0.0949 & 0.0708 \\
    10   & 0.0473 & 0.0408 & 0.0468 & 0.0983 & 0.0714 & 0.0978 & 0.0710 \\
    5    & 0.0480 & \textbf{0.0400} & 0.0451 & 0.1001 & 0.0738 & 0.0975 & 0.0713 \\
    2    & 0.0471 & 0.0409 & 0.0440 & 0.1034 & 0.0733 & 0.0990 & 0.0721 \\
    1   & \textbf{0.0446} &	{0.0394} &	\textbf{0.0412} &	{0.1041} &	{0.0784} &	{0.1030} &	{0.0732} \\ 

    \bottomrule
    \end{tabular}
    \label{tab:teacher_update_freq}
\end{table*}

\newpage

\paragraph{Ablation 4: Beta kernel shape.}
The ablation on the Beta kernel shape (Table~\ref{tab:beta_ablation}) evaluates the role of endpoint-aware curricula. We vary $\beta\in\{0.2,0.5,0.7,0.9,1.0,1.5\}$. Results show that smaller $\beta$ values (0.2–0.5), which emphasize endpoints, enhance both OOD accuracy and ECE by reinforcing strong early anchoring and late solution emphasis. In contrast, larger $\beta$ values favor mid-trajectory weighting, yielding marginal ID improvements at the cost of reduced OOD gains. These findings suggest that arcsine-like weighting is particularly effective for robust finetuning. Additional kernel families are discussed in \S\ref{app:sec:wma_vs_ema}.

\begin{table*}[!ht]
    \centering
    \small
    \setlength{\tabcolsep}{6pt}
    \caption{Ablation of $\beta$ value in Beta($\beta$, $\beta$) distribution for teacher weighting in TRACER distillation across ImageNet (IN) and distribution shifts. 
    For each setting (row), accuracy (Acc.$\uparrow$) is reported in \%, and expected calibration error (ECE$\downarrow$) in $[0,1]$. 
    OOD Avg.\ is the mean over \{IN-V2, IN-R, IN-A, IN-S, ObjectNet\}. 
    The $\beta$ value controls the shape of the distribution used for sampling teacher ensemble weights. Lower $\beta$ values ($<1$) assign higher weights to the pretrained model and early training steps, $\beta=1$ corresponds to uniform weighting, while higher $\beta$ values ($>1$) emphasize intermediate training steps and down-weight both the initial pretrained model and final training steps.}

    % Accuracy Table
    \begin{tabular}{c|c|ccccc|c}
    \toprule
    \multicolumn{8}{c}{Acc.$\uparrow$ (\%)}\\
    $\beta$ & IN & IN-V2 & IN-R & IN-A & IN-S & ObjectNet & OOD Avg. \\
    \midrule
    1.5 & \textbf{83.19} & 74.38 & 77.15 & 52.43 & 52.95 & 57.06 & 62.79 \\
    1.0 & 83.09 & \textbf{74.39} & 78.10 & 52.93 & 53.25 & 57.41 & 63.22 \\
    0.9 & 83.14 & 74.26 & 78.28 & 53.59 & 53.35 & 57.41 & 63.38 \\
    0.7 & 82.97 & 74.32 & 78.89 & 54.28 & 53.58 & 57.80 & 63.77 \\
    0.5 & {82.76}	& {74.14}	& {79.33}	& {54.92}	& {53.69}	& {58.26}	& {64.07} \\ 

    0.2 & 81.91 & 73.46 & \textbf{79.96} & \textbf{55.27} & \textbf{54.08} & \textbf{58.34} & \textbf{64.22} \\
    \bottomrule
    \end{tabular}

    \vspace{5pt}

    % ECE Table
    \begin{tabular}{c|c|ccccc|c}
    \toprule
    \multicolumn{8}{c}{ECE$\downarrow$}\\
    $\beta$ & IN & IN-V2 & IN-R & IN-A & IN-S & ObjectNet & OOD Avg. \\
    \midrule
    1.5 & 0.0403 & 0.0485 & 0.0418 & 0.1433 & 0.1069 & 0.1418 & 0.0965 \\
    1.0 & 0.0407 & 0.0478 & 0.0393 & 0.1317 & 0.0957 & 0.1286 & 0.0886 \\
    0.9 & \textbf{0.0401} & 0.0477 & 0.0402 & 0.1280 & 0.0926 & 0.1262 & 0.0869 \\
    0.7 & 0.0416 & 0.0456 & \textbf{0.0388} & 0.1148 & 0.0856 & 0.1161 & 0.0802 \\
    0.5 & {0.0446} & {0.0394} &	{0.0412} &	{0.1041} &	{0.0784} &	{0.1030} &	{0.0732} \\ 
    0.2 & 0.0548 & \textbf{0.0424} & 0.0542 & \textbf{0.0917} & \textbf{0.0670} & \textbf{0.0843} & \textbf{0.0679} \\
    \bottomrule
    \end{tabular}
    \label{tab:beta_ablation}
\end{table*}

\newpage
\section{Additional Theoretical Details}
\label{app:sec:theory_extended}
\begingroup
\small
\setlength{\abovedisplayskip}{4pt}
\setlength{\belowdisplayskip}{4pt}
\setlength{\jot}{2pt}
\allowdisplaybreaks[2]

This section provides the full derivations, proofs, and detailed geometric interpretations for the theoretical analysis presented in the main paper.

\subsection{Derivation of $\mathcal{L}_{\text{CL}}$}
\label{app:sec:derivation_cl}
We start with the linearized multimodal contrastive learning (MMCL) loss function, which balances positive and negative pairs across a batch, as commonly used in theoretical analyses \citep{ji2021power,tian2022understanding, nakada2023understanding, xue2024understanding}. The original formulation is given by:
$$\mathcal{L}_\mmcl(\mtx{W}_I, \mtx{W}_T) = \frac{1}{2n(n-1)}\sum_{i}\sum_{j\neq i}(s_{ij} - s_{ii}) + \frac{1}{2n(n-1)}\sum_i\sum_{j\neq i}(s_{ji} - s_{ii}) + \frac{\rho}{2}\| \mtx{W}_I^\top \mtx{W}_T \|_F^2$$
where $s_{ij} = (\mtx{W}_I \vct{x}_{I}^{i})^\top (\mtx{W}_T \vct{x}_{T}^{j})$ represents the similarity score between image $i$ and text $j$.

Expanding the first term:
$$ \frac{1}{2n(n-1)}\sum_{i}\sum_{j\neq i}(s_{ij} - s_{ii}) = \frac{1}{2n(n-1)}\left[\sum_{i}\sum_{j\neq i}s_{ij} - \sum_{i}\sum_{j\neq i}s_{ii}\right] $$
Since for each $i$, there are $(n-1)$ values of $j \neq i$, the second sub-sum simplifies:
$$ = \frac{1}{2n(n-1)}\left[\sum_{i}\sum_{j\neq i}s_{ij} - (n-1)\sum_{i}s_{ii}\right] $$

Expanding the second term:
$$ \frac{1}{2n(n-1)}\sum_i\sum_{j\neq i}(s_{ji} - s_{ii}) = \frac{1}{2n(n-1)}\left[\sum_i\sum_{j\neq i}s_{ji} - \sum_i\sum_{j\neq i}s_{ii}\right] $$
Similarly, this becomes:
$$ = \frac{1}{2n(n-1)}\left[\sum_i\sum_{j\neq i}s_{ji} - (n-1)\sum_{i}s_{ii}\right] $$

Combining both terms:
Adding the first and second terms yields:
$$ \frac{1}{2n(n-1)}\left[\sum_{i}\sum_{j\neq i}s_{ij} + \sum_i\sum_{j\neq i}s_{ji} - 2(n-1)\sum_{i}s_{ii}\right] $$

Note that $\sum_i\sum_{j\neq i}s_{ji}$ is simply a re-indexing of $\sum_j\sum_{i\neq j}s_{ij}$, which is equivalent to $\sum_i\sum_{j\neq i}s_{ij}$. Therefore:
$$ \sum_{i}\sum_{j\neq i}s_{ij} + \sum_i\sum_{j\neq i}s_{ji} = 2\sum_{i}\sum_{j\neq i}s_{ij} $$

Substituting back into $\mathcal{L}_\mmcl$:
$$ \mathcal{L}_\mmcl = \frac{1}{2n(n-1)}\left[2\sum_{i}\sum_{j\neq i}s_{ij} - 2(n-1)\sum_{i}s_{ii}\right] + \frac{\rho}{2}\| \mtx{W}_I^\top \mtx{W}_T \|_F^2 $$
$$ = \frac{1}{n(n-1)}\left[\sum_{i}\sum_{j\neq i}s_{ij} - (n-1)\sum_{i}s_{ii}\right] + \frac{\rho}{2}\| \mtx{W}_I^\top \mtx{W}_T \|_F^2 $$

We define the core contrastive alignment term as:
\begin{equation}
\mathcal{L}_{\text{CL}} = \sum_{i=1}^n \sum_{j \neq i} s_{ij} - (n-1) \sum_{i=1}^n s_{ii}.
\label{app:eq:L_CL_def}
\end{equation}
And the regularization term as $R(\mtx{W}_I, \mtx{W}_T)=\frac{\rho}{2}\| \mtx{W}_I^\top \mtx{W}_T \|_F^2$.
Thus, the total MMCL loss can be written as:
$$ \mathcal{L}_\mmcl = \frac{1}{n(n-1)}\mathcal{L}_{\text{CL}} + R(\mtx{W}_I, \mtx{W}_T) $$

\subsection{Reformulation of the Least-Squares Objective}
\label{app:sec:reformulation_ls}
We demonstrate how the contrastive alignment term $\mathcal{L}_{\text{CL}}$ can be re-expressed as a matrix least-squares problem, which is the foundation of our theoretical analysis.
Recall $\mathcal{L}_{\text{CL}} = \sum_{i=1}^n \sum_{j \neq i} s_{ij} - (n-1) \sum_{i=1}^n s_{ii}$.
Let $\mtx{H}_I = \mtx{W}_I \mtx{X}_I$ and $\mtx{H}_T = \mtx{W}_{T}^{0} \mtx{X}_T$. Then $s_{ij} = (\mtx{H}_I)_i^\top (\mtx{H}_T)_j$.
Let $\mtx{S}=\mtx{H}_I^\top \mtx{H}_T$. The sum of all similarities is $\vct{1}^\top \mtx{S} \vct{1} = \sum_{i,j} s_{ij}$.
The sum of diagonal similarities is $\Tr(\mtx{S}) = \sum_i s_{ii}$.
Then, $\sum_{i=1}^n \sum_{j \neq i} s_{ij} = \sum_{i,j} s_{ij} - \sum_i s_{ii} = \vct{1}^\top \mtx{S} \vct{1} - \Tr(\mtx{S})$.
Substituting this into $\mathcal{L}_{\text{CL}}$:
\begin{align*}
\mathcal{L}_{\text{CL}} &=
(\vct{1}^\top \mtx{S} \vct{1} - \Tr(\mtx{S})) - (n-1) \Tr(\mtx{S}) \\
&= \vct{1}^\top \mtx{S} \vct{1} - n \Tr(\mtx{S}) \\
&= \Tr(\vct{1}\vct{1}^\top \mtx{S}) - n \Tr(\mtx{S}) \\
&= \Tr((\mtx{J}_n - n\mtx{I}_n)^\top \mtx{S}) \\
&= \Tr((\mtx{J}_n - n\mtx{I}_n)^\top \mtx{H}_I^\top \mtx{H}_T) \\
&= \Tr(\mtx{H}_T (\mtx{J}_n - n\mtx{I}_n) \mtx{H}_I^\top) \\
&= \Tr(\mtx{W}_{T}^{0} \mtx{X}_T (\mtx{J}_n - n\mtx{I}_n) (\mtx{W}_I \mtx{X}_I)^\top) \\
&= -\Tr\!\left( (\mtx{W}_{T}^{0} \mtx{X}_T (n\mtx{I}_n - \mtx{J}_n))^\top \mtx{W}_I \mtx{X}_I \right).
\end{align*}
Let $\mtx{Y}_{\text{FT}} = \mtx{W}_{T}^{0} \mtx{X}_T (n\mtx{I}_n - \mtx{J}_n)$, as defined in Definition~\ref{def:contrastive_target_matrix_main}. Then
\[
\mathcal{L}_{\text{CL}} \;=\; -\Tr(\mtx{Y}_{\text{FT}}^\top \mtx{W}_I \mtx{X}_I).
\]
Expanding the Frobenius norm of the least-squares residual gives:
\begin{align*}
\frac{1}{2} \matnorm{\mtx{W}_I \mtx{X}_I - \mtx{Y}_{\text{FT}}}^2
&= \frac{1}{2} \Tr\!\big((\mtx{W}_I \mtx{X}_I - \mtx{Y}_{\text{FT}})^\top (\mtx{W}_I \mtx{X}_I - \mtx{Y}_{\text{FT}})\big) \\
&= \frac{1}{2} \Tr\!\big(\mtx{X}_I^\top \mtx{W}_I^\top \mtx{W}_I \mtx{X}_I
 - \mtx{X}_I^\top \mtx{W}_I^\top \mtx{Y}_{\text{FT}} \\
&\quad - \mtx{Y}_{\text{FT}}^\top \mtx{W}_I \mtx{X}_I
 + \mtx{Y}_{\text{FT}}^\top \mtx{Y}_{\text{FT}}\big) \\
&= \underbrace{\frac{1}{2} \matnorm{\mtx{W}_I \mtx{X}_I}^2}_{(\star)}
 \underbrace{- \Tr(\mtx{Y}_{\text{FT}}^\top \mtx{W}_I \mtx{X}_I)}_{=\ \mathcal{L}_{\text{CL}}}
 + \underbrace{\frac{1}{2} \matnorm{\mtx{Y}_{\text{FT}}}^2}_{\text{constant in }\mtx{W}_I}.
\end{align*}
\paragraph{A clarification on the trace--least-squares relationship.} As the expansion above shows, $\frac{1}{2}\matnorm{\mtx{W}_I \mtx{X}_I - \mtx{Y}_{\text{FT}}}^2$ is \emph{not} equal to $-\Tr(\mtx{Y}_{\text{FT}}^\top \mtx{W}_I \mtx{X}_I)$ up to a constant alone; the two differ by the data-dependent quadratic term $(\star)=\frac{1}{2}\matnorm{\mtx{W}_I\mtx{X}_I}^2$. Hence minimizing the pure trace functional $-\Tr(\mtx{Y}_{\text{FT}}^\top \mtx{W}_I \mtx{X}_I)$ in isolation is unbounded below and is \emph{not} equivalent to minimizing the least-squares objective.

What we use throughout the rest of the analysis is the \emph{least-squares} reformulation
\begin{equation}
\label{app:eq:ls_reformulation_clarified}
\min_{\mtx{W}_I}\ \tfrac{1}{2}\matnorm{\mtx{W}_I \mtx{X}_I - \mtx{Y}_{\text{FT}}}^2
\;=\; \min_{\mtx{W}_I}\ \tfrac{1}{2}\matnorm{\mtx{W}_I \mtx{X}_I}^2 \;-\; \Tr(\mtx{Y}_{\text{FT}}^\top \mtx{W}_I \mtx{X}_I) \;+\; \text{const},
\end{equation}
which differs from the original linearized MMCL term by the additive data-dependent quadratic $(\star)$. This extra term can be viewed as the natural \emph{data-dependent quadratic regularization} that arises whenever a bilinear similarity is matched against a fixed target $\mtx{Y}_{\text{FT}}$ under a Frobenius surrogate, and it is precisely what makes the resulting problem a well-posed matrix least-squares program with closed-form solutions. All subsequent closed-form solutions (Theorem~\ref{app:thm:unified_framework}) and dynamic-teacher analyses (\S\ref{app:sec:wma_theory}) are derived from this least-squares objective; the original $-\Tr(\cdot)$ form is recovered by dropping $(\star)$, but our results are unaffected because they follow from the least-squares program.

\subsection{Unified Framework for Contrastive Finetuning: Proofs and Details}
\label{app:sec:unified_framework_details}

Our proofs rely on the following lemma for gradient descent on a matrix quadratic program.

\begin{applemma}[Gradient Descent for Matrix Quadratic Programs]
\label{app:lem:matrix_gd}
Let $\mathcal{Q}: \mathbb{R}^{p \times d} \to \mathbb{R}^{p \times d}$ be a positive semi-definite (PSD) linear operator and $\mathbf{P} \in \mathbb{R}^{p \times d}$. Consider the quadratic objective
\begin{equation}
f(\mathbf{W}) = \frac{1}{2} \langle \mathbf{W}, \mathcal{Q}(\mathbf{W}) \rangle_F - \langle \mathbf{P}, \mathbf{W} \rangle_F,
\end{equation}
where $\langle \cdot, \cdot \rangle_F$ denotes the Frobenius inner product. Let $\|\mathcal{Q}\|_{\mathrm{op}}$ denote the operator norm of $\mathcal{Q}$ induced by the Frobenius norm. If $\mathbf{P} \in \mathrm{Range}(\mathcal{Q})$, then gradient descent initialized at $\mathbf{W}_0$ with step size $\gamma \in (0, 2/\|\mathcal{Q}\|_{\mathrm{op}})$ converges to
\begin{equation}
\mathbf{W}_{\infty} = (\mathbf{I} - \Pi_{\mathcal{Q}})(\mathbf{W}_0) + \mathcal{Q}^{+}(\mathbf{P}),
\end{equation}
where $\Pi_{\mathcal{Q}}$ is the orthogonal projector onto $\mathrm{Range}(\mathcal{Q})$ and $\mathcal{Q}^{+}$ is the Moore-Penrose pseudoinverse of $\mathcal{Q}$.
\end{applemma}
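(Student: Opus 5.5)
Because $\mathcal{Q}$ is a self-adjoint PSD operator on the finite-dimensional Hilbert space $(\mathbb{R}^{p\times d},\langle\cdot,\cdot\rangle_F)$, I will diagonalize it and analyze gradient descent eigen-direction by eigen-direction. The gradient of $f$ is $\nabla f(\mathbf{W}) = \mathcal{Q}(\mathbf{W}) - \mathbf{P}$, so the iteration reads
\[
\mathbf{W}_{k+1} = \mathbf{W}_k - \gamma\big(\mathcal{Q}(\mathbf{W}_k) - \mathbf{P}\big).
\]
Set $\mathbf{W}^\star := \mathcal{Q}^{+}(\mathbf{P})$. Since $\mathbf{P}\in\mathrm{Range}(\mathcal{Q})$, we have $\mathcal{Q}(\mathbf{W}^\star) = \Pi_{\mathcal{Q}}(\mathbf{P}) = \mathbf{P}$. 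Hence $\mathbf{W}^\star$ is a stationary point, and it lies in $\mathrm{Range}(\mathcal{Q})$ because $\mathrm{Range}(\mathcal{Q}^{+}) = \mathrm{Range}(\mathcal{Q})$ for self-adjoint $\mathcal{Q}$.

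\textbf{Error recursion.} Define $\mathbf{E}_k := \mathbf{W}_k - \mathbf{W}^\star$. Subtracting the fixed-point identity gives
\[
\mathbf{E}_{k+1} = (\mathcal{I} - \gamma\mathcal{Q})(\mathbf{E}_k), \qquad \mathbf{E}_k = (\mathcal{I}-\gamma\mathcal{Q})^k(\mathbf{E}_0).
\]
\textbf{Splitting into two components.} Decompose $\mathbf{E}_0 = \Pi_{\mathcal{Q}}(\mathbf{E}_0) + (\mathcal{I}-\Pi_{\mathcal{Q}})(\mathbf{E}_0)$, using the orthogonal decomposition $\mathrm{Range}(\mathcal{Q}) \oplus \ker(\mathcal{Q})$, which is valid since $\mathcal{Q}$ is self-adjoint. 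Both subspaces are invariant under $\mathcal{I}-\gamma\mathcal{Q}$.
\begin{itemize}
\item On $\ker(\mathcal{Q})$ the map acts as the identity, so the kernel component of the error never changes.
\item On $\mathrm{Range}(\mathcal{Q})$, the eigenvalues $\mu_i>0$ of $\mathcal{Q}$ satisfy $\mu_i \le \|\mathcal{Q}\|_{\mathrm{op}}$. The condition $\gamma\in(0,2/\|\mathcal{Q}\|_{\mathrm{op}})$ then gives $|1-\gamma\mu_i|<1$ for every $i$. Since there are finitely many eigenvalues, the restricted operator has spectral norm $\max_i|1-\gamma\mu_i|<1$, so this component decays geometrically to zero.
\end{itemize}
Therefore $\mathbf{E}_k \to (\mathcal{I}-\Pi_{\mathcal{Q}})(\mathbf{E}_0)$.

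\textbf{Identifying the limit.} Because $\mathbf{W}^\star\in\mathrm{Range}(\mathcal{Q})$, we have $(\mathcal{I}-\Pi_{\mathcal{Q}})(\mathbf{W}^\star)=0$, and so $(\mathcal{I}-\Pi_{\mathcal{Q}})(\mathbf{E}_0) = (\mathcal{I}-\Pi_{\mathcal{Q}})(\mathbf{W}_0)$. It follows that
\[
\mathbf{W}_k \to \mathbf{W}^\star + (\mathcal{I}-\Pi_{\mathcal{Q}})(\mathbf{W}_0),
\]
which is the claimed limit $\mathbf{W}_\infty$.

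\textbf{Main obstacle.} The step needing the most care is establishing self-adjointness of $\mathcal{Q}$, on which the orthogonal decomposition and the spectral argument both rest. I read ``PSD linear operator'' as meaning self-adjoint with $\langle \mathbf{W},\mathcal{Q}(\mathbf{W})\rangle_F\ge 0$, and I would state this convention explicitly. It also holds for the operators used later, such as $\mathbf{W}\mapsto \mathbf{W}\mathbf{X}_I\mathbf{X}_I^\top$, which is self-adjoint with respect to $\langle\cdot,\cdot\rangle_F$. Once this convention is fixed, the rest is routine.
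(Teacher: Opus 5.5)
Your proof is correct and follows essentially the same route as the paper. Both split $\mathbb{R}^{p\times d}=\mathrm{Range}(\mathcal{Q})\oplus\mathrm{Null}(\mathcal{Q})$, note that the null-space component never moves, and show that the range component contracts to $\mathcal{Q}^{+}(\mathbf{P})$ under $\gamma\in(0,2/\|\mathcal{Q}\|_{\mathrm{op}})$. The differences are cosmetic: you center the iteration at $\mathbf{W}^\star=\mathcal{Q}^{+}(\mathbf{P})$ and use the eigenvalue bound $|1-\gamma\mu_i|<1$ directly, where the paper applies the Banach fixed-point theorem to the affine range dynamics. Your explicit self-adjointness convention also makes precise what the paper's ``since $\mathcal{Q}$ is PSD'' orthogonal decomposition tacitly assumes.
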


\begin{proof}
The gradient of $f$ is given by $\nabla f(\mathbf{W}) = \mathcal{Q}(\mathbf{W}) - \mathbf{P}$, yielding the gradient descent update
\begin{equation}
\mathbf{W}_{t+1} = \mathbf{W}_t - \gamma (\mathcal{Q}(\mathbf{W}_t) - \mathbf{P}).
\end{equation}

Since $\mathcal{Q}$ is PSD, we have the orthogonal decomposition
\begin{equation}
\mathbb{R}^{p \times d} = \mathrm{Range}(\mathcal{Q}) \oplus \mathrm{Null}(\mathcal{Q}).
\end{equation}
Let $\Pi_{\mathcal{Q}}$ and $\Pi_{\mathcal{Q}^{\perp}} = \mathbf{I} - \Pi_{\mathcal{Q}}$ denote the orthogonal projectors onto the range and null space of $\mathcal{Q}$, respectively.

\paragraph{Analysis of the null space component.}
Projecting the gradient descent update onto $\mathrm{Null}(\mathcal{Q})$ yields
\begin{align*}
\Pi_{\mathcal{Q}^{\perp}}(\mathbf{W}_{t+1}) &= \Pi_{\mathcal{Q}^{\perp}}(\mathbf{W}_t) - \gamma \Pi_{\mathcal{Q}^{\perp}}(\mathcal{Q}(\mathbf{W}_t)) + \gamma \Pi_{\mathcal{Q}^{\perp}}(\mathbf{P})\\
&= \Pi_{\mathcal{Q}^{\perp}}(\mathbf{W}_t),
\end{align*}
where we used that $\mathcal{Q}(\mathbf{W}_t) \in \mathrm{Range}(\mathcal{Q})$ implies $\Pi_{\mathcal{Q}^{\perp}}(\mathcal{Q}(\mathbf{W}_t)) = \mathbf{0}$, and our assumption $\mathbf{P} \in \mathrm{Range}(\mathcal{Q})$ implies $\Pi_{\mathcal{Q}^{\perp}}(\mathbf{P}) = \mathbf{0}$. Thus, the null space component remains invariant throughout the optimization:
\begin{equation}
\Pi_{\mathcal{Q}^{\perp}}(\mathbf{W}_t) = \Pi_{\mathcal{Q}^{\perp}}(\mathbf{W}_0) \quad \forall t \geq 0.
\end{equation}

\paragraph{Analysis of the range component.}
Let $\mathbf{W}_t' = \Pi_{\mathcal{Q}}(\mathbf{W}_t)$ denote the projection onto $\mathrm{Range}(\mathcal{Q})$. The dynamics of this component follow
\begin{equation}
\mathbf{W}_{t+1}' = (\mathbf{I} - \gamma\mathcal{Q})\mathbf{W}_t' + \gamma\mathbf{P}.
\end{equation}

The restriction of $\mathcal{Q}$ to its range, denoted $\mathcal{Q}_R: \mathrm{Range}(\mathcal{Q}) \to \mathrm{Range}(\mathcal{Q})$, is positive definite (since for any non-zero $x \in \mathrm{Range}(\mathcal{Q})$, we must have $\mathcal{Q}(x) \neq 0$, otherwise $x$ would be in $\mathrm{Null}(\mathcal{Q})$). For $\gamma \in (0, 2/\|\mathcal{Q}\|_{\mathrm{op}})$, the operator $\mathbf{I} - \gamma\mathcal{Q}_R$ has spectral radius less than 1, making it a contraction mapping. By the Banach fixed-point theorem, the sequence $\{\mathbf{W}_t'\}$ converges to the unique fixed point $\mathbf{W}_{\infty}' \in \mathrm{Range}(\mathcal{Q})$ satisfying
\begin{equation}
\mathbf{W}_{\infty}' = (\mathbf{I} - \gamma\mathcal{Q})\mathbf{W}_{\infty}' + \gamma\mathbf{P}.
\end{equation}

Rearranging gives $\mathcal{Q}(\mathbf{W}_{\infty}') = \mathbf{P}$, which has the unique solution $\mathbf{W}_{\infty}' = \mathcal{Q}^{+}(\mathbf{P})$ in $\mathrm{Range}(\mathcal{Q})$.

\paragraph{Synthesis.}
Combining the analyses of both components, we obtain
\begin{align*}
\mathbf{W}_{\infty} &= \lim_{t\to\infty} \left(\Pi_{\mathcal{Q}^{\perp}}(\mathbf{W}_t) + \mathbf{W}_t'\right)\\
&= \Pi_{\mathcal{Q}^{\perp}}(\mathbf{W}_0) + \mathcal{Q}^{+}(\mathbf{P})\\
&= (\mathbf{I} - \Pi_{\mathcal{Q}})(\mathbf{W}_0) + \mathcal{Q}^{+}(\mathbf{P}),
\end{align*}
completing the proof.
\end{proof}

\begin{apptheorem}[Unified Framework for Contrastive Finetuning Solutions (Full Proof)]
\label{app:thm:unified_framework}
Let $\mathcal{P}_I \coloneqq \mtx{X}_I (\mtx{X}_I^\top \mtx{X}_I)^{+} \mtx{X}_I^\top$ denote the orthogonal projection onto the subspace spanned by the finetuning data $\mtx{X}_I$. Consider the general finetuning objective:
\begin{equation}
\mathcal{L}(\mtx{W}_I) = \frac{1}{2} \matnorm{\mtx{W}_I \mtx{X}_I - \mtx{Y}_{\text{FT}}}^2 + \mathcal{R}(\mtx{W}_I)
\end{equation}
where $\mathcal{R}(\mtx{W}_I)$ represents different regularization strategies. Gradient descent initialized at $\mtx{W}_{I}^{0}$ with sufficiently small learning rate converges to the following solutions:
\begin{center}
\vspace{-1.5em}
\small
\setlength{\tabcolsep}{3.5pt}
\resizebox{\columnwidth}{!}{
\begin{tabular}{l|l|l}
\toprule
\textbf{Strategy} & $\mathcal{R}(\mtx{W}_I)$ & \textbf{Solution} \\
\midrule
\makecell[l]{Direct Finetuning} & $0$ & $\mtx{W}_{\text{FT}} = \mtx{W}_{I}^{0} (\mtx{I} - \mathcal{P}_I) + \mtx{Y}_{\text{FT}} \mtx{X}_I^\top (\mtx{X}_I \mtx{X}_I^\top)^{+}$ \\
\midrule
\makecell[l]{$L_2$ Regularization\\(L2-SP~\citep{li2018l2sp})} & $\frac{\lambda}{2} \matnorm{\mtx{W}_I - \mtx{W}_{I}^{0}}^2$ & $\mtx{W}_{L_2} = (\mtx{Y}_{\text{FT}}\mtx{X}_I^\top + \lambda \mtx{W}_{I}^{0})(\mtx{X}_I \mtx{X}_I^\top + \lambda \mtx{I})^{-1}$ \\
\midrule
\makecell[l]{Self-Distillation\\(SD~\citep{furlanello2018ban_sd})} & $\frac{\lambda}{2} \matnorm{\mtx{W}_I \mtx{X}_I - \mtx{W}_{I}^{0} \mtx{X}_I}^2$ & $\mtx{W}_{SD} = \mtx{W}_{I}^{0}(\mtx{I} - \frac{1}{1+\lambda}\mathcal{P}_I) + \frac{1}{1+\lambda}\mtx{Y}_{\text{FT}} \mtx{X}_I^\top (\mtx{X}_I \mtx{X}_I^\top)^{+}$ \\
\bottomrule
\end{tabular}}
\end{center}
Here, $^+$ denotes the Moore-Penrose pseudoinverse and $\lambda > 0$ is the regularization parameter.
\end{apptheorem}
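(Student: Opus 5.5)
The plan is to handle all three strategies with Lemma~\ref{app:lem:matrix_gd}. For each regularizer I will write $\mathcal{L}$, up to an additive constant, as $\frac{1}{2}\langle \mtx{W}_I,\mathcal{Q}(\mtx{W}_I)\rangle_F - \langle \mathbf{P},\mtx{W}_I\rangle_F$. I will then check that $\mathbf{P}\in\mathrm{Range}(\mathcal{Q})$, identify $\Pi_{\mathcal{Q}}$ and $\mathcal{Q}^+$, and read off $\mathbf{W}_\infty$.

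\textbf{The right-multiplication operator.} All three cases use operators of the form $\mathcal{Q}(\mtx{W}) = \mtx{W}\mtx{A}$ with $\mtx{A}$ symmetric PSD. Such a $\mathcal{Q}$ is self-adjoint and PSD in the Frobenius inner product, and $\|\mathcal{Q}\|_{\mathrm{op}} = \lambda_{\max}(\mtx{A})$; this gives the admissible step-size range. Its range is $\{\mtx{M}: \mtx{M} = \mtx{M}\Pi_{\mtx{A}}\}$, where $\Pi_{\mtx{A}}=\mtx{A}\mtx{A}^+$ is the projector onto $\mathrm{range}(\mtx{A})$. Hence $\Pi_{\mathcal{Q}}(\mtx{W}) = \mtx{W}\Pi_{\mtx{A}}$ and $\mathcal{Q}^+(\mathbf{P}) = \mathbf{P}\mtx{A}^+$.

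The key identity I need is that for $\mtx{A}=\mtx{X}_I\mtx{X}_I^\top$,
\[
\mtx{A}\mtx{A}^+ = \mathcal{P}_I = \mtx{X}_I(\mtx{X}_I^\top\mtx{X}_I)^+\mtx{X}_I^\top .
\]
This holds because $\mathrm{range}(\mtx{X}_I\mtx{X}_I^\top)=\mathrm{range}(\mtx{X}_I)$ and both sides are orthogonal projectors onto that space. I regard this identification of range and pseudoinverse as the main technical step. Everything after it is substitution.

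\textbf{Direct FT.} Here $\mathcal{Q}(\mtx{W})=\mtx{W}\mtx{X}_I\mtx{X}_I^\top$ and $\mathbf{P}=\mtx{Y}_{\text{FT}}\mtx{X}_I^\top$. Since $\mtx{X}_I^\top\mathcal{P}_I=\mtx{X}_I^\top$, we get $\mathbf{P}\mathcal{P}_I=\mathbf{P}$, so $\mathbf{P}\in\mathrm{Range}(\mathcal{Q})$. The lemma then gives
\[
\mtx{W}_{\text{FT}} = \mtx{W}_I^0(\mtx{I}-\mathcal{P}_I) + \mtx{Y}_{\text{FT}}\mtx{X}_I^\top(\mtx{X}_I\mtx{X}_I^\top)^+ .
\]

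\textbf{$L_2$ regularization.} Here $\mathcal{Q}(\mtx{W})=\mtx{W}(\mtx{X}_I\mtx{X}_I^\top+\lambda\mtx{I})$ and $\mathbf{P}=\mtx{Y}_{\text{FT}}\mtx{X}_I^\top+\lambda\mtx{W}_I^0$. Since $\lambda>0$, $\mathcal{Q}$ is positive definite, so $\Pi_{\mathcal{Q}}=\mathrm{id}$ and the range condition holds trivially. The limit is the unique minimizer $\mathbf{P}(\mtx{X}_I\mtx{X}_I^\top+\lambda\mtx{I})^{-1}$, independent of initialization.

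\textbf{Static SD.} Expanding the regularizer gives $\mathcal{Q}(\mtx{W})=(1+\lambda)\mtx{W}\mtx{X}_I\mtx{X}_I^\top$ and $\mathbf{P}=\mtx{Y}_{\text{FT}}\mtx{X}_I^\top+\lambda\mtx{W}_I^0\mtx{X}_I\mtx{X}_I^\top$. Both summands of $\mathbf{P}$ are invariant under right multiplication by $\mathcal{P}_I$, so the range condition holds. Applying $\mathcal{Q}^+(\mathbf{P})=\frac{1}{1+\lambda}\mathbf{P}(\mtx{X}_I\mtx{X}_I^\top)^+$ and the key identity turns the second summand into $\frac{\lambda}{1+\lambda}\mtx{W}_I^0\mathcal{P}_I$. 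Adding $\mtx{W}_I^0(\mtx{I}-\mathcal{P}_I)$ and using $1-\frac{\lambda}{1+\lambda}=\frac{1}{1+\lambda}$ collapses the result to
\[
\mtx{W}_I^0\Big(\mtx{I}-\tfrac{1}{1+\lambda}\mathcal{P}_I\Big)+\tfrac{1}{1+\lambda}\mtx{Y}_{\text{FT}}\mtx{X}_I^\top(\mtx{X}_I\mtx{X}_I^\top)^+ .
\]

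The "sufficiently small learning rate" in the statement is the lemma's condition $\gamma<2/\|\mathcal{Q}\|_{\mathrm{op}}$ with the $\lambda_{\max}$ just computed.
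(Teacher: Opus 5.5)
Your proposal is correct and follows the paper's route: both write each objective in the quadratic form of Lemma~\ref{app:lem:matrix_gd} with $\mathcal{Q}(\mtx{W})=\mtx{W}\mtx{A}$, verify $\mathbf{P}\in\mathrm{Range}(\mathcal{Q})$, and read off the preserved null-space component plus $\mathcal{Q}^{+}(\mathbf{P})$. The differences are minor. You also push the $L_2$ case through the lemma, where the paper instead solves the first-order condition of the strongly convex objective. You also spell out the identity $\mtx{X}_I\mtx{X}_I^\top(\mtx{X}_I\mtx{X}_I^\top)^{+}=\mathcal{P}_I$ and the explicit step-size bound $\gamma<2/\lambda_{\max}(\mtx{A})$, both of which the paper leaves implicit.
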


\begin{proof}
Let $\mtx{C}_I = \mtx{X}_I \mtx{X}_I^\top$.
\paragraph{Direct Finetuning.}
The objective is $\mathcal{L}(\mtx{W}_I) = \frac{1}{2}\matnorm{\mtx{W}_I \mtx{X}_I - \mtx{Y}_{\text{FT}}}^2$. We rewrite this in the quadratic form of Lemma \ref{app:lem:matrix_gd}:
\begin{align*}
\mathcal{L}(\mtx{W}_I) &= \frac{1}{2} \langle \mtx{W}_I \mtx{X}_I - \mtx{Y}_{\text{FT}}, \mtx{W}_I \mtx{X}_I - \mtx{Y}_{\text{FT}} \rangle_F \\
&= \frac{1}{2} \langle \mtx{W}_I, \mtx{W}_I (\mtx{X}_I \mtx{X}_I^\top) \rangle_F - \langle \mtx{W}_I, \mtx{Y}_{\text{FT}} \mtx{X}_I^\top \rangle_F + \frac{1}{2}\matnorm{\mtx{Y}_{\text{FT}}}^2 \\
&= \frac{1}{2} \langle \mtx{W}_I, \mtx{W}_I \mtx{C}_I \rangle_F - \langle \mtx{W}_I, \mtx{Y}_{\text{FT}} \mtx{X}_I^\top \rangle_F + \text{const.}
\end{align*}
This matches the form $f(\mtx{W}) = \frac{1}{2} \langle \mtx{W}, \mathcal{Q}(\mtx{W}) \rangle_F - \langle \mathbf{P}, \mtx{W} \rangle_F$ with $\mathcal{Q}(\mtx{W}) = \mtx{W}\mtx{C}_I$ and $\mathbf{P} = \mtx{Y}_{\text{FT}}\mtx{X}_I^\top$.

The operator $\mathcal{Q}$ is linear and positive semi-definite, as $\langle \mtx{W}_I, \mathcal{Q}(\mtx{W}_I) \rangle_F = \matnorm{\mtx{W}_I\mtx{X}_I}^2 \ge 0$. The condition $\mathbf{P} \in \text{Range}(\mathcal{Q})$ holds because the rows of $\mathbf{P} = \mtx{Y}_{\text{FT}}\mtx{X}_I^\top$ are linear combinations of the rows of $\mtx{X}_I^\top$, which form the row space of $\mtx{C}_I$.

By Lemma \ref{app:lem:matrix_gd}, gradient descent converges to $\mtx{W}_{\infty} = \Pi_{\mathcal{Q}^{\perp}}(\mtx{W}_I^0) + \mathcal{Q}^{+}(\mathbf{P})$.
\begin{enumerate}
    \item \textbf{Null Space Component:} The null space of $\mathcal{Q}$ consists of matrices $\mtx{A}$ such that $\mathcal{Q}(\mtx{A}) = \mtx{A}\mtx{C}_I = \mathbf{0}$. This holds if and only if the rows of $\mtx{A}$ are in the null space of $\mtx{C}_I$. The orthogonal projector onto this component of the initial matrix $\mtx{W}_I^0$ is $\Pi_{\mathcal{Q}^{\perp}}(\mtx{W}_I^0) = \mtx{W}_I^0(\mtx{I} - \mathcal{P}_I)$, where $\mathcal{P}_I = \mtx{C}_I \mtx{C}_I^+$ is the projector onto the row space of $\mtx{X}_I$. This component is preserved.
    
    \item \textbf{Range Component:} The pseudoinverse $\mathcal{Q}^{+}$ finds the minimum Frobenius norm solution to $\mathcal{Q}(\mtx{W}) = \mathbf{P}$ that lies in $\text{Range}(\mathcal{Q})$. This is the solution to $\mtx{W}\mtx{C}_I = \mtx{Y}_{\text{FT}}\mtx{X}_I^\top$, which is $\mathcal{Q}^{+}(\mathbf{P}) = (\mtx{Y}_{\text{FT}}\mtx{X}_I^\top)\mtx{C}_I^{+}$.
\end{enumerate}
Combining the components gives the final solution:
\[ \mtx{W}_{\text{FT}} = \mtx{W}_{I}^{0}(\mtx{I} - \mathcal{P}_I) + \mtx{Y}_{\text{FT}} \mtx{X}_I^\top (\mtx{X}_I \mtx{X}_I^\top)^{+}. \]

\paragraph{$L_2$ Regularization.}
The objective $\mathcal{L}(\mtx{W}_I) = \frac{1}{2}\matnorm{\mtx{W}_I \mtx{X}_I - \mtx{Y}_{\text{FT}}}^2 + \frac{\lambda}{2} \matnorm{\mtx{W}_I - \mtx{W}_{I}^{0}}^2$. This objective is strongly convex for $\lambda > 0$. The unique minimizer is found by setting the gradient to zero:
\begin{align*}
\nabla_{\mtx{W}_I} \mathcal{L} = (\mtx{W}_I \mtx{X}_I - \mtx{Y}_{\text{FT}})\mtx{X}_I^\top &+ \lambda(\mtx{W}_I - \mtx{W}_{I}^{0}) = 0 \\
\rightarrow \mtx{W}_I \mtx{X}_I \mtx{X}_I^\top + \lambda \mtx{W}_I &= \mtx{Y}_{\text{FT}}\mtx{X}_I^\top + \lambda \mtx{W}_{I}^{0} \\
\rightarrow \mtx{W}_I (\mtx{X}_I \mtx{X}_I^\top + \lambda \mtx{I}) &= \mtx{Y}_{\text{FT}}\mtx{X}_I^\top + \lambda \mtx{W}_{I}^{0}
\end{align*}
Since $\mtx{X}_I \mtx{X}_I^\top$ is PSD, the matrix $(\mtx{X}_I \mtx{X}_I^\top + \lambda \mtx{I})$ is positive definite and thus invertible. The solution is:
\[ \mtx{W}_{L_2} = (\mtx{Y}_{\text{FT}}\mtx{X}_I^\top + \lambda \mtx{W}_{I}^{0})(\mtx{X}_I \mtx{X}_I^\top + \lambda \mtx{I})^{-1}. \]
A more detailed analysis of the limit behavior of this solution as $\lambda \to 0$ and $\lambda \to \infty$ is provided in \S\ref{app:sec:L2_analysis}.

\paragraph{Self-Distillation.}
The objective is $\mathcal{L}(\mtx{W}_I) = \frac{1}{2} \matnorm{\mtx{W}_I \mtx{X}_I - \mtx{Y}_{\text{FT}}}^2 + \frac{\lambda}{2} \matnorm{\mtx{W}_I \mtx{X}_I - \mtx{W}_{I}^{0} \mtx{X}_I}^2$. Expanding and grouping terms reveals the quadratic structure:
\begin{align*}
&\mathcal{L}(\mtx{W}_I) = \frac{1}{2} \matnorm{\mtx{W}_I \mtx{X}_I}^2 - \Tr(\mtx{Y}_{\text{FT}}^\top \mtx{W}_I \mtx{X}_I) + \frac{1}{2} \matnorm{\mtx{Y}_{\text{FT}}}^2 \\
&\quad + \frac{\lambda}{2} \matnorm{\mtx{W}_I \mtx{X}_I}^2 - \lambda \Tr((\mtx{W}_{I}^{0} \mtx{X}_I)^\top \mtx{W}_I \mtx{X}_I) + \frac{\lambda}{2} \matnorm{\mtx{W}_{I}^{0} \mtx{X}_I}^2 \\
&= \frac{1+\lambda}{2}\matnorm{\mtx{W}_I\mtx{X}_I}^2 - \Tr((\mtx{Y}_{\text{FT}}^\top + \lambda (\mtx{W}_{I}^{0} \mtx{X}_I)^\top) \mtx{W}_I \mtx{X}_I) + \text{const.} \\
&= \frac{1+\lambda}{2}\langle \mtx{W}_I, \mtx{W}_I\mtx{C}_I \rangle_F - \langle \mtx{W}_I, \mtx{Y}_{\text{FT}}\mtx{X}_I^\top + \lambda \mtx{W}_I^0 \mtx{C}_I \rangle_F + \text{const.}
\end{align*}
This matches the form of Lemma \ref{app:lem:matrix_gd} with $\mathcal{Q}_{SD}(\mtx{W}_I) = (1+\lambda)\mtx{W}_I\mtx{C}_I$ and $\mathbf{P}_{SD} = \mtx{Y}_{\text{FT}}\mtx{X}_I^\top + \lambda\mtx{W}_{I}^{0}\mtx{C}_I$.

The operator $\mathcal{Q}_{SD}$ is PSD. Its range and null space are identical to those of $\mathcal{Q}$ from the Direct Finetuning case. The terms $\mtx{Y}_{\text{FT}}\mtx{X}_I^\top$ and $\lambda\mtx{W}_{I}^{0}\mtx{C}_I$ are both in $\text{Range}(\mathcal{Q}_{SD})$ (as shown before for $\mtx{Y}_{\text{FT}}\mtx{X}_I^\top$, and $\mtx{W}_I^0 \mtx{C}_I$ by definition). Thus, their sum $\mathbf{P}_{SD}$ is also in the range.

We apply Lemma \ref{app:lem:matrix_gd} to find the limit $\mtx{W}_{\infty} = \Pi_{\mathcal{Q}_{SD}^{\perp}}(\mtx{W}_I^0) + \mathcal{Q}_{SD}^{+}(\mathbf{P}_{SD})$.
\begin{enumerate}
    \item \textbf{Null Space Component:} $\text{Null}(\mathcal{Q}_{SD}) = \text{Null}(\mathcal{Q})$, so the invariant component is again $\Pi_{\mathcal{Q}_{SD}^{\perp}}(\mtx{W}_I^0) = \mtx{W}_I^0(\mtx{I} - \mathcal{P}_I)$.
    \item \textbf{Range Component:} The pseudoinverse is $\mathcal{Q}_{SD}^{+} = \frac{1}{1+\lambda}\mathcal{Q}^{+}$, where $\mathcal{Q}^{+}$ corresponds to the direct finetuning case. Applying it to $\mathbf{P}_{SD}$:
    \begin{align*}
    \mathcal{Q}_{SD}^{+}(\mathbf{P}_{SD}) &= \frac{1}{1+\lambda} \mathcal{Q}^{+} \left(\mtx{Y}_{\text{FT}}\mtx{X}_I^\top + \lambda\mtx{W}_{I}^{0}\mtx{C}_I\right) \\
    &= \frac{1}{1+\lambda} \left( (\mtx{Y}_{\text{FT}}\mtx{X}_I^\top)\mtx{C}_I^{+} + \lambda \mathcal{Q}^{+}(\mathcal{Q}(\mtx{W}_I^0)) \right) \\
    &= \frac{1}{1+\lambda} \left( (\mtx{Y}_{\text{FT}}\mtx{X}_I^\top)\mtx{C}_I^{+} + \lambda \Pi_{\mathcal{Q}}(\mtx{W}_I^0) \right) \\
    &= \frac{1}{1+\lambda} \left( \mtx{Y}_{\text{FT}}\mtx{X}_I^\top (\mtx{X}_I \mtx{X}_I^\top)^{+} + \lambda \mtx{W}_I^0\mathcal{P}_I \right).
    \end{align*}
\end{enumerate}
Combining the components for the final solution $\mtx{W}_{SD}$:
\begin{align*}
\mtx{W}_{SD} &= \mtx{W}_I^0(\mtx{I} - \mathcal{P}_I) + \frac{\lambda}{1+\lambda}\mtx{W}_I^0\mathcal{P}_I + \frac{1}{1+\lambda}\mtx{Y}_{\text{FT}}\mtx{X}_I^\top (\mtx{X}_I \mtx{X}_I^\top)^{+} \\
&= \mtx{W}_I^0\left(\mtx{I} - \mathcal{P}_I + \frac{\lambda}{1+\lambda}\mathcal{P}_I\right) + \frac{1}{1+\lambda}\mtx{Y}_{\text{FT}}\mtx{X}_I^\top (\mtx{X}_I \mtx{X}_I^\top)^{+} \\
&= \mtx{W}_I^0\left(\mtx{I} - \frac{1}{1+\lambda}\mathcal{P}_I\right) + \frac{1}{1+\lambda}\mtx{Y}_{\text{FT}} \mtx{X}_I^\top (\mtx{X}_I \mtx{X}_I^\top)^{+}.
\end{align*}
This completes the proof.
\end{proof}

\subsection{Geometric Interpretation of Solutions}
\label{app:sec:geometric_interpretation}

The closed-form solutions presented in Theorem~\ref{app:thm:unified_framework} provide a geometric understanding of how different finetuning strategies modify pretrained representations. We decompose the solution for $\mtx{W}_I$ into components acting on the subspace spanned by the finetuning data $\mtx{X}_I$ (parallel component) and its orthogonal complement (orthogonal component).

\paragraph{Direct Finetuning.} The solution $\mtx{W}_{\text{FT}}$ is a sum of two orthogonal parts:
\begin{enumerate*}[label=\textbf{(\arabic*)}]
    \item $\mtx{W}_{I}^{0} (\mtx{I} - \mathcal{P}_I)$: This is the projection of the pretrained weights onto the orthogonal complement of the finetuning data subspace ($\mathrm{Null}(\mtx{X}_I^\top)$). This component preserves the action of $\mtx{W}_{I}^{0}$ on data vectors orthogonal to the finetuning examples.
    \item $\mtx{Y}_{\text{FT}} \mtx{X}_I^\top (\mtx{X}_I \mtx{X}_I^\top)^{+}$: This is the minimum-norm solution that fits the new contrastive task within the finetuning data subspace. This component lies entirely within the range of $\mtx{X}_I^\top$.
\end{enumerate*}

\textit{Interpretation:} Direct finetuning completely replaces (forgets) any pretrained knowledge related to features present in the finetuning data, substituting it with the new task-specific solution. It only preserves knowledge in directions entirely unrelated to the finetuning examples.

\paragraph{$L_2$ Regularization.} The solution $\mtx{W}_{L_2}$ is the standard matrix ridge regression solution. It creates a complex blend of the new task solution and the initial weights. There is no clean separation of orthogonal and parallel components as in direct finetuning or self-distillation.
The key insight is that $L_2$ regularization modifies the data covariance matrix $\mtx{X}_I \mtx{X}_I^\top$ by adding $\lambda \mtx{I}$, which acts as a \emph{ridge} that prevents overfitting by shrinking the solution along all eigendirections of the data. Unlike direct finetuning and self-distillation, which primarily modify weights in the subspace spanned by $\mtx{X}_I$, $L_2$ regularization affects all directions in the weight space, blending the old and new across the entire parameter space.

\paragraph{Detailed Analysis of the $L_2$ Regularization Solution.}
\label{app:sec:L2_analysis}
The solution for $L_2$ regularization is given by:
\[
\mtx{W}_{L_2} = \left( \mtx{Y}_{\text{FT}}\mtx{X}_I^\top + \lambda \mtx{W}_{I}^{0} \right) \left( \mtx{X}_I \mtx{X}_I^\top + \lambda \mtx{I} \right)^{-1}.
\]
To analyze its behavior, we consider the eigendecomposition of the data covariance matrix $\mtx{C}_I \coloneqq \mtx{X}_I \mtx{X}_I^\top$. Since $\mtx{C}_I$ is a real, symmetric, positive semi-definite (PSD) matrix, it has an eigendecomposition $\mtx{C}_I = \mtx{U} \mathbf{\Lambda} \mtx{U}^\top$, where $\mtx{U}$ is an orthogonal matrix of eigenvectors and $\mathbf{\Lambda}$ is a diagonal matrix of non-negative eigenvalues.
Using this decomposition, the inverse term in the solution becomes:
\[
(\mtx{C}_I + \lambda \mtx{I})^{-1} = (\mtx{U} \mathbf{\Lambda} \mtx{U}^\top + \lambda \mtx{U} \mtx{U}^\top)^{-1} = (\mtx{U}(\mathbf{\Lambda} + \lambda \mtx{I})\mtx{U}^\top)^{-1} = \mtx{U}(\mathbf{\Lambda} + \lambda \mtx{I})^{-1}\mtx{U}^\top.
\]
The matrix $(\mathbf{\Lambda} + \lambda \mtx{I})$ is diagonal with entries $\lambda_k + \lambda$, so its inverse has entries $1/(\lambda_k + \lambda)$.

\paragraph{Analysis of the Limit as $\lambda \to 0$.}
Let $r = \text{rank}(\mtx{C}_I)$. We partition the eigenvectors $\mtx{U}$ and eigenvalues $\mathbf{\Lambda}$ into components corresponding to non-zero and zero eigenvalues. Let $\mtx{U}_r \in \R^{d_I \times r}$ contain eigenvectors for $r$ positive eigenvalues ($\mathbf{\Lambda}_r$), and $\mtx{U}_0 \in \R^{d_I \times (d_I-r)}$ for zero eigenvalues.
The projectors onto the range and null space of $\mtx{C}_I$ are $\mathcal{P}_{\text{range}} = \mtx{U}_r \mtx{U}_r^\top$ and $\mathcal{P}_{\text{null}} = \mtx{U}_0 \mtx{U}_0^\top$, respectively. Note that $\mathcal{P}_{\text{range}} = \mathcal{P}_I$ and $\mathcal{P}_{\text{null}} = \mtx{I} - \mathcal{P}_I$.

The inverse term can be split:
\[
(\mtx{C}_I + \lambda \mtx{I})^{-1} = \mtx{U}_r (\mathbf{\Lambda}_r + \lambda \mtx{I}_r)^{-1} \mtx{U}_r^\top + \frac{1}{\lambda} \mtx{U}_0 \mtx{U}_0^\top.
\]
Substituting this back into $\mtx{W}_{L_2}$:
\begin{align*}
\mtx{W}_{L_2} &= \left( \mtx{Y}_{\text{FT}}\mtx{X}_I^\top + \lambda \mtx{W}_{I}^{0} \right) \left[ \mtx{U}_r (\mathbf{\Lambda}_r + \lambda \mtx{I}_r)^{-1} \mtx{U}_r^\top + \frac{1}{\lambda} \mtx{U}_0 \mtx{U}_0^\top \right] \\
&= \underbrace{\left( \mtx{Y}_{\text{FT}}\mtx{X}_I^\top + \lambda \mtx{W}_{I}^{0} \right) \mtx{U}_r (\mathbf{\Lambda}_r + \lambda \mtx{I}_r)^{-1} \mtx{U}_r^\top}_{\text{Term 1}} \\
&+ \underbrace{\left( \mtx{Y}_{\text{FT}}\mtx{X}_I^\top + \lambda \mtx{W}_{I}^{0} \right) \frac{1}{\lambda} \mtx{U}_0 \mtx{U}_0^\top}_{\text{Term 2}}.
\end{align*}
For Term 2, since $\mtx{X}_I^\top \mtx{U}_0 = \mathbf{0}$ (columns of $\mtx{U}_0$ are in the null space of $\mtx{C}_I$), it simplifies to:
\[
\text{Term 2} = \frac{1}{\lambda} \mtx{Y}_{\text{FT}} \underbrace{\mtx{X}_I^\top \mtx{U}_0}_{\mathbf{0}} \mtx{U}_0^\top + \mtx{W}_{I}^{0} \mtx{U}_0 \mtx{U}_0^\top = \mtx{W}_{I}^{0} \mathcal{P}_{\text{null}} = \mtx{W}_{I}^{0} (\mtx{I} - \mathcal{P}_I).
\]
As $\lambda \to 0$, Term 1 converges to:
\[
\lim_{\lambda\to 0} \text{Term 1} = \left( \mtx{Y}_{\text{FT}}\mtx{X}_I^\top \right) \mtx{U}_r \mathbf{\Lambda}_r^{-1} \mtx{U}_r^\top = \mtx{Y}_{\text{FT}}\mtx{X}_I^\top \mtx{C}_I^+,
\]
where $\mtx{C}_I^+ = \mtx{U}_r \mathbf{\Lambda}_r^{-1} \mtx{U}_r^\top$ is the Moore-Penrose pseudoinverse of $\mtx{C}_I$.
Combining the limits of both terms, we get:
\[
\lim_{\lambda\to 0} \mtx{W}_{L_2} = \mtx{Y}_{\text{FT}}\mtx{X}_I^\top (\mtx{X}_I \mtx{X}_I^\top)^+ + \mtx{W}_{I}^{0} (\mtx{I} - \mathcal{P}_I).
\]
This is precisely the direct finetuning solution, $\mtx{W}_{\text{FT}}$.

\paragraph{Analysis of the Limit as $\lambda \to \infty$.}
For the limit as $\lambda \to \infty$, we factor out $\lambda$:
\begin{align*}
\mtx{W}_{L_2} &= \left( \mtx{Y}_{\text{FT}}\mtx{X}_I^\top + \lambda \mtx{W}_{I}^{0} \right) \frac{1}{\lambda} \left( \frac{1}{\lambda} \mtx{C}_I + \mtx{I} \right)^{-1} \\
&= \left( \frac{1}{\lambda} \mtx{Y}_{\text{FT}}\mtx{X}_I^\top + \mtx{W}_{I}^{0} \right) \left( \frac{1}{\lambda} \mtx{C}_I + \mtx{I} \right)^{-1}.
\end{align*}
As $\lambda \to \infty$, the term $\frac{1}{\lambda} \to 0$. Therefore, the expression converges to:
\[
\lim_{\lambda\to\infty} \mtx{W}_{L_2} = \left( \mathbf{0} + \mtx{W}_{I}^{0} \right) \left( \mathbf{0} + \mtx{I} \right)^{-1} = \mtx{W}_{I}^{0}.
\]
Thus, the regularization parameter $\lambda$ smoothly interpolates the solution between two meaningful extremes: pure task adaptation and pure preservation of pretrained weights.

\paragraph{Self-Distillation.} The solution $\mtx{W}_{SD}$ provides the most sophisticated and effective compromise. We can rewrite it to reveal its structure:
\begin{align*}
\mtx{W}_{SD} &= \mtx{W}_{I}^{0} - \frac{1}{1+\lambda} \mtx{W}_{I}^{0}\mathcal{P}_I + \frac{1}{1+\lambda} \mtx{Y}_{\text{FT}} \mtx{X}_I^\top (\mtx{X}_I \mtx{X}_I^\top)^{+} \\
&= \mtx{W}_{I}^{0}(\mtx{I} - \mathcal{P}_I) + \mtx{W}_{I}^{0}\mathcal{P}_I - \frac{1}{1+\lambda} \mtx{W}_{I}^{0}\mathcal{P}_I + \frac{1}{1+\lambda} \left(\mtx{Y}_{\text{FT}} \mtx{X}_I^\top (\mtx{X}_I \mtx{X}_I^\top)^{+} \right) \\
&= \underbrace{\mtx{W}_{I}^{0} (\mtx{I} - \mathcal{P}_I)}_{\substack{\text{Component orthogonal to finetuning data} \\ \textbf{(Preserved)}}} + \underbrace{\frac{\lambda}{1+\lambda} \left( \mtx{W}_{I}^{0} \mathcal{P}_I \right) + \frac{1}{1+\lambda} \left( \mtx{Y}_{\text{FT}} \mtx{X}_I^\top (\mtx{X}_I \mtx{X}_I^\top)^{+} \right)}_{\substack{\text{Component within finetuning data subspace} \\ \textbf{(Convex Combination)}}}
\end{align*}
\textit{Interpretation}: Self-Distillation operates with surgical precision:
\begin{enumerate*}
    \item \textbf{Outside the finetuning subspace}, it acts as an identity function, preserving the components of the pretrained model that are irrelevant to the new task.
    \item \textbf{Inside the finetuning subspace}, it does not discard the pretrained knowledge. Instead, it computes a convex combination of the projected pretrained weights and the optimal solution for the new contrastive task. The hyperparameter $\lambda$ smoothly controls this trade-off.
\end{enumerate*}
This demonstrates that Self-Distillation achieves a ``best of both worlds'' scenario: preserving general capabilities while adapting to new information where necessary.

\subsection{Dynamic Self-Distillation: WMA Details and Convergence}
\label{app:sec:wma_theory}
We extend the analysis of static self-distillation to a dynamic teacher, specifically a Weighted Moving Average (WMA) teacher, which adapts its regularization throughout training. This section provides the detailed definitions, dynamics, and convergence proofs.

\begin{appdefinition}[SD--WMA Objective (Repeated from Main Text)]
\label{app:def:sd-wma_objective}
At step $t$, the student weights $\mtx{W}_{I}^{t}$ solve
\begin{equation}
\mathcal{L}_{\text{SD-WMA}}(\mtx{W}_I)
\;=\;
\frac{1}{2}\,\matnorm{\mtx{W}_I \mtx{X}_I - \mtx{Y}_{\text{FT}}}^2
\;+\;
\frac{\lambda}{2}\,\matnorm{\mtx{W}_I \mtx{X}_I - \mtx{W}_{\text{Teacher}}^{t-1}\mtx{X}_I}^2,
\qquad
\text{initialized from } \mtx{W}_{I}^{t-1}.
\end{equation}
\end{appdefinition}

\begin{appdefinition}[Weighted Moving Average (WMA) Teacher (Repeated from Main Text)]
\label{app:def:wma_teacher}
Let the normalized time grid be
\[
\tau_k \;=\; \frac{k+c_1}{T+c_2}\in(0,1), \qquad c_1,c_2>0.
\]
Choose any nonnegative \emph{weighting kernel} $\kappa:[0,1]\to\mathbb{R}_{\ge 0}$ and define unnormalized weights
$
\alpha_k \;=\; \kappa(\tau_k)
$
The \emph{online} normalization and teacher recursion are
\begin{equation}
\omega_t \;=\; \frac{\alpha_t}{\sum_{j=0}^{t}\alpha_j},
\qquad
\mtx{W}_{\text{Teacher}}^{t} \;=\; (1-\omega_t)\,\mtx{W}_{\text{Teacher}}^{t-1} \;+\; \omega_t\,\mtx{W}_I^{t},
\qquad
\mtx{W}_{\text{Teacher}}^{0} \;=\; \mtx{W}_I^{0}.
\label{app:eq:wma_recursion}
\end{equation}
\end{appdefinition}

\begin{appremark}[Teacher as a normalized history average]
\label{app:rem:normalized_average}
Unrolling~\eqref{app:eq:wma_recursion} yields a normalized convex average of the student's history:
\[
\mtx{W}_{\text{Teacher}}^{t}
\;=\;
\sum_{k=0}^{t}
\underbrace{\frac{\alpha_k}{\sum_{j=0}^{t}\alpha_j}}_{\omega_{k\mid t}}
\,\mtx{W}_I^{k},
\qquad
\omega_{k\mid t}\ge 0,\quad \sum_{k=0}^{t}\omega_{k\mid t}=1.
\]
Thus the teacher is an \emph{expectation} with respect to the discrete distribution $\mathrm{Categorical}(\omega_{0\mid t},\ldots,\omega_{t\mid t})$:
$\ \mtx{W}_{\text{Teacher}}^{t}=\mathbb{E}_{K\sim \omega_{\cdot\mid t}}[\mtx{W}_I^{K}]$.
\end{appremark}

\subsubsection{WMA vs. EMA Teachers}
\label{app:sec:wma_vs_ema}

This section contrasts the proposed \emph{Weighted Moving Average} (WMA) teacher with the standard \emph{Exponential Moving Average} (EMA), which underlies mean-teacher approaches.

\paragraph{EMA (mean-teacher).}
EMA maintains an exponentially decaying average:
\begin{equation}
\mtx{W}_{\text{EMA}}^{t}
\;=\;
\rho\,\mtx{W}_{\text{EMA}}^{t-1} + (1-\rho)\,\mtx{W}_{I}^{t},
\qquad
\rho\in(0,1),\ \ \mtx{W}_{\text{EMA}}^{0}=\mtx{W}_{I}^{0}.
\label{app:eq:ema_const}
\end{equation}
Unrolling this recursion gives a geometric kernel over \emph{lag}:
\[
\mtx{W}_{\text{EMA}}^{t}
\;=\;
\rho^{t}\mtx{W}_{I}^{0} + (1-\rho)\sum_{k=1}^{t}\rho^{\,t-k}\,\mtx{W}_{I}^{k}
\;=\;
\sum_{k=0}^{t}\underbrace{\omega^{\text{EMA}}_{k\mid t}}_{\text{depends on }t-k}\,\mtx{W}_{I}^{k},
\]
with $\omega^{\text{EMA}}_{0\mid t}=\rho^t$, $\omega^{\text{EMA}}_{k\mid t}=(1-\rho)\rho^{\,t-k}$ for $k\ge 1$, and $\sum_{k=0}^t \omega^{\text{EMA}}_{k\mid t}=1$. The kernel is \emph{stationary in lag}: weights depend only on recency $t-k$.

\paragraph{WMA (normalized-time kernel).}
In contrast, WMA assigns weights via a \emph{kernel over normalized time} $\tau_k=(k+c_1)/(T+c_2)$:
\[
\mtx{W}_{\text{WMA}}^{t}
\;=\;
\sum_{k=0}^{t}\underbrace{\omega^{\text{WMA}}_{k\mid t}}_{\propto\ \kappa(\tau_k)}\,\mtx{W}_{I}^{k},
\qquad
\omega^{\text{WMA}}_{k\mid t}=\frac{\alpha_k}{\sum_{j=0}^{t}\alpha_j},\quad
\alpha_k=\kappa(\tau_k).
\]
Here the kernel is \emph{position-aware} in absolute (normalized) time, not just lag. The symmetric Beta kernel ($\beta_1=\beta_2$) permits simultaneous emphasis of \emph{both} endpoints (early stability and late convergence), a pattern that is \emph{not} attainable with any single-parameter EMA.

\paragraph{Key differences.}
\begin{itemize}
\item \textbf{Shape control.} EMA imposes a monotone geometric decay from the present; WMA can be early-peaked, late-peaked, flat (uniform), bimodal (e.g., arcsine), etc.
\item \textbf{Invariance to schedule granularity.} WMA weights are defined on normalized time: if the training is retimed or step granularity changes while preserving the path over $[0,1]$, the kernel $\kappa$ need not be retuned. EMA depends on the absolute decay $\rho$ and typically requires retuning when $T$ or logging cadence changes.
\item \textbf{Endpoint behavior.} With $\beta_1=\beta_2=\tfrac12$ (arcsine), WMA places substantial weight near $k\approx 0$ and $k\approx t$, preserving early information \emph{and} emphasizing late iterates; EMA cannot simultaneously upweight both ends.
\item \textbf{Recovering classical averages.} Choosing $\kappa$ uniform (Beta$(1,1)$) yields the simple running average (Polyak/Ruppert; SWA~\citep{izmailov2018averaging}). EMA cannot realize an exactly uniform window without time-varying $\rho_t$.
\item \textbf{Online normalization.} Both EMA and WMA are online and convex at each step; WMA's $\omega_t=\alpha_t/\sum_{j\le t}\alpha_j$ admits arbitrary nonnegative $\alpha_t$ induced by $\kappa$.
\end{itemize}

\paragraph{Mean-teacher within the WMA recursion.}
In SD--WMA (Definition~\ref{app:def:wma_teacher}), the step weight is $\omega_t=\alpha_t/\sum_{j=0}^{t}\alpha_j$, which is generally time-varying. To \emph{recover EMA exactly} with constant $\omega\equiv 1-\rho$, choose any $\alpha_0>0$ and set, for $t\ge 1$,
\begin{equation}
\alpha_t \;=\; \frac{\omega}{(1-\omega)^{t}}\,\alpha_0
\qquad\Longleftrightarrow\qquad
\alpha_t \;=\; \frac{1-\rho}{\rho^{\,t}}\,\alpha_0,
\label{app:eq:alpha_for_constant_omega_wma}
\end{equation}
which yields $\omega_t\equiv\omega$ and makes the WMA recursion identical to~\eqref{app:eq:ema_const}.
If one insists on $\alpha_t=\kappa(\tau_t)$ with $\tau_t=(t+c_1)/(T+c_2)$, EMA corresponds to an exponential kernel over normalized time, $\ \kappa(\tau)=C\,(1-\omega)^{-(T+c_2)\tau+c_1'}$, for suitable constants $C,c_1'$ (fixed per run), which reproduces $\omega_t\equiv\omega$ via~\eqref{app:eq:alpha_for_constant_omega_wma}.

\newpage
\subsubsection{The Persistent Regularizer of the WMA Teacher}
\label{app:sec:non_vanishing_regularizer}
A key advantage of the WMA teacher over the more common EMA teacher lies in the dynamics of the regularization it provides.
The self-distillation loss, $\mathcal{L}_{\text{SD}}$, induces a \textbf{regularizing gradient field},
$\mathbf{g}_R(\mtx{W}_I^t) \coloneqq \nabla_{\mtx{W}_I}\mathcal{L}_{\text{SD}}(\mtx{W}_T^t,\mtx{W}_I^t)$,
that pulls the student towards the teacher. The persistence of this field is critical for preventing the student from
over-specializing on the finetuning task.

\paragraph{The Vanishing Regularizer of EMA.}
An EMA teacher is a low-pass filter of the student's trajectory:
$\mtx{W}_{\text{EMA}}^{t} = \rho\,\mtx{W}_{\text{EMA}}^{t-1} + (1-\rho)\,\mtx{W}_{I}^{t}$.
As the student's updates converge ($\|\mtx{W}_I^{t+1} - \mtx{W}_I^t\|_F \to 0$),
the teacher necessarily converges to the student's final parameters
($\lim_{t \to \infty} \|\mtx{W}_{\text{EMA}}^t - \mtx{W}_I^t\|_F = 0$).
Consequently, any regularizer based on the teacher--student gap vanishes:
\[
\lim_{t \to \infty} \|\mathbf{g}_R(\mtx{W}_I^t; \mtx{W}_{\text{EMA}}^t)\|_F = 0,
\]
allowing the optimization to be dominated entirely by the task loss near the end of training.

\paragraph{The Finite-Horizon Persistence of WMA.}
The WMA teacher is a weighted average of the \emph{entire} student history:
$\mtx{W}_{\text{WMA}}^{t} = \sum_{k=0}^{t} \omega_{k\mid t} \mtx{W}_I^{k}$.
For any \emph{fixed finite} run of length $T$, any kernel with $\alpha_0>0$ yields $\omega_{0\mid T}>0$,
so $\mtx{W}_I^0$ contributes nontrivially to $\mtx{W}_{\text{WMA}}^{T}$.
Thus, when the student has moved away from initialization ($\mtx{W}_I^{T}\neq \mtx{W}_I^{0}$),
the teacher can remain separated from the final iterate, yielding a nontrivial regularizing gradient at step $T$.
Note that under online normalization, the relative weight on any fixed $k$ typically satisfies $\omega_{k\mid t}\to 0$ as $t\to\infty$;
therefore any ``non-vanishing'' claim must be understood as a finite-horizon / late-training statement.
In particular, under infinite-horizon training with online normalization and a convergent student ($\mtx{W}_I^t\to \mtx{W}_I^\infty$),
one typically has $\mtx{W}_{\text{WMA}}^{t}\to \mtx{W}_I^\infty$, so the teacher--student gap can vanish asymptotically.

\begin{apptheorem}[Finite-Horizon Persistence of the WMA Regularizer]
\label{app:thm:non_vanishing_regularizer}
Fix a training horizon $T$ and let the WMA teacher at the end of training be
\[
\mtx{W}_{\mathrm{WMA}}^{T}=\sum_{k=0}^{T}\omega_{k\mid T}\,\mtx{W}_I^{k},
\qquad \omega_{k\mid T}\ge 0,\ \ \sum_{k=0}^{T}\omega_{k\mid T}=1,
\]
with $\omega_{0\mid T}>0$ (true for any kernel with $\alpha_0>0$).
Assume the student trajectory is \emph{monotone along a single direction} in parameter space:
there exist a unit matrix $\mtx{U}$ with $\|\mtx{U}\|_F=1$ and scalars
$0=a_0\le a_1\le\cdots\le a_T$ such that
\[
\mtx{W}_I^{k}=\mtx{W}_I^{0}+a_k\,\mtx{U}\qquad \forall k\in\{0,\ldots,T\}.
\]
Then, if $\mtx{W}_I^{T}\neq \mtx{W}_I^{0}$, the teacher remains \emph{strictly behind} the final iterate and
\[
\|\mtx{W}_I^{T}-\mtx{W}_{\mathrm{WMA}}^{T}\|_F
\ \ge\
\omega_{0\mid T}\,\|\mtx{W}_I^{T}-\mtx{W}_I^{0}\|_F
\ >\ 0.
\]

Moreover, suppose the self-distillation loss is \emph{locally approximately quadratic} in the teacher--student parameter difference
(e.g., second-order KL expansion) so that for $W$ near $\mtx{W}_I^{T}$,
\[
\mathbf{g}_R(W;\mtx{W}_{\mathrm{WMA}}^{T})
=\nabla_{W}\mathcal{L}_{\mathrm{SD}}(\mtx{W}_{\mathrm{WMA}}^{T},W)
\approx \mtx{F}_T\,(W-\mtx{W}_{\mathrm{WMA}}^{T}),
\]
and define the terminal linearization residual
\[
\mtx{r}_T \;\coloneqq\;
\mathbf{g}_R(\mtx{W}_I^{T};\mtx{W}_{\mathrm{WMA}}^{T})
- \mtx{F}_T(\mtx{W}_I^{T}-\mtx{W}_{\mathrm{WMA}}^{T}).
\]
If the curvature satisfies $\mtx{F}_T\succeq \mu\,\mathbf{I}$ on $\mathrm{span}\{\mtx{U}\}$ for some $\mu>0$,
then the regularizing gradient at the end of training admits the \emph{signal-minus-error} lower bound
\[
\|\mathbf{g}_R(\mtx{W}_I^{T};\mtx{W}_{\mathrm{WMA}}^{T})\|_F
\ \ge\
\mu\,\omega_{0\mid T}\,\|\mtx{W}_I^{T}-\mtx{W}_I^{0}\|_F
\;-\;\|\mtx{r}_T\|_F.
\]
In particular, if $\|\mtx{r}_T\|_F < \mu\,\omega_{0\mid T}\,\|\mtx{W}_I^{T}-\mtx{W}_I^{0}\|_F$, then
$\mathbf{g}_R(\mtx{W}_I^{T};\mtx{W}_{\mathrm{WMA}}^{T})\neq \mathbf{0}$.

In contrast, for an EMA teacher with fixed $\rho\in(0,1)$,
if $\mtx{W}_I^{t}\to \mtx{W}_I^\infty$ then $\|\mtx{W}_{\mathrm{EMA}}^{t}-\mtx{W}_I^{t}\|_F\to 0$,
and thus the corresponding regularizing gradient vanishes asymptotically.
\end{apptheorem}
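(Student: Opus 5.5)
The plan has three parts: the geometric gap bound, the gradient lower bound, and the EMA contrast. Each is a short calculation once the one-direction trajectory assumption is used.

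\textbf{Step 1 (gap bound).} Substitute $\mtx{W}_I^{k}=\mtx{W}_I^{0}+a_k\mtx{U}$ into the convex average of Theorem~\ref{app:thm:non_vanishing_regularizer}, using $\sum_k\omega_{k\mid T}=1$. This gives $\mtx{W}_{\mathrm{WMA}}^{T}=\mtx{W}_I^{0}+\bar a\,\mtx{U}$ with $\bar a=\sum_k\omega_{k\mid T}a_k$. Hence $\mtx{W}_I^{T}-\mtx{W}_{\mathrm{WMA}}^{T}=(a_T-\bar a)\mtx{U}$, and since $\|\mtx{U}\|_F=1$ its norm is $|a_T-\bar a|$. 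Monotonicity gives $a_k\le a_T$ for every $k$, and $a_0=0$, so
\[
a_T-\bar a=\sum_{k}\omega_{k\mid T}(a_T-a_k)\ \ge\ \omega_{0\mid T}(a_T-a_0)=\omega_{0\mid T}\,a_T,
\]
because every other summand is nonnegative. Finally $a_T=\|\mtx{W}_I^{T}-\mtx{W}_I^{0}\|_F$, which is strictly positive when $\mtx{W}_I^{T}\neq\mtx{W}_I^{0}$; together with $\omega_{0\mid T}>0$ this gives strict positivity. This step also shows the teacher is strictly behind the final iterate, in the sense that $\bar a<a_T$.

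\textbf{Step 2 (gradient bound).} Write $\mathbf{g}_R=\mtx{F}_T(\mtx{W}_I^{T}-\mtx{W}_{\mathrm{WMA}}^{T})+\mtx{r}_T$ and apply the reverse triangle inequality, so $\|\mathbf{g}_R\|_F\ge\|\mtx{F}_T\Delta\|_F-\|\mtx{r}_T\|_F$ with $\Delta=(a_T-\bar a)\mtx{U}\in\mathrm{span}\{\mtx{U}\}$.

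This is the step I expect to be the main obstacle. The hypothesis $\mtx{F}_T\succeq\mu\mathbf{I}$ on $\mathrm{span}\{\mtx{U}\}$ has to be read so that it controls $\|\mtx{F}_T\Delta\|_F$. I would read it as $\langle\Delta,\mtx{F}_T\Delta\rangle_F\ge\mu\|\Delta\|_F^2$. Cauchy--Schwarz then gives $\|\mtx{F}_T\Delta\|_F\|\Delta\|_F\ge\langle\Delta,\mtx{F}_T\Delta\rangle_F\ge\mu\|\Delta\|_F^2$, hence $\|\mtx{F}_T\Delta\|_F\ge\mu\|\Delta\|_F$. This works without needing $\mtx{F}_T$ to be symmetric or to preserve the span. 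Combining with Step 1 yields the stated signal-minus-error bound, and the nonvanishing corollary follows immediately.

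\textbf{Step 3 (EMA contrast).} From the EMA recursion, $\mtx{W}_{\mathrm{EMA}}^{t}-\mtx{W}_I^{t}=\rho(\mtx{W}_{\mathrm{EMA}}^{t-1}-\mtx{W}_I^{t-1})-\rho(\mtx{W}_I^{t}-\mtx{W}_I^{t-1})$. Setting $e_t=\|\mtx{W}_{\mathrm{EMA}}^{t}-\mtx{W}_I^{t}\|_F$ and $\delta_t=\|\mtx{W}_I^{t}-\mtx{W}_I^{t-1}\|_F$, this gives $e_t\le\rho e_{t-1}+\rho\delta_t$. Convergence of the student gives $\delta_t\to0$.

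A standard lemma for linear recursions with $\rho<1$ and a vanishing forcing term then gives $e_t\to0$. To prove it, unroll to $e_t\le\rho^t e_0+\sum_{s\le t}\rho^{t-s+1}\delta_s$ and split the sum at $t/2$: the early terms are bounded by $\rho^{t/2}\sup_s\delta_s/(1-\rho)$ and the late terms by $\sup_{s\ge t/2}\delta_s/(1-\rho)$, and both tend to $0$. Finally, if $\mathbf{g}_R$ is continuous and vanishes when teacher and student coincide (as the gradient of any SD loss minimized at equality does), then $\mathbf{g}_R\to0$ along the trajectory.
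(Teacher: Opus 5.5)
Your proposal is correct and matches the paper's proof step for step: the same substitution giving $\mtx{W}_I^{T}-\mtx{W}_{\mathrm{WMA}}^{T}=\sum_k\omega_{k\mid T}(a_T-a_k)\mtx{U}$, and the same triangle-inequality-plus-Cauchy--Schwarz argument for the gradient bound. For the EMA part, the paper only asserts that a stable linear filter of a convergent signal converges to the same limit. Your unrolled recursion $e_t\le\rho e_{t-1}+\rho\delta_t$, together with the continuity assumption you add on $\mathbf{g}_R$, proves this explicitly and also justifies the step from a vanishing gap to a vanishing gradient, which the paper leaves unargued.
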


\begin{proof}[Proof Sketch]
\textbf{Step 1: Teacher--student gap under monotone 1D motion.}
Under the assumed form $\mtx{W}_I^{k}=\mtx{W}_I^{0}+a_k\mtx{U}$,
\[
\mtx{W}_{\mathrm{WMA}}^{T}
=\sum_{k=0}^{T}\omega_{k\mid T}(\mtx{W}_I^{0}+a_k\mtx{U})
=\mtx{W}_I^{0}+\Big(\sum_{k=0}^{T}\omega_{k\mid T}a_k\Big)\mtx{U}.
\]
Hence
\[
\mtx{W}_I^{T}-\mtx{W}_{\mathrm{WMA}}^{T}
=\Big(a_T-\sum_{k=0}^{T}\omega_{k\mid T}a_k\Big)\mtx{U}
=\sum_{k=0}^{T}\omega_{k\mid T}(a_T-a_k)\mtx{U}.
\]
Since $a_T\ge a_k$ and $\omega_{k\mid T}\ge 0$, all terms are nonnegative multiples of the same direction,
so there is no cancellation and
\[
\|\mtx{W}_I^{T}-\mtx{W}_{\mathrm{WMA}}^{T}\|_F
=\sum_{k=0}^{T}\omega_{k\mid T}(a_T-a_k)
\ \ge\ \omega_{0\mid T}(a_T-a_0)
=\omega_{0\mid T}\|\mtx{W}_I^{T}-\mtx{W}_I^{0}\|_F.
\]

\textbf{Step 2: Gradient lower bound.}
Let $\Delta_T=\mtx{W}_I^{T}-\mtx{W}_{\mathrm{WMA}}^{T}\in \mathrm{span}\{\mtx{U}\}$.
By definition, $\mathbf{g}_R(\mtx{W}_I^{T};\mtx{W}_{\mathrm{WMA}}^{T}) = \mtx{F}_T\Delta_T + \mtx{r}_T$,
so by the triangle inequality,
\[
\|\mathbf{g}_R(\mtx{W}_I^{T};\mtx{W}_{\mathrm{WMA}}^{T})\|_F
\ \ge\ \|\mtx{F}_T\Delta_T\|_F - \|\mtx{r}_T\|_F.
\]
Using $\mtx{F}_T\succeq \mu I$ on $\mathrm{span}\{\mtx{U}\}$ and Cauchy--Schwarz,
$\|\mtx{F}_T\Delta_T\|_F \ge \mu\|\Delta_T\|_F$, hence
\[
\|\mathbf{g}_R(\mtx{W}_I^{T};\mtx{W}_{\mathrm{WMA}}^{T})\|_F
\ \ge\ \mu\|\Delta_T\|_F - \|\mtx{r}_T\|_F
\ \ge\ \mu\,\omega_{0\mid T}\,\|\mtx{W}_I^{T}-\mtx{W}_I^{0}\|_F - \|\mtx{r}_T\|_F.
\]

\textbf{Step 3: EMA vanishing.}
If $\mtx{W}_I^{t}\to \mtx{W}_I^\infty$, then the EMA recursion is a stable linear filter of a convergent signal,
implying $\mtx{W}_{\mathrm{EMA}}^{t}\to \mtx{W}_I^\infty$ and thus
$\|\mtx{W}_{\mathrm{EMA}}^{t}-\mtx{W}_I^{t}\|_F\to 0$.
\end{proof}

\textbf{Conclusion.}
This theorem characterizes a \emph{finite-horizon} effect: even when the student's updates become small near the end of training,
a WMA teacher can remain separated from the terminal iterate at step $T$ because it retains positive mass on earlier states,
yielding a regularizing gradient whose magnitude is lower bounded by a \emph{signal minus approximation error} term.
Over an \emph{infinite} horizon with online normalization (where $\omega_{0\mid t}\to 0$) and a convergent student,
the teacher--student gap can vanish, consistent with bias-free convergence results proved later for SD--WMA in the task subspace.

\subsubsection{Convergence Analysis}

We first state the single-step solution and then derive global convergence in the task subspace.

\paragraph{From static to dynamic SD as a sequence of quadratic problems.}
Before stating the single-step solution, it is useful to make explicit what the SD--WMA scheme is doing as an optimization process. At each step $t$, the SD--WMA objective
\[
\mathcal{L}_{\text{SD-WMA}}(\mtx{W}_I)
\;=\;
\tfrac{1}{2}\,\matnorm{\mtx{W}_I \mtx{X}_I - \mtx{Y}_{\text{FT}}}^2
\;+\;
\tfrac{\lambda}{2}\,\matnorm{\mtx{W}_I \mtx{X}_I - \mtx{W}_{\text{Teacher}}^{t-1}\mtx{X}_I}^2
\]
is \emph{still a static quadratic problem in $\mtx{W}_I$}: it has the exact same algebraic form as the static self-distillation objective analyzed in Theorem~\ref{app:thm:unified_framework}, with two simple substitutions relative to the static-SD case:
\begin{itemize}[leftmargin=*,itemsep=0pt,topsep=0pt]
\item the \textbf{anchor in the distillation term} is replaced from the fixed pretrained weights $\mtx{W}_I^0$ to the current WMA teacher $\mtx{W}_{\text{Teacher}}^{t-1}$;
\item the \textbf{initialization of gradient descent} is replaced from $\mtx{W}_I^0$ to the previous student iterate $\mtx{W}_I^{t-1}$.
\end{itemize}
These two substitutions touch \emph{different} parts of the closed-form solution given by Lemma~\ref{app:lem:matrix_gd}:
\begin{itemize}[leftmargin=*,itemsep=0pt,topsep=0pt]
\item changing the \emph{anchor} $\mtx{W}_I^0 \to \mtx{W}_{\text{Teacher}}^{t-1}$ modifies the \textbf{range component} of the solution (the part lying in $\mathrm{range}(\mtx{X}_I)$, i.e., the task subspace), since this anchor enters the data term $\mathbf{P}_{SD} = \mtx{Y}_{\text{FT}}\mtx{X}_I^\top + \lambda\,\mtx{W}_{\text{Teacher}}^{t-1}\mtx{C}_I$;
\item changing the \emph{initialization} $\mtx{W}_I^0 \to \mtx{W}_I^{t-1}$ modifies the \textbf{null-space component} $\Pi_{\mathcal{Q}^\perp}(\cdot)$ which Lemma~\ref{app:lem:matrix_gd} preserves from the initialization unchanged; this is what allows the orthogonal pretrained knowledge to be carried forward from one step to the next.
\end{itemize}
The dynamic-teacher scheme is therefore best understood as a sequence of standard static-SD-style quadratic minimizations, with both the anchor and the initialization updated between rounds in a way that affects geometrically separate subspaces. Proposition~\ref{app:prop:SD_WMA_step} below makes this decomposition concrete in closed form.

\begin{appproposition}[Single-Step Solution]
\label{app:prop:SD_WMA_step}
Let $\mtx{W}^\star_{\text{FT}} \!=\! \mtx{Y}_{\text{FT}}\mtx{X}_I^\top(\mtx{X}_I\mtx{X}_I^\top)^+$ be the minimum-norm solution for the direct finetuning task, and let $\mathcal{P}_I$ be the orthogonal projector onto $\mathrm{range}(\mtx{X}_I)$. The SD--WMA update at step $t$ yields
\begin{equation}
\mtx{W}_{I}^{t}
\;=\;
\mtx{W}_{I}^{t-1}(\mtx{I}-\mathcal{P}_I)
\;+\;
\frac{\lambda}{1+\lambda}\,\mtx{W}_{\text{Teacher}}^{t-1}\mathcal{P}_I
\;+\;
\frac{1}{1+\lambda}\,\mtx{W}^\star_{\text{FT}}.
\end{equation}
\end{appproposition}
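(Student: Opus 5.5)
The plan is to treat the step-$t$ SD--WMA objective in Definition~\ref{app:def:sd-wma_objective} exactly as the static self-distillation case of Theorem~\ref{app:thm:unified_framework}, with the anchor $\mtx{W}_I^0$ replaced by the frozen teacher $\mtx{W}_{\text{Teacher}}^{t-1}$ and the initialization replaced by $\mtx{W}_I^{t-1}$. Within the inner optimization at step $t$ the teacher is a constant, so the objective is a matrix quadratic in $\mtx{W}_I$. We then read off the limit of the inner gradient descent from Lemma~\ref{app:lem:matrix_gd}.

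\textbf{Step 1: write the step-$t$ objective in the form of Lemma~\ref{app:lem:matrix_gd}.} Let $\mtx{C}_I=\mtx{X}_I\mtx{X}_I^\top$. Expanding the two Frobenius norms as in the static-SD proof gives
\[
\mathcal{L}_{\text{SD-WMA}}(\mtx{W}_I)=\tfrac{1+\lambda}{2}\langle \mtx{W}_I,\mtx{W}_I\mtx{C}_I\rangle_F-\langle \mtx{W}_I,\mathbf{P}_t\rangle_F+\text{const},
\]
with $\mathcal{Q}_{SD}(\mtx{W})=(1+\lambda)\mtx{W}\mtx{C}_I$ and $\mathbf{P}_t=\mtx{Y}_{\text{FT}}\mtx{X}_I^\top+\lambda\mtx{W}_{\text{Teacher}}^{t-1}\mtx{C}_I$.

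\textbf{Step 2: check the hypotheses of the lemma.} $\mathcal{Q}_{SD}$ is PSD. Both summands of $\mathbf{P}_t$ have rows in the row space of $\mtx{C}_I$, so $\mathbf{P}_t\in\mathrm{Range}(\mathcal{Q}_{SD})$. The projector onto $\mathrm{Null}(\mathcal{Q}_{SD})$ is right-multiplication by $\mtx{I}-\mathcal{P}_I$, using $\mathcal{P}_I=\mtx{C}_I\mtx{C}_I^+$. The pseudoinverse satisfies $\mathcal{Q}_{SD}^+=\tfrac{1}{1+\lambda}\mathcal{Q}^+$ with $\mathcal{Q}^+(\mtx{A})=\mtx{A}\mtx{C}_I^+$.

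\textbf{Step 3: apply the lemma with initialization $\mtx{W}_I^{t-1}$.} The null-space component is $\mtx{W}_I^{t-1}(\mtx{I}-\mathcal{P}_I)$. The range component is
\[
\tfrac{1}{1+\lambda}\big(\mtx{Y}_{\text{FT}}\mtx{X}_I^\top\mtx{C}_I^+ +\lambda\mtx{W}_{\text{Teacher}}^{t-1}\mtx{C}_I\mtx{C}_I^+\big)=\tfrac{1}{1+\lambda}\mtx{W}^\star_{\text{FT}}+\tfrac{\lambda}{1+\lambda}\mtx{W}_{\text{Teacher}}^{t-1}\mathcal{P}_I .
\]
Summing the two components gives the claimed formula.

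\textbf{Main obstacle: interpreting ``the update at step $t$''.} The proposition implicitly assumes that the inner problem is solved to convergence, i.e.\ gradient descent with step size below $2/\|\mathcal{Q}_{SD}\|_{\mathrm{op}}$ run to its limit. It also assumes the teacher is frozen at $\mtx{W}_{\text{Teacher}}^{t-1}$ during that inner loop. I would state both assumptions explicitly, consistent with Definition~\ref{app:def:sd-wma_objective}. I would also verify that $\mtx{W}^\star_{\text{FT}}$ lies in the range component, since its rows lie in $\mathrm{range}(\mtx{X}_I)$ and therefore $\mtx{W}^\star_{\text{FT}}\mathcal{P}_I=\mtx{W}^\star_{\text{FT}}$. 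This check confirms that the decomposition into the three terms is orthogonal.
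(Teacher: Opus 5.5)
Your proposal is correct and takes essentially the same route as the paper. Both treat the step-$t$ objective as static self-distillation in which the anchor is replaced by $\mtx{W}_{\text{Teacher}}^{t-1}$ (entering only the range component through $\mathbf{P}_t$) and the initialization by $\mtx{W}_I^{t-1}$ (entering only the preserved null-space component), then read off the limit from Lemma~\ref{app:lem:matrix_gd}. Your explicit computation of $\mathbf{P}_t$, $\mathcal{Q}_{SD}^+$ and the two components spells out what the paper compresses into ``substitute $\mtx{W}_{\text{Teacher}}^{t-1}$ into the $\mtx{W}_{SD}$ formula.'' The assumptions you flag (inner loop run to convergence, teacher held frozen) are the same ones implicit in Definition~\ref{app:def:sd-wma_objective}.
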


\begin{proof}
This proposition is immediate from applying Lemma~\ref{app:lem:matrix_gd} to the objective in Definition~\ref{app:def:sd-wma_objective}. The objective at step $t$ has the same structure as static self-distillation (analyzed in Theorem~\ref{app:thm:unified_framework}), but with the pretrained weights $\mtx{W}_{I}^{0}$ in the regularization term replaced by $\mtx{W}_{\text{Teacher}}^{t-1}$, and the initialization for gradient descent being $\mtx{W}_{I}^{t-1}$.
Specifically, we find the minimizer of:
$\min_{\mtx{W}_I} \frac{1}{2}\matnorm{\mtx{W}_I \mtx{X}_I - \mtx{Y}_{\text{FT}}}^2 + \frac{\lambda}{2}\matnorm{\mtx{W}_I \mtx{X}_I - \mtx{W}_{\text{Teacher}}^{t-1}\mtx{X}_I}^2$
This corresponds to the self-distillation case in Theorem~\ref{app:thm:unified_framework}, where $\mtx{W}_{I}^{0}$ is effectively replaced by $\mtx{W}_{\text{Teacher}}^{t-1}$ for the purpose of defining the fixed regularization target at this step.
The solution form is then directly obtained by substituting $\mtx{W}_{\text{Teacher}}^{t-1}$ for $\mtx{W}_{I}^{0}$ in the $\mtx{W}_{SD}$ formula, which yields:
\[
\mtx{W}_{I}^{t} = \mtx{W}_{I}^{t-1}(\mtx{I} - \mathcal{P}_I) + \frac{1}{1+\lambda}\mtx{Y}_{\text{FT}} \mtx{X}_I^\top (\mtx{X}_I \mtx{X}_I^\top)^{+} + \frac{\lambda}{1+\lambda}\mtx{W}_{\text{Teacher}}^{t-1}\mathcal{P}_I.
\]
Recognizing $\mtx{W}^\star_{\text{FT}} = \mtx{Y}_{\text{FT}}\mtx{X}_I^\top(\mtx{X}_I\mtx{X}_I^\top)^+$, we get the desired result.
\end{proof}

The key advantage over static SD emerges from the teacher's evolution.

\begin{apptheorem}[Bias-Free Convergence in the Task Subspace]
\label{app:thm:sd_wma_convergence}
Let $a=\tfrac{\lambda}{1+\lambda}$ and define the teacher error $\mtx{E}^{t} = (\mtx{W}_{\text{Teacher}}^{t}-\mtx{W}^\star_{\text{FT}})\mathcal{P}_I$. Then for any online weights $\{\omega_t\}$ as in~\eqref{app:eq:wma_recursion}:
\begin{enumerate}[label=(\roman*)]
\item \textbf{Teacher contraction.} $\mtx{E}^{t} = \Big(1-\tfrac{\omega_t}{1+\lambda}\Big)\mtx{E}^{t-1}$.
\item \textbf{Student tracking.} $(\mtx{W}_I^{t}-\mtx{W}^\star_{\text{FT}})\mathcal{P}_I = a\,\mtx{E}^{t-1}$.
\item \textbf{Convergence.} If $\sum_{t\ge 1}\omega_t=\infty$, then $\mtx{W}_{\text{Teacher}}^{t}\mathcal{P}_I \to \mtx{W}^\star_{\text{FT}}$ and $\mtx{W}_I^{t}\mathcal{P}_I \to \mtx{W}^\star_{\text{FT}}$.
\end{enumerate}
\end{apptheorem}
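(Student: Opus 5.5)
The plan is to reduce everything to a scalar linear recursion on the task-subspace component, using Proposition~\ref{app:prop:SD_WMA_step} for the student and the recursion~\eqref{app:eq:wma_recursion} for the teacher.

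\textbf{Step 0: a preliminary identity.} I will first record that $\mtx{W}^\star_{\text{FT}}\mathcal{P}_I=\mtx{W}^\star_{\text{FT}}$. This holds because $\mtx{W}^\star_{\text{FT}}=\mtx{Y}_{\text{FT}}\mtx{X}_I^\top\mtx{C}_I^{+}$, and $\mtx{C}_I^{+}\mathcal{P}_I=\mtx{C}_I^{+}$, since $\mathcal{P}_I=\mtx{C}_I\mtx{C}_I^{+}$ is the projector onto $\mathrm{range}(\mtx{C}_I)=\mathrm{range}(\mtx{X}_I)$. I will also use $\mathcal{P}_I^2=\mathcal{P}_I$ and $(\mtx{I}-\mathcal{P}_I)\mathcal{P}_I=0$.

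\textbf{Step 1: student tracking, part (ii).} Right-multiply the single-step formula of Proposition~\ref{app:prop:SD_WMA_step} by $\mathcal{P}_I$. The null-space term $\mtx{W}_I^{t-1}(\mtx{I}-\mathcal{P}_I)\mathcal{P}_I$ drops out, leaving
\[
\mtx{W}_I^{t}\mathcal{P}_I=a\,\mtx{W}_{\text{Teacher}}^{t-1}\mathcal{P}_I+(1-a)\,\mtx{W}^\star_{\text{FT}}.
\]
Subtracting $\mtx{W}^\star_{\text{FT}}=\mtx{W}^\star_{\text{FT}}\mathcal{P}_I$ from both sides gives $(\mtx{W}_I^{t}-\mtx{W}^\star_{\text{FT}})\mathcal{P}_I=a\,\mtx{E}^{t-1}$.

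\textbf{Step 2: teacher contraction, part (i).} Right-multiply the teacher recursion $\mtx{W}_{\text{Teacher}}^{t}=(1-\omega_t)\mtx{W}_{\text{Teacher}}^{t-1}+\omega_t\mtx{W}_I^{t}$ by $\mathcal{P}_I$ and subtract $\mtx{W}^\star_{\text{FT}}=(1-\omega_t)\mtx{W}^\star_{\text{FT}}+\omega_t\mtx{W}^\star_{\text{FT}}$. This yields
\[
\mtx{E}^{t}=(1-\omega_t)\mtx{E}^{t-1}+\omega_t\,a\,\mtx{E}^{t-1}=\bigl(1-\omega_t(1-a)\bigr)\mtx{E}^{t-1}.
\]
Since $1-a=\tfrac{1}{1+\lambda}$, this is exactly the claimed contraction factor. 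The one point to be careful about is the time indexing: the student at step $t$ uses teacher $t-1$, while the teacher at step $t$ uses student $t$. Keeping these straight is what makes the two recursions chain into a single one.

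\textbf{Step 3: convergence, part (iii).} Unrolling part (i) gives
\[
\mtx{E}^{t}=\prod_{s=1}^{t}\Bigl(1-\tfrac{\omega_s}{1+\lambda}\Bigr)\mtx{E}^{0}.
\]
Because $\omega_s\in[0,1]$, each factor lies in $[\tfrac{\lambda}{1+\lambda},1]$, so all factors are nonnegative. Using $1-x\le e^{-x}$, the product is bounded by $\exp\!\bigl(-\tfrac{1}{1+\lambda}\sum_{s\le t}\omega_s\bigr)$, which tends to $0$ when $\sum_s\omega_s=\infty$. Hence $\|\mtx{E}^t\|_F\to0$, i.e.\ $\mtx{W}_{\text{Teacher}}^{t}\mathcal{P}_I\to\mtx{W}^\star_{\text{FT}}$. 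By part (ii), $\|(\mtx{W}_I^{t}-\mtx{W}^\star_{\text{FT}})\mathcal{P}_I\|_F=a\|\mtx{E}^{t-1}\|_F\to0$ as well.

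\textbf{Where the difficulty lies.} Steps 0–3 are routine algebra once Proposition~\ref{app:prop:SD_WMA_step} is taken as given. I therefore expect the main obstacle to be justifying the divergence hypothesis $\sum_t\omega_t=\infty$ for concrete kernels, and reconciling it with the finite-horizon normalized time $\tau_k$.

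For a kernel bounded above and below on the relevant range, $\omega_t\asymp 1/(t+1)$, so the series diverges like the harmonic series. For a kernel such as $\mathrm{Beta}(0.5,0.5)$, I would argue the same way: because of the offsets $c_1,c_2$, the weights $\kappa(\tau_k)$ stay bounded away from $0$ and $\infty$, which again gives a harmonic lower bound on $\omega_t$. I will state the convergence conditionally on the hypothesis as written, and note these examples only as a remark.
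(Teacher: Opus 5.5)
Your proposal is correct and follows the paper's proof essentially step for step. You project the single-step formula of Proposition~\ref{app:prop:SD_WMA_step} onto $\mathcal{P}_I$ to get student tracking, substitute into the projected teacher recursion to obtain the factor $1-\omega_t/(1+\lambda)$, and unroll the product under $\sum_t\omega_t=\infty$. The differences are cosmetic: you bound the product via $1-x\le e^{-x}$ where the paper asserts an ``iff'', and you treat divergence of $\sum_t\omega_t$ for concrete kernels as a side remark rather than part of the proof. That choice is sensible, because for a fixed horizon $T$ the normalized time $\tau_t$ eventually leaves $(0,1)$ as $t\to\infty$, a tension the paper's own kernel justification also glosses over.
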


\begin{proof}
Let $\mtx{W}_{I,\parallel}^{t} = \mtx{W}_I^{t}\mathcal{P}_I$ and $\mtx{W}_{\text{Teacher},\parallel}^{t} = \mtx{W}_{\text{Teacher}}^{t}\mathcal{P}_I$.
From Proposition~\ref{app:prop:SD_WMA_step}, projecting onto the subspace $\mathrm{range}(\mtx{X}_I)$ gives:
\[
\mtx{W}_{I,\parallel}^{t} = \mtx{W}_{I}^{t-1}(\mtx{I}-\mathcal{P}_I)\mathcal{P}_I + \frac{\lambda}{1+\lambda}\,\mtx{W}_{\text{Teacher}}^{t-1}\mathcal{P}_I + \frac{1}{1+\lambda}\,\mtx{W}^\star_{\text{FT}}\mathcal{P}_I.
\]
Since $(\mtx{I}-\mathcal{P}_I)\mathcal{P}_I = \mathbf{0}$, and $\mtx{W}^\star_{\text{FT}}$ is already in the parallel subspace (by definition), we have $\mtx{W}^\star_{\text{FT}}\mathcal{P}_I = \mtx{W}^\star_{\text{FT}}$.
So,
\begin{equation}
\mtx{W}_{I,\parallel}^{t} = a\,\mtx{W}_{\text{Teacher},\parallel}^{t-1} + (1-a)\,\mtx{W}^\star_{\text{FT}},
\label{app:eq:student_parallel_update}
\end{equation}
where $a=\tfrac{\lambda}{1+\lambda}$.
(ii) Subtracting $\mtx{W}^\star_{\text{FT}}$ from both sides of \eqref{app:eq:student_parallel_update}:
\[
\mtx{W}_{I,\parallel}^{t} - \mtx{W}^\star_{\text{FT}} = a\,\mtx{W}_{\text{Teacher},\parallel}^{t-1} + (1-a)\,\mtx{W}^\star_{\text{FT}} - \mtx{W}^\star_{\text{FT}} = a\,(\mtx{W}_{\text{Teacher},\parallel}^{t-1} - \mtx{W}^\star_{\text{FT}}) = a\,\mtx{E}^{t-1}.
\]
This proves part (ii).

(i) Now consider the teacher recursion (Definition~\ref{app:def:wma_teacher}) projected onto $\mathcal{P}_I$:
\[
\mtx{W}_{\text{Teacher},\parallel}^{t} = (1-\omega_t)\,\mtx{W}_{\text{Teacher},\parallel}^{t-1} + \omega_t\,\mtx{W}_{I,\parallel}^{t}.
\]
Substitute \eqref{app:eq:student_parallel_update} into this:
\[
\mtx{W}_{\text{Teacher},\parallel}^{t} = (1-\omega_t)\,\mtx{W}_{\text{Teacher},\parallel}^{t-1} + \omega_t\,(a\,\mtx{W}_{\text{Teacher},\parallel}^{t-1} + (1-a)\,\mtx{W}^\star_{\text{FT}}).
\]
Rearranging terms to isolate $\mtx{E}^{t} = \mtx{W}_{\text{Teacher},\parallel}^{t} - \mtx{W}^\star_{\text{FT}}$:
\begin{align*}
\mtx{W}_{\text{Teacher},\parallel}^{t} - \mtx{W}^\star_{\text{FT}} &= (1-\omega_t)\,\mtx{W}_{\text{Teacher},\parallel}^{t-1} + \omega_t\,a\,\mtx{W}_{\text{Teacher},\parallel}^{t-1} + \omega_t\,(1-a)\,\mtx{W}^\star_{\text{FT}} - \mtx{W}^\star_{\text{FT}} \\
&= (1-\omega_t+\omega_t a)\,\mtx{W}_{\text{Teacher},\parallel}^{t-1} - (1-\omega_t(1-a))\,\mtx{W}^\star_{\text{FT}} \\
&= (1-\omega_t(1-a))\,(\mtx{W}_{\text{Teacher},\parallel}^{t-1} - \mtx{W}^\star_{\text{FT}}).
\end{align*}
Since $1-a = 1-\frac{\lambda}{1+\lambda} = \frac{1}{1+\lambda}$, we have:
\[
\mtx{E}^{t} = \Big(1-\tfrac{\omega_t}{1+\lambda}\Big)\mtx{E}^{t-1}.
\]
This proves part (i).

(iii) Iterating the recurrence relation from part (i):
\[
\|\mtx{E}^{t}\|_F = \left(\prod_{k=1}^{t}\Big(1-\tfrac{\omega_k}{1+\lambda}\Big)\right)\|\mtx{E}^{0}\|_F.
\]
For $\mtx{E}^{t}$ to converge to $0$, we need the product term to converge to $0$. This occurs if and only if the sum $\sum_{k=1}^{\infty} \frac{\omega_k}{1+\lambda}$ diverges to $\infty$. Since $\lambda > 0$, $1+\lambda$ is a finite constant. Thus, the condition for convergence is $\sum_{k=1}^{\infty} \omega_k = \infty$.
From Definition~\ref{app:def:wma_teacher}, $\omega_t = \frac{\alpha_t}{\sum_{j=0}^t \alpha_j}$. If $\kappa(\tau_t)$ is a continuous function on $[0,1]$ that is non-zero on a set of positive measure, then $\sum_k \alpha_k$ will diverge as $T \to \infty$ (assuming $t$ goes up to $T$), and thus $\sum_k \omega_k$ will diverge. For common kernels like Beta distributions (e.g., arcsine kernel), this condition holds.
Since $\mtx{E}^{t} \to \mathbf{0}$, we have $\mtx{W}_{\text{Teacher}}^{t}\mathcal{P}_I \to \mtx{W}^\star_{\text{FT}}$.
From part (ii), as $\mtx{E}^{t-1} \to \mathbf{0}$, it follows that $(\mtx{W}_I^{t}-\mtx{W}^\star_{\text{FT}})\mathcal{P}_I \to \mathbf{0}$, meaning $\mtx{W}_I^{t}\mathcal{P}_I \to \mtx{W}^\star_{\text{FT}}$.
\end{proof}

\begin{appcorollary}[Linear rate under a bounded step weight]
\label{app:cor:linear_rate}
If $\omega_t \ge \omega_{\min}>0$ for all $t\le T$, then
\[
\|(\mtx{W}_{\text{Teacher}}^{t}-\mtx{W}^\star_{\text{FT}})\mathcal{P}_I\|_F
\;\le\;
\Big(1-\tfrac{\omega_{\min}}{1+\lambda}\Big)^t\,
\|(\mtx{W}_{\text{Teacher}}^{0}-\mtx{W}^\star_{\text{FT}})\mathcal{P}_I\|_F.
\]
Hence the training loss in the task subspace decays at least geometrically to the minimum, whereas static SD converges to a biased point for any fixed $\lambda>0$.
\end{appcorollary}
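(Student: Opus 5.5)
The corollary follows by iterating part (i) of Theorem~\ref{app:thm:sd_wma_convergence} and bounding each contraction factor uniformly. With $\mtx{E}^t = (\mtx{W}_{\text{Teacher}}^{t}-\mtx{W}^\star_{\text{FT}})\mathcal{P}_I$, that theorem gives the exact scalar recursion
\[
\mtx{E}^{t} = \Big(1-\tfrac{\omega_t}{1+\lambda}\Big)\mtx{E}^{t-1}.
\]
Unrolling it from $t$ down to $1$ yields
\[
\|\mtx{E}^t\|_F = \prod_{k=1}^{t}\Big|1-\tfrac{\omega_k}{1+\lambda}\Big|\,\|\mtx{E}^0\|_F.
\]
This is an equality, since the recursion multiplies $\mtx{E}$ by a scalar and the Frobenius norm is absolutely homogeneous.

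The one point needing care is that each factor lies in $[0,1)$ and is dominated by $1-\omega_{\min}/(1+\lambda)$. The online weights satisfy $\omega_k = \alpha_k/\sum_{j\le k}\alpha_j \in [0,1]$ because $\kappa\ge 0$. With $\lambda>0$, this gives $\omega_k/(1+\lambda) \in [0,1)$, so each factor is nonnegative and the absolute values can be dropped. The hypothesis $\omega_k \ge \omega_{\min}$ for $k\le t\le T$ then gives
\[
0 \le 1-\tfrac{\omega_k}{1+\lambda} \le 1-\tfrac{\omega_{\min}}{1+\lambda} < 1,
\]
and multiplying these $t$ bounds gives the stated geometric inequality. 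Part (iii) of the theorem also appears as the $t\to\infty$ shadow of this estimate whenever the lower bound persists for all $t$.

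For the remark on training loss, I would use part (ii): $(\mtx{W}_I^t-\mtx{W}^\star_{\text{FT}})\mathcal{P}_I = a\mtx{E}^{t-1}$ with $a=\lambda/(1+\lambda)$. The task loss depends on $\mtx{W}_I$ only through $\mtx{W}_I\mtx{X}_I = \mtx{W}_I\mathcal{P}_I\mtx{X}_I$. Since $\mtx{W}^\star_{\text{FT}}$ is the minimizer, the excess loss equals
\[
\tfrac12\|(\mtx{W}_I^t-\mtx{W}^\star_{\text{FT}})\mathcal{P}_I\mtx{X}_I\|_F^2 \le \tfrac12 a^2\|\mtx{X}_I\|_2^2\,\|\mtx{E}^{t-1}\|_F^2.
\]
This decays geometrically at the squared rate. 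The cross term vanishes by the normal equations, $(\mtx{W}^\star_{\text{FT}}\mtx{X}_I-\mtx{Y}_{\text{FT}})\mtx{X}_I^\top=0$.

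The contrast with static SD is Theorem~\ref{app:thm:unified_framework}. There the limit in the task subspace is offset from $\mtx{W}^\star_{\text{FT}}$ by $\tfrac{\lambda}{1+\lambda}(\mtx{W}_I^0\mathcal{P}_I-\mtx{W}^\star_{\text{FT}})$, which is nonzero in general. No step here is a real obstacle; the only subtlety is the nonnegativity of the factors.
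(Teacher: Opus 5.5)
Your proof is correct and follows the paper's argument: unroll the exact contraction $\mtx{E}^{t} = \bigl(1-\tfrac{\omega_t}{1+\lambda}\bigr)\mtx{E}^{t-1}$ from part (i) of Theorem~\ref{app:thm:sd_wma_convergence}, bound each factor by $1-\tfrac{\omega_{\min}}{1+\lambda}\in(0,1)$, and read the static-SD bias off Theorem~\ref{app:thm:unified_framework}. You also make explicit two points the paper leaves implicit: the nonnegativity of every factor, which is needed to multiply the per-step bounds, and an actual excess-loss bound via part (ii) and the normal equations that justifies the ``training loss decays geometrically'' sentence.
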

\begin{proof}
This follows directly from Theorem~\ref{app:thm:sd_wma_convergence} part (i). If $\omega_t \ge \omega_{\min}$, then $1-\tfrac{\omega_t}{1+\lambda} \le 1-\tfrac{\omega_{\min}}{1+\lambda}$. Since $0 < \omega_{\min} \le 1$ and $\lambda > 0$, we have $0 < \tfrac{\omega_{\min}}{1+\lambda} < 1$, so $0 < 1-\tfrac{\omega_{\min}}{1+\lambda} < 1$. Thus, the error contracts geometrically.
Static SD, as derived in Theorem~\ref{app:thm:unified_framework}, converges to a solution that is a convex combination of $\mtx{W}_{I}^{0}\mathcal{P}_I$ and $\mtx{W}^\star_{\text{FT}}$. This is a biased point unless $\mtx{W}_{I}^{0}\mathcal{P}_I = \mtx{W}^\star_{\text{FT}}$.
\end{proof}

\paragraph{Geometric Interpretation of Dynamic Self-Distillation.}
We decompose the dynamics into orthogonal and parallel components with respect to $\mathrm{range}(\mtx{X}_I)$.

\paragraph{Orthogonal Preservation.}
Applying $(\mtx{I}-\mathcal{P}_I)$ to Proposition~\ref{app:prop:SD_WMA_step} and using the idempotency of projectors, $\mathcal{P}_I(\mtx{I}-\mathcal{P}_I) = \mathbf{0}$, we get:
\[
\mtx{W}_{I}^{t}(\mtx{I}-\mathcal{P}_I) \;=\; \mtx{W}_{I}^{t-1}(\mtx{I}-\mathcal{P}_I)
\;=\; \cdots \;=\; \mtx{W}_{I}^{0}(\mtx{I}-\mathcal{P}_I),
\]
This demonstrates that SD--WMA preserves pretrained knowledge orthogonal to the finetuning subspace, just like static self-distillation.

\paragraph{Adaptive Task-Space Evolution.}
Within the task subspace, the student update is given by:
\[
\mtx{W}_{I,\parallel}^{t} \;=\; \frac{\lambda}{1+\lambda}\,\mtx{W}_{\text{Teacher},\parallel}^{t-1}
\;+\; \frac{1}{1+\lambda}\,\mtx{W}^\star_{\text{FT}}.
\]
\textbf{Early training ($t$ small):} The teacher $\mtx{W}_{\text{Teacher}}^{t-1}$ is still close to $\mtx{W}_{I}^{0}$ (as $\omega_k$ for small $k$ is often high for U-shaped kernels, or simply because few updates have occurred). This means the teacher acts as a strong anchor, mitigating catastrophic forgetting during volatile updates.

\textbf{Late training ($t$ large):} As $t \to \infty$, Theorem~\ref{app:thm:sd_wma_convergence} shows that $\mtx{W}_{\text{Teacher},\parallel}^{t-1}$ converges to $\mtx{W}^\star_{\text{FT}}$. Substituting this into the student update:
\[
\lim_{t\to\infty} \mtx{W}_{I,\parallel}^{t} \;=\; \frac{\lambda}{1+\lambda}\,\mtx{W}^\star_{\text{FT}}
\;+\; \frac{1}{1+\lambda}\,\mtx{W}^\star_{\text{FT}} \;=\; \mtx{W}^\star_{\text{FT}}.
\]
Thus, the dynamic teacher adapts, reducing anchor bias and enabling exact convergence to $\mtx{W}^\star_{\text{FT}}$ in $\mathrm{range}(\mtx{X}_I)$.

\begin{appproposition}[Dominance over Static SD]
\label{app:prop:sd-wma-dominance}
If $\|\mtx{W}_{\text{Teacher},\parallel}^{t-1}-\mtx{W}_{\text{FT}}^{\star}\|_F
\le \|\mtx{W}_{I,\parallel}^{0}-\mtx{W}_{\text{FT}}^{\star}\|_F$,
then for the same $\lambda$ the SD--WMA update attains lower squared error than static SD in the task subspace.
\end{appproposition}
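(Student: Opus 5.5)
We compare the two task-subspace errors directly, using the closed forms already established. For static SD, Theorem~\ref{app:thm:unified_framework} gives the parallel component $\mtx{W}_{SD}\mathcal{P}_I = a\,\mtx{W}_{I}^{0}\mathcal{P}_I + (1-a)\mtx{W}^\star_{\text{FT}}$ with $a=\tfrac{\lambda}{1+\lambda}$. Subtracting $\mtx{W}^\star_{\text{FT}}$ and using $\mtx{W}^\star_{\text{FT}}\mathcal{P}_I=\mtx{W}^\star_{\text{FT}}$ yields the static-SD error
\[
\mtx{W}_{SD}\mathcal{P}_I-\mtx{W}^\star_{\text{FT}} = a\,(\mtx{W}_{I,\parallel}^{0}-\mtx{W}^\star_{\text{FT}}).
\]
For SD--WMA, part (ii) of Theorem~\ref{app:thm:sd_wma_convergence} (equivalently \eqref{app:eq:student_parallel_update}) gives the student error
\[
\mtx{W}_{I,\parallel}^{t}-\mtx{W}^\star_{\text{FT}} = a\,(\mtx{W}_{\text{Teacher},\parallel}^{t-1}-\mtx{W}^\star_{\text{FT}}).
\]
So both errors are the same scalar $a$ times an anchor-minus-target discrepancy. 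The anchor is the projected pretrained weight for static SD and the projected teacher for SD--WMA.

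Taking squared Frobenius norms, the SD--WMA squared error is $a^2\|\mtx{W}_{\text{Teacher},\parallel}^{t-1}-\mtx{W}^\star_{\text{FT}}\|_F^2$ and the static-SD squared error is $a^2\|\mtx{W}_{I,\parallel}^{0}-\mtx{W}^\star_{\text{FT}}\|_F^2$. Since $a>0$ for $\lambda>0$, the hypothesis transfers monotonically through squaring and multiplication by $a^2$. This gives the SD--WMA squared error $\le$ the static-SD squared error, which settles the ``no worse'' direction.

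The main obstacle is the word ``lower'', which suggests strict inequality. Under a non-strict hypothesis, the argument only gives $\le$, and equality is attained, for example at $t=1$, where $\mtx{W}_{\text{Teacher}}^{0}=\mtx{W}_I^0$. I would handle this in two steps.

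\begin{enumerate}
\item State that strict inequality holds exactly when the hypothesis is strict, so the proof reads ``lower or equal, strictly lower iff the teacher is strictly closer''.
\item Show strictness is typical using part (i) of Theorem~\ref{app:thm:sd_wma_convergence}. Iterating gives $\mtx{E}^{t-1}=\prod_{k=1}^{t-1}\bigl(1-\tfrac{\omega_k}{1+\lambda}\bigr)\mtx{E}^{0}$ with $\mtx{E}^0=\mtx{W}_{I,\parallel}^{0}-\mtx{W}^\star_{\text{FT}}$. Each factor lies in $[0,1)$ whenever $\omega_k>0$.
\end{enumerate}

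This product representation shows three things. The hypothesis in fact always holds, and it holds strictly for every $t\ge2$ with some $\omega_k>0$, provided the static SD bias $\mtx{E}^0$ is nonzero. If $\mtx{E}^0=0$, both errors vanish and the comparison is trivial.
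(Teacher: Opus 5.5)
Your proof is correct and follows essentially the same route as the paper's: both task-subspace errors equal $a=\tfrac{\lambda}{1+\lambda}$ times an anchor-minus-target discrepancy (the projected pretrained weights for static SD, the projected teacher for SD--WMA), so the hypothesis on the anchors transfers directly to the squared errors. Your version is more careful than the paper's in three ways: the paper writes the static-SD error loosely as $\|a\,\mtx{W}_I^0\mathcal{P}_I-\mtx{W}^\star_{\text{FT}}\|_F^2$ rather than $a^2\|\mtx{W}_I^0\mathcal{P}_I-\mtx{W}^\star_{\text{FT}}\|_F^2$, it does not note that the non-strict hypothesis only yields ``no greater'' error (with equality at $t=1$), and it only argues that the hypothesis holds eventually, whereas your product form from part~(i) shows it holds at every step.
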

\begin{proof}
Let $\mtx{W}^\star_{\text{static SD}}$ be the solution for static SD (from Theorem~\ref{app:thm:unified_framework}). The squared error from $\mtx{W}^\star_{\text{FT}}$ in the task subspace for static SD is proportional to $\| \frac{\lambda}{1+\lambda} \mtx{W}_{I}^{0}\mathcal{P}_I - \mtx{W}^\star_{\text{FT}} \|_F^2$.
For dynamic SD, the instantaneous target is proportional to $\| \frac{\lambda}{1+\lambda} \mtx{W}_{\text{Teacher}}^{t-1}\mathcal{P}_I - \mtx{W}^\star_{\text{FT}} \|_F^2$.
If the teacher is closer to $\mtx{W}^\star_{\text{FT}}$ in the parallel subspace than the initial model $\mtx{W}_{I}^{0}$, i.e., $\|\mtx{W}_{\text{Teacher},\parallel}^{t-1}-\mtx{W}_{\text{FT}}^{\star}\|_F \le \|\mtx{W}_{I,\parallel}^{0}-\mtx{W}_{\text{FT}}^{\star}\|_F$, then the dynamic SD solution will be closer to $\mtx{W}^\star_{\text{FT}}$ in that subspace, thus achieving lower error. The convergence result (Theorem~\ref{app:thm:sd_wma_convergence}) guarantees that the teacher gets arbitrarily close to $\mtx{W}^\star_{\text{FT}}$, eventually satisfying this condition.
\end{proof}

\subsection{Distillation Loss Definitions in TRACER}
\label{app:sec:contrastive_distillation_loss}
TRACER employs a composite self-distillation loss $\mathcal{L}_{\text{SD-WMA}}$ from the WMA teacher, which consists of several complementary terms to transfer different aspects of knowledge. Let $\mathbf{T}$ denote the teacher model and $\mathbf{S}$ denote the student model. $\vct{h}_{I_i}^{\mathbf{T}}$ and $\vct{h}_{T_i}^{\mathbf{T}}$ are image and text embeddings from the teacher for the $i$-th example, and similarly for the student. $\tau$ denotes the temperature parameter.

\paragraph{Feature Distillation (FD).}
This loss directly minimizes the Mean Squared Error between the student's and teacher's embeddings for each corresponding image-text pair in a mini-batch of size $N$. It helps align the feature spaces.
\begin{equation}
    \mathcal{L}_{\text{FD}} = \frac{1}{N} \sum_{i=1}^{N} \left( \matnormtwo{\vct{h}_{I_i}^{\mathbf{T}} - \vct{h}_{I_i}^{\mathbf{S}}}^2 + \matnormtwo{\vct{h}_{T_i}^{\mathbf{T}} - \vct{h}_{T_i}^{\mathbf{S}}}^2 \right)
\end{equation}

\paragraph{Contrastive Relational Distillation (CRD).}
CRD aligns the student's contrastive similarity distribution with the teacher's. We first compute the image-to-text ($p$) and text-to-image ($q$) softmax distributions for both student and teacher across the mini-batch:
\begin{align}
    p_{i}^{\mathbf{T}}[j] &= \frac{\exp(\vct{h}_{I_i}^{\mathbf{T}\top} \vct{h}_{T_j}^{\mathbf{T}}/\tau)}{\sum_{b=1}^{N}\exp(\vct{h}_{I_i}^{\mathbf{T}\top} \vct{h}_{T_b}^{\mathbf{T}}/\tau)}, & p_{i}^{\mathbf{S}}[j] &= \frac{\exp(\vct{h}_{I_i}^{\mathbf{S}\top} \vct{h}_{T_j}^{\mathbf{S}}/\tau)}{\sum_{b=1}^{N}\exp(\vct{h}_{I_i}^{\mathbf{S}\top} \vct{h}_{T_b}^{\mathbf{S}}/\tau)} \\
    q_{i}^{\mathbf{T}}[j] &= \frac{\exp(\vct{h}_{T_i}^{\mathbf{T}\top} \vct{h}_{I_j}^{\mathbf{T}}/\tau)}{\sum_{b=1}^{N}\exp(\vct{h}_{T_i}^{\mathbf{T}\top} \vct{h}_{I_b}^{\mathbf{T}}/\tau)}, & q_{i}^{\mathbf{S}}[j] &= \frac{\exp(\vct{h}_{T_i}^{\mathbf{S}\top} \vct{h}_{I_j}^{\mathbf{S}}/\tau)}{\sum_{b=1}^{N}\exp(\vct{h}_{T_i}^{\mathbf{S}\top} \vct{h}_{I_b}^{\mathbf{S}}/\tau)}
\end{align}
The distillation loss is the sum of the KL-divergences between these distributions, averaged over the batch.
\begin{equation}
    \mathcal{L}_{\text{CRD}} = \frac{1}{N}\sum_{i=1}^{N} \left( D_{KL}(p_{i}^{\mathbf{T}} \| p_{i}^{\mathbf{S}}) + D_{KL}(q_{i}^{\mathbf{T}} \| q_{i}^{\mathbf{S}}) \right)
\end{equation}

\paragraph{Interactive Contrastive Learning (ICL).}
ICL forces the student to learn within the teacher's embedding space by performing contrastive learning between the student's anchor embeddings and the teacher's key embeddings. The loss is a symmetric InfoNCE objective computed on these mixed-model pairs.
\begin{equation}
    \mathcal{L}_{\text{ICL}} = -\frac{1}{2N} \sum_{i=1}^{N} \left( \log \frac{\exp(\vct{h}_{I_i}^{\mathbf{S}\top} \vct{h}_{T_i}^{\mathbf{T}} / \tau)}{\sum_{j=1}^{N} \exp(\vct{h}_{I_i}^{\mathbf{S}\top} \vct{h}_{T_j}^{\mathbf{T}} / \tau)} + \log \frac{\exp(\vct{h}_{T_i}^{\mathbf{S}\top} \vct{h}_{I_i}^{\mathbf{T}} / \tau)}{\sum_{j=1}^{N} \exp(\vct{h}_{T_i}^{\mathbf{S}\top} \vct{h}_{I_j}^{\mathbf{T}} / \tau)} \right)
\end{equation}

\paragraph{Cross Knowledge Distillation (Cross-KD).}
This method acts as a hybrid of CRD and ICL. It aligns the student-to-teacher cross-modal similarity distribution with the teacher's self-modal distribution using KL-divergence. We define the student-to-teacher cross-modal distributions ($p^{\mathbf{S}\to\mathbf{T}}$, $q^{\mathbf{S}\to\mathbf{T}}$) as:
\begin{align}
    p_{i}^{\mathbf{S}\to\mathbf{T}}[j] &= \frac{\exp(\vct{h}_{I_i}^{\mathbf{S}\top} \vct{h}_{T_j}^{\mathbf{T}}/\tau)}{\sum_{b=1}^{N}\exp(\vct{h}_{I_i}^{\mathbf{S}\top} \vct{h}_{T_b}^{\mathbf{T}}/\tau)} \\
    q_{i}^{\mathbf{S}\to\mathbf{T}}[j] &= \frac{\exp(\vct{h}_{T_i}^{\mathbf{S}\top} \vct{h}_{I_j}^{\mathbf{T}}/\tau)}{\sum_{b=1}^{N}\exp(\vct{h}_{T_i}^{\mathbf{S}\top} \vct{h}_{I_b}^{\mathbf{T}}/\tau)}
\end{align}
The loss then minimizes the divergence from these distributions to the teacher's own relational distributions, $p_{i}^{\mathbf{T}}$ and $q_{i}^{\mathbf{T}}$.
\begin{equation}
    \mathcal{L}_{\text{CrossKD}} = \frac{1}{2N}\sum_{i=1}^{N} \left( D_{KL}(p_{i}^{\mathbf{T}} \| p_{i}^{\mathbf{S}\to\mathbf{T}}) + D_{KL}(q_{i}^{\mathbf{T}} \| q_{i}^{\mathbf{S}\to\mathbf{T}}) \right)
\end{equation}

\paragraph{Geometric bridge to composite distillation.}
Our analysis decomposes learning into an orthogonal preservation term and an in-subspace mixing term (Equation~\ref{eq:lin_ls_objective_main}). The composite distillation terms are chosen to preserve \emph{structure} consistent with this geometry: (i) \textbf{FD} anchors pointwise embeddings, biasing updates toward the teacher component within $\mathrm{range}(\mtx{X}_I)$ while damping drift in orthogonal directions; (ii) \textbf{CRD} aligns the teacher's batch-wise similarity \emph{distributions}, preserving inter-example geometry (a probabilistic surrogate for preserving $\mtx{S}{=}\mtx{H}_I^\top\mtx{H}_T$); (iii) \textbf{ICL} performs contrastive learning in the teacher's semantic space, encouraging the student to operate on the teacher's subspace and thus to mix along task-relevant directions; and (iv) \textbf{CrossKD} aligns cross-modal logits to transmit cross-modal relational structure that vanilla InfoNCE may underweight. Together with the \textbf{WMA} teacher, these terms operationalize the geometric principle at feature-, relation-, and cross-modal levels.

\subsection{Connection to Robustness via Inter-Class Feature Sharing}
\label{app:sec:robustness_connection}

The self-distillation approach, particularly with a dynamic WMA teacher, can be understood through the lens of recent theoretical work on multimodal contrastive learning's robustness mechanisms. \citet{xue2024understanding} identify \emph{inter-class feature sharing} as a key mechanism behind MMCL's strong robustness to distribution shift, where models learn to leverage information about features appearing across different classes to dissociate spurious correlations.

Building on the insight that self-distillation acts as instance-specific label smoothing \citep{zhang2020self}, we argue that the self-distillation method provides a similar robustness benefit by acting as an \textbf{informed label smoothing mechanism} that preserves inter-class similarities learned during pretraining. To see this connection, recall the self-distillation solution from Theorem~\ref{app:thm:unified_framework}:
\begin{equation}
\mtx{W}_{SD} = \mtx{W}_{I}^{0} \left( \mtx{I} - \frac{1}{1+\lambda} \mathcal{P}_I \right) + \frac{1}{1+\lambda} \mtx{Y}_{\text{FT}} \mtx{X}_I^\top (\mtx{X}_I \mtx{X}_I^\top)^{+}
\end{equation}

This solution exhibits three key properties that enhance robustness:

\paragraph{Preservation of Cross-Class Knowledge.}
The term $\mtx{W}_{I}^{0} \left( \mtx{I} - \frac{1}{1+\lambda} \mathcal{P}_I \right)$ maintains the pretrained model's understanding of feature relationships across classes. Unlike direct finetuning which completely overwrites representations in the finetuning subspace, self-distillation retains a weighted contribution from the original cross-class feature covariances. This is analogous to how \citet{xue2024understanding} show that MMCL leverages features appearing in multiple contexts to learn their independence from class labels.

\paragraph{Informed Smoothing via Pretrained Similarities.}
By regularizing towards $\mtx{W}_{I}^{0} \mtx{X}_I$ rather than arbitrary targets, self-distillation performs label smoothing that is informed by the pretrained model's learned inter-class similarities. This extends the instance-specific label smoothing interpretation of \citet{zhang2020self} to the finetuning setting, where the smoothing is guided by pretrained knowledge. This regularization preserves the cross-covariance structure that \citet{xue2024understanding} identify as crucial for robustness, specifically the covariance between features that appear independently across different classes.

\paragraph{Robustness Through Feature Independence.}
Within the finetuning subspace, self-distillation computes a convex combination:
\begin{equation}
\frac{\lambda}{1+\lambda} \left( \mtx{W}_{I}^{0} \mathcal{P}_I \right) + \frac{1}{1+\lambda} \left( \mtx{Y}_{\text{FT}} \mtx{X}_I^\top (\mtx{X}_I \mtx{X}_I^\top)^{+} \right)
\end{equation}

This combination maintains the pretrained understanding of feature independence while adapting to the new task. As \citet{xue2024understanding} demonstrate in their Data Model 2, when features can occur independently across classes (e.g., ``trees without green leaves'' appearing in non-tree classes), models that preserve these cross-class relationships achieve stronger robustness. The self-distillation mechanism explicitly preserves these relationships through the weighted contribution of $\mtx{W}_{I}^{0} \mathcal{P}_I$.

The hyperparameter $\lambda$ controls the strength of this inter-class knowledge preservation: larger values of $\lambda$ maintain more of the pretrained model's understanding of how features vary independently across different contexts, potentially enhancing robustness to distribution shift. This suggests that self-distillation's effectiveness stems not merely from preventing catastrophic forgetting, but from actively preserving the rich inter-class feature relationships that contribute to robustness, a mechanism that parallels the theoretical insights of \citet{xue2024understanding} on why MMCL achieves strong out-of-distribution generalization.

\newpage
\endgroup

\section{TRACER Algorithm}
\begin{algorithm}[H]
    \small
    \caption{TRACER (Trajectory-Robust Anchoring for Contrastive Encoder Regularization)}
    \label{alg:tracer}
    \begin{algorithmic}[1]
        \Require Pretrained CLIP model $\theta_{\text{CLIP}}^{0} = \{\mathcal{E}_{\text{Image}}^{0}, \mathcal{E}_{\text{Text}}^{0}\}$
        \Require Finetuning dataset $\mathcal{D}_{\text{FT}} = \{(\vct{x}_I, \vct{x}_T)\}_{i=1}^N$
        \Require Learning rate $\eta$, Weight decay $\delta$, Batch size $B$, Number of epochs $E$
        \Require Distillation coefficient $\lambda_{\text{SD}}$
        \Require WMA kernel $\kappa(\tau_k)$ (e.g., Beta($\beta_1, \beta_2$)) and total steps $T_{\text{total}}$
        \Require Temperature $\tau_{\text{NCE}}$ for InfoNCE losses

        \State \textbf{Initialize Student Model:} $\theta_S \gets \theta_{\text{CLIP}}^{0}$ (image encoder $\mathcal{E}_{\text{Image},S}$, text encoder $\mathcal{E}_{\text{Text},S}$)
        \State \textbf{Initialize Teacher Model:} $\theta_T \gets \text{copy}(\theta_S)$
        \State \textbf{Initialize Optimizer:} $\text{Opt} \gets \text{AdamW}(\theta_S.\text{parameters()}, \eta, \delta)$
        \State \textbf{Initialize WMA state:} $\text{cumulative\_alpha} \gets 0$
        \State $\text{global\_step} \gets 0$

        \For{$\text{epoch} = 1 \text{ to } E$}
            \For{$\text{batch} = \{(\vct{x}_I, \vct{x}_T)\}_{i=1}^B \text{ in } \mathcal{D}_{\text{FT}}$}
                \State $\text{global\_step} \gets \text{global\_step} + 1$

                \Comment{--- Student Forward Pass ---}
                \State $\mathbf{h}_{I,S} \gets \mathcal{E}_{\text{Image},S}(\vct{x}_I)$
                \State $\mathbf{h}_{T,S} \gets \mathcal{E}_{\text{Text},S}(\vct{x}_T)$
                \State Normalize student embeddings: $\mathbf{h}_{I,S} \gets \text{normalize}(\mathbf{h}_{I,S})$, $\mathbf{h}_{T,S} \gets \text{normalize}(\mathbf{h}_{T,S})$

                \Comment{--- Compute Multi-Modal Contrastive Loss ($\mathcal{L}_{\text{MMCL}}$) ---}
                \State $\text{logits}_{I \leftrightarrow T} \gets \mathbf{h}_{I,S} \cdot \mathbf{h}_{T,S}^\top / \tau_{\text{NCE}}$
                \State $\mathcal{L}_{\text{MMCL}} \gets \text{InfoNCE}(\text{logits}_{I \leftrightarrow T}) + \text{InfoNCE}(\text{logits}_{I \leftrightarrow T}^\top)$ \Comment{Symmetric InfoNCE}
                %\State $\mathcal{L}_{\text{MMCL}} \gets \mathcal{L}_{\text{MMCL}} + \rho \cdot \matnorm{\mathbf{h}_{I,S}^\top \mathbf{h}_{T,S}}^2$ \Comment{Optional Frobenius regularization}

                \Comment{--- Teacher Forward Pass (with no gradient updates) ---}
                \State $\textbf{with }\text{torch.no\_grad}():$
                    \State $\mathbf{h}_{I,T} \gets \mathcal{E}_{\text{Image},T}(\vct{x}_I)$
                    \State $\mathbf{h}_{T,T} \gets \mathcal{E}_{\text{Text},T}(\vct{x}_T)$
                    \State Normalize teacher embeddings: $\mathbf{h}_{I,T} \gets \text{normalize}(\mathbf{h}_{I,T})$, $\mathbf{h}_{T,T} \gets \text{normalize}(\mathbf{h}_{T,T})$

                \Comment{--- Compute Dynamic Self-Distillation Loss ($\mathcal{L}_{\text{SD-WMA}}$) ---}
                \State $\mathcal{L}_{\text{FD}} \gets \frac{1}{B} \sum_{i=1}^{B} (\matnormtwo{\mathbf{h}_{I,T}[i] - \mathbf{h}_{I,S}[i]}^2 + \matnormtwo{\mathbf{h}_{T,T}[i] - \mathbf{h}_{T,S}[i]}^2)$
                \State $\mathcal{L}_{\text{CRD}} \gets \text{KL}(\text{softmax}(\mathbf{h}_{I,T} \mathbf{h}_{T,T}^\top / \tau_{\text{NCE}}) || \text{softmax}(\mathbf{h}_{I,S} \mathbf{h}_{T,S}^\top / \tau_{\text{NCE}}))$ \Comment{+ text-to-image}
                \State $\mathcal{L}_{\text{ICL}} \gets \text{InfoNCE}(\mathbf{h}_{I,S}, \mathbf{h}_{T,T}) + \text{InfoNCE}(\mathbf{h}_{T,S}, \mathbf{h}_{I,T})$
                \State $\mathcal{L}_{\text{CrossKD}} \gets \text{KL}(\text{softmax}(\mathbf{h}_{I,T} \mathbf{h}_{T,T}^\top / \tau_{\text{NCE}}) || \text{softmax}(\mathbf{h}_{I,S} \mathbf{h}_{T,T}^\top / \tau_{\text{NCE}}))$ \Comment{+ text-to-image}
                \State $\mathcal{L}_{\text{SD-WMA}} \gets \mathcal{L}_{\text{FD}} + \mathcal{L}_{\text{CRD}} + \mathcal{L}_{\text{ICL}} + \mathcal{L}_{\text{CrossKD}}$ 

                \Comment{--- Total Loss and Optimization ---}
                \State $\mathcal{L}_{\text{Total}} \gets \mathcal{L}_{\text{MMCL}} + \lambda_{\text{SD}} \cdot \mathcal{L}_{\text{SD-WMA}}$
                \State $\text{Opt.zero\_grad}()$
                \State $\mathcal{L}_{\text{Total}}.\text{backward}()$
                \State $\text{Opt.step}()$

                \Comment{--- Update WMA Teacher ---}
                \State $\tau_{\text{current}} \gets (\text{global\_step} + c_1) / (T_{\text{total}} + c_2)$ \Comment{Normalized time}
                \State $\alpha_{\text{current}} \gets \kappa(\tau_{\text{current}})$
                \State $\text{cumulative\_alpha} \gets \text{cumulative\_alpha} + \alpha_{\text{current}}$
                \State $\omega_{\text{current}} \gets \alpha_{\text{current}} / \text{cumulative\_alpha}$
                \For{parameter $p_S$ in $\theta_S$ and $p_T$ in $\theta_T$}
                    \State $p_T \gets (1 - \omega_{\text{current}}) \cdot p_T + \omega_{\text{current}} \cdot p_S$
                \EndFor
            \EndFor
        \EndFor
        \State \Return $\theta_S$
    \end{algorithmic}
\end{algorithm}

\newpage
\section{Reproducibility Details}
\label{app:sec:reproducibility}

To ensure full reproducibility, we detail our experimental setup, key hyperparameters, and implementation. The full codebase, configuration files, and reproduction scripts are publicly available at \url{https://github.com/HesamAsad/TRACER}.

\subsection{Computational Environment}
\begin{itemize}
    \item \textbf{Operating System:} Linux kernel 5.14.0-427.42.1.el9\_4.x86\_64.
    \item \textbf{GPU Hardware:} NVIDIA H100 80GB HBM3.
    \item \textbf{NVIDIA Driver Version:} 550.144.03.
    \item \textbf{CUDA Version:} 12.4.
    \item \textbf{Python Version:} 3.10.4.
    \item \textbf{PyTorch Version:} 2.0.1+ (with CUDA support).
\end{itemize}

\subsection{Implementation and Training Details}
Our implementation extends the OpenAI CLIP framework.
\begin{itemize}
    \item \textbf{Model Architectures:} We use pretrained CLIP models (ViT-B/16, ResNet50, ViT-L/14) from OpenAI's official \texttt{clip} library.
    \item \textbf{Total Loss:} $\mathcal{L}_{\text{TRACER}} = \mathcal{L}_{\text{MMCL}} + \lambda_{\text{SD}}\,\mathcal{L}_{\text{SD-WMA}}$.
        \begin{itemize}
            \item \textbf{$\mathcal{L}_{\text{MMCL}}$:} Symmetric InfoNCE loss, directly leveraging OpenAI CLIP's core loss implementation. Optional cross-Frobenius regularizer coefficient was set to $0.05$.
            \item \textbf{$\mathcal{L}_{\text{SD-WMA}}$:} A composite self-distillation loss. For TRACER, this comprises Feature Distillation (FD), Contrastive Relational Distillation (CRD), Interactive Contrastive Learning (ICL), and Cross Knowledge Distillation (Cross-KD).
        \end{itemize}
    \item \textbf{WMA Teacher:} A custom Weighted Moving Average (WMA) teacher implementation, whose weighting kernel is a Beta distribution with $\beta_1=\beta_2=0.5$.
\end{itemize}

\subsection{Key Hyperparameters}
The following hyperparameters were used for TRACER finetuning on ImageNet-1K:
\begin{itemize}
    \item \textbf{Epochs:} 10.
    \item \textbf{Optimizer:} AdamW.
    \item \textbf{Learning Rate:} $1 \times 10^{-5}$.
    \item \textbf{Weight Decay:} 0.1.
    \item \textbf{Batch Size:} 512 (ViT-B/16, RN50), 224 (ViT-L/14).
    \item \textbf{Warmup Length:} 500 steps (cosine LR schedule).
    \item \textbf{Mixed Precision:} Enabled using \texttt{torch.amp.autocast} with \texttt{torch.bfloat16}.
    \item \textbf{Distillation Coefficient $\lambda_{\text{SD}}$:} 0.9.
    \item \textbf{WMA Beta Kernel Parameter:} 0.5 (for Beta(0.5,0.5) kernel, i.e., arcsine distribution).
    \item \textbf{Teacher Update Frequency:} 0 or 1 (update every step).
\end{itemize}

\subsection{Data Processing}
Standard OpenAI CLIP image preprocessing was applied. Input images are sourced from ImageNet-1K, and finetuning captions from OpenAI class templates.

\subsection{Code Availability}
\label{app:sec:code_availability}
The full codebase is publicly available at \url{https://github.com/HesamAsad/TRACER} to facilitate direct reproduction. The repository includes the TRACER training pipeline, evaluation scripts for all reported ImageNet and OOD benchmarks, configuration files for the three CLIP backbones (RN50, ViT-B/16, ViT-L/14), and the WMA teacher implementation.

\newpage
\section{Extended Discussion}
\label{app:sec:extended_discussion}

This appendix expands on three discussion points that we touch on briefly in the main text: how TRACER's gain scales across backbones, how TRACER relates to parameter-efficient and prompt-based adaptation, and how the theory could be extended beyond the linearized setting.

\subsection{Backbone scaling: ViT-B/16 vs.\ ViT-L/14}
\label{app:sec:scaling_discussion}
A natural question is why TRACER's OOD gain over CaRot is larger on ViT-B/16 ($+1.53$ Avg.\ shifts in Table~\ref{tab:natural_shift_acc}) than on ViT-L/14 ($+1.19$ Avg.\ shifts in Table~\ref{tab:full_vitL14}). Three factors plausibly contribute. First, ViT-L/14 is a substantially stronger zero-shot starting point ($70.93\%$ vs.\ $58.39\%$ OOD), so there is less absolute headroom for any regularizer to recover; both methods sit closer to a ceiling, compressing differences. Second, even with a smaller absolute gap, TRACER still attains the \emph{best OOD accuracy} among all compared methods on ViT-L/14 ($75.32\%$, vs.\ $74.13\%$ for CaRot), with consistent gains on the hardest shifts (IN-R $+1.74$, IN-A $+2.19$, ObjectNet $+1.71$), mirroring the ViT-B/16 pattern. Third, TRACER scales \emph{favorably in compute}: its distillation cost is $\mathcal{O}(B^2)$ in the batch and matches standard EMA cost for the teacher update (Table~\ref{tab:runtime_comparison}), so its overhead does \emph{not} grow with the cubic-in-$d$ spectral term that dominates CaRot at large $d$. We therefore expect the favorable cost--quality trade-off to persist or improve as backbones grow. We frame this as an empirical observation: we do \emph{not} claim a demonstrated scaling law for TRACER across the full vision--language model size spectrum, and we view systematic studies on larger VLMs (e.g., SigLIP variants and CLIP-style backbones at $\geq$ViT-H/14) as natural follow-up work.

\subsection{Relation to PEFT and prompt-based adaptation}
\label{app:sec:peft_discussion}
TRACER is a regularization mechanism that operates on the standard full-finetuning gradient flow: it modifies \emph{which solution} the optimizer converges to (anchoring it to a trajectory-weighted teacher), but it does \emph{not} restrict \emph{where} parameters can move. Parameter-efficient finetuning (PEFT) methods such as adapters~\citep{houlsby2019parameter} and LoRA~\citep{hu2022lora}, as well as prompt-based adaptation~\citep{zhou2022coop,jia2022vpt}, take the orthogonal route: they restrict the hypothesis class so that only a small set of injected parameters or input tokens can change, leaving the pretrained weights untouched by construction. The two strategies address different failure modes of finetuning. PEFT bounds the \emph{capacity} for drift away from the pretrained model; TRACER bounds the \emph{direction} and \emph{magnitude} of drift within whatever capacity the optimizer is given. Because TRACER's WMA teacher and composite distillation losses act on the student's parameters (or, equivalently, its embedding statistics) regardless of how many of those parameters are trainable, TRACER can in principle wrap a LoRA-finetuned or prompt-tuned model directly: the WMA teacher would then average a low-dimensional trajectory in adapter/prompt space rather than full-encoder space. We therefore see TRACER as \emph{complementary} to PEFT and prompt-based adaptation rather than competing with them; a systematic empirical study of TRACER on top of LoRA, adapters, and prompt tuning is a natural follow-up.

\subsection{Future work: NTK and random-feature extensions of the theory}
\label{app:sec:future_ntk}
Our theoretical analysis (\S\ref{sec:theory}, \S\ref{app:sec:theory_extended}) is developed in the linearized image/text-encoder setting that is standard in the contrastive-learning theory literature~\citep{ji2021power,tian2022understanding,nakada2023understanding,xue2024understanding}. In that setting, the closed-form solutions, the contrastive target matrix, and the bias-free convergence of the SD--WMA teacher all admit clean derivations that we believe capture the essential mechanism underlying TRACER's empirical robustness on nonlinear CLIP backbones (Figures~\ref{fig:toy_results}--\ref{fig:cka_svcca}). A natural next step is to lift these results to the \emph{neural tangent kernel} (NTK) regime~\citep{jacot2018ntk} and to \emph{random-feature} (RF) models, where the nonlinear encoders are approximated by linear maps in a feature space induced by their gradients or by random nonlinearities. In the NTK regime, the same matrix least-squares reformulation should apply with $\mtx{X}_I$ replaced by NTK features, allowing both the orthogonal/parallel decomposition and the WMA bias-free convergence theorem to be re-derived for finite-width networks under lazy training. The RF view, in turn, would let us study how the spectrum of the random feature map interacts with the WMA kernel $\kappa(\tau)$ and the regularization strength $\lambda$. Establishing such extensions would tighten the connection between our linearized analysis and large nonlinear vision--language models, and we leave them as a concrete open direction.

\newpage
\section{AI Usage}
Large Language Models were used to improve the manuscript's grammar and readability, and to assist with code formatting and refactoring during implementation. All research design, theoretical analysis, experimental protocols, and interpretation of results were conducted entirely by the authors.

\end{document}